\newcommand{\GD}{GD}
\newcommand{\DQGD}{DQ-GD}
\newcommand{\NQGD}{NQ-GD}
\newtheorem{theorem}{Theorem}[section]
\newtheorem{lemma}{Lemma}[section]
\theoremstyle{definition}
\newtheorem{definition}{Definition}[section]
\theoremstyle{remark}
\newtheorem{remark}{Remark}[section]
\newcommand{\mbb}{\mathbb}
\newcommand{\msf}{\mathsf}
\newcommand{\mbf}{\mathbf}
\newcommand{\mrm}{\mathrm}
\newcommand{\mcal}{\mathcal}
\renewcommand{\vec}[1]{\bm{#1}}
\newcommand{\set}[1]{\mathcal{#1}}
\newcommand{\fcn}[1]{\mathsf{#1}}
\newcommand{\pr}[1]{\left( {#1}\right)}
\newcommand{\bk}[1]{\left[ {#1}\right]}
\newcommand{\bp}[1]{\left\{ {#1}\right\}}
\newcommand{\norm}[1]{\left\lVert {#1}\right\rVert}
\newcommand{\abs}[1]{\left\lvert {#1}\right\rvert}
\newcommand{\triDef}{\triangleq}
\newcommand{\R}{\mathbb{R}}
\newcommand{\N}{\mathbb{N}}
\newcommand{\T}{\top}
\newcommand{\grad}{\nabla}
\newcommand{\lp}{\left(}
\newcommand{\rp}{\right)}
\newcommand{\lb}{\left[}
\newcommand{\rb}{\right]}
\newcommand{\lbp}{\left\{}
\newcommand{\rbp}{\right\}}
\newcommand{\lba}{\left\lvert}
\newcommand{\rba}{\right\rvert}
\newcommand{\lnorm}{\left\lVert}
\newcommand{\rnorm}{\right\rVert}
\newcommand{\la}{\leftarrow}
\newcommand{\eqDef}{\triangleq}
\newcommand{\Mand}{\quad\text{ and }\quad}
\newcommand{\by}{\times}
\newcommand{\spn}{\mathrm{span}}
\newcommand{\minus}{\scalebox{0.75}[1.0]{$-$}}
\DeclarePairedDelimiterX{\infdivx}[2]{(}{)}{%
  #1\:\delimsize\|\:#2%
}
\DeclareMathOperator*{\argmin}{arg\,min}
\newcommand{\Rmnum}[1]{\expandafter\@slowromancap\romannumeral #1@}
\newcommand*{\transpose}{%
  {\mathpalette\@transpose{}}%
}
\newcommand*{\@transpose}[2]{%
  % #1: math style
  % #2: unused
  \raisebox{\depth}{$\m@th#1\intercal$}%
}
\newcommand{\thmref}[1]{Theorem~\ref{#1}}
\newcommand{\secref}[1]{Section~\ref{#1}}
\newcommand{\lemref}[1]{Lemma~\ref{#1}}
\newcommand{\figref}[1]{Fig.~\ref{#1}}
\newcommand{\hl}[1]{#1}
\begin{document}
%
% paper title
% Titles are generally capitalized except for words such as a, an, and, as,
% at, but, by, for, in, nor, of, on, or, the, to and up, which are usually
% not capitalized unless they are the first or last word of the title.
% Linebreaks \\ can be used within to get better formatting as desired.
% Do not put math or special symbols in the title.
\title{Differentially Quantized Gradient Methods}
%
%
% author names and IEEE memberships
% note positions of commas and nonbreaking spaces ( ~ ) LaTeX will not break
% a structure at a ~ so this keeps an author's name from being broken across
% two lines.
% use \thanks{} to gain access to the first footnote area
% a separate \thanks must be used for each paragraph as LaTeX2e's \thanks
% was not built to handle multiple paragraphs
%

\author{Chung-Yi~Lin,
        Victoria~Kostina,
        and~Babak~Hassibi% <-this % stops a space
 \thanks{Chung-Yi Lin (\href{mailto:hsnu1220@gmail.com}{hsnu1220@gmail.com}) is with Kronos
Research, Taiwan. V. Kostina (\href{mailto:vkostina@caltech.edu}{vkostina@caltech.edu}) and B. Hassibi (\href{mailto:hassibi@caltech.edu}{hassibi@caltech.edu}) are with California Institute of Technology. This work was supported in part by the National Science Foundation (NSF) under grants CCF-1751356, CCF-1956386, CNS-0932428, CCF-1018927, CCF-1423663 and CCF-1409204, by a grant from Qualcomm Inc., by NASA's Jet Propulsion Laboratory through the President and Director's 
 Fund, and by King Abdullah University of Science and Technology. This paper was presented in part at ISIT 2021 \cite{lin2021dq}. 
}% <-this % stops a space
}

% make the title area
\maketitle

% As a general rule, do not put math, special symbols or citations
% in the abstract or keywords.
\begin{abstract}
Consider the following distributed optimization scenario. A worker has access to training data that it uses to compute the gradients while a server decides when to stop iterative computation based on its target accuracy or delay constraints. The server receives all its information about the problem instance from the worker via a rate-limited noiseless communication channel. 

We introduce the principle we call \emph{differential quantization} (DQ) that prescribes compensating the past quantization errors to direct the descent trajectory of a quantized algorithm towards that of its unquantized counterpart. Assuming that the objective function is smooth and strongly convex, we prove that \emph{differentially quantized gradient descent} (DQ-GD) attains a linear contraction factor of $\max\{\sigma_{\mathrm{GD}}, \rho_n 2^{-R}\}$, where $\sigma_{\mathrm{GD}}$ is the contraction factor of unquantized gradient descent (GD), $\rho_n \geq 1$ is the covering efficiency of the quantizer, and $R$ is the bitrate per problem dimension $n$. Thus at any $R\geq\log_2 \rho_n /\sigma_{\mathrm{GD}}$ bits, the contraction factor of DQ-GD is the same as that of unquantized {\GD}, i.e., there is no loss due to quantization. We show a converse demonstrating that no algorithm within a certain class can converge faster than $\max\{\sigma_{\mathrm{GD}}, 2^{-R}\}$. Since quantizers exist with $\rho_n \to 1$ as $n \to \infty$ (Rogers, 1963), this means that DQ-GD is asymptotically optimal. In contrast, naively quantized GD where the worker directly quantizes the gradient barely attains $\sigma_{\mathrm{GD}} + \rho_n2^{-R}$. 

The principle of differential quantization continues to apply to gradient methods with momentum such as Nesterov's accelerated gradient descent, and Polyak's heavy ball method.  For these algorithms as well, if the rate is above a certain threshold, there is no loss in contraction factor obtained by the differentially quantized algorithm compared to its unquantized counterpart, and furthermore, the differentially quantized heavy ball method attains the optimal contraction achievable among all (even unquantized) gradient methods. 

Experimental results on least-squares problems validate our theoretical analysis. 
\end{abstract}

% Note that keywords are not normally used for peerreview papers.
 \begin{IEEEkeywords}
 gradient descent, quantized gradient descent, accelerated gradient descent, heavy ball method, error compensation, error feedback, sigma-delta modulation, federated learning, linear convergence.
\end{IEEEkeywords}

% For peer review papers, you can put extra information on the cover
% page as needed:
% \ifCLASSOPTIONpeerreview
% \begin{center} \bfseries EDICS Category: 3-BBND \end{center}
% \fi
%
% For peerreview papers, this IEEEtran command inserts a page break and
% creates the second title. It will be ignored for other modes.
\IEEEpeerreviewmaketitle

\section{Introduction}
\label{sec:intro}

\subsection{Motivation and related work}

Distributed optimization plays a central role in large-scale machine learning where gradient descent (GD) and its stochastic variant SGD are employed to minimize an objective function \cite{Zinkevich-10,Recht-11,Bekkerman-11,Dean-12,Chilimbi-14,Sa-15,Konecny-16,Scaman-17}. Despite the scalability of parallel gradient training, the frequent exchange of high-dimensional gradients \hl{between distributed agents in the federated learning setting} has become a communication bottleneck that slows down the overall learning process \cite{Recht-11,Chilimbi-14,Seide-14,Li-14-communication,Strom-15,Zhang-15} .

\hl{
A natural approach to alleviating that communication bottleneck is to quantize the gradients with a limited number of bits per problem dimension. Its power was first demonstrated  by Seide et al. \cite{Seide-14}, where the gradient computed by stochastic gradient descent (SGD) \cite{RobbinsSutton-51} is quantized down to just one bit per dimension and the quantization error is carried forward across mini-batches, resulting in almost no loss in empirical convergence performance compared to the unquantized algorithm. Wen at al.~\cite{Wen-17} propose a ternary quantizer for SGD and prove that it converges almost surely under the assumption of bounded gradients. Bernstein et al. \cite{Bernstein-18} propose a sign-based quantizer for mini-batch SGD, give its convergence analysis on nonconvex problems, and extend it to accelerated gradient descent and to a multi-worker setting. Alistarh et al. \cite{Alistarh-17} propose a quantized SGD algorithm that compresses the gradient using a stochastic scalar quantizer with an adjustable number of quantization levels, and provide convergence guarantees that depend on this variable compression rate on smooth convex and non-convex functions. The quantizer in \cite{Seide-14,Wen-17,Bernstein-18,Alistarh-17} is a uniform scalar quantizer, which simply rounds the binary representation of each coordinate to a fixed number of bits, while  \cite{RamezaniKebrya-19} considers a non-uniform scalar quantizer, and  \cite{MayekarTyagi-20,Gandikota-19} construct vector quantizers from the convex hull of specifically structured point sets. 

A different approach to addressing the communication bottleneck in parallel SGD training is to sparsify the gradient vectors \cite{Strom-15,AjiHeafield-17,Stich-18,Wangni-18,Wang-18,Lin-18}. For example, the top-$k$ sparsifier (or \emph{compressor}) preserves the $k$ coordinates of the largest magnitude and sends them with full precision \cite{Strom-15,Dryden-16,AjiHeafield-17,Alistarh-18,Stich-18,yu2018gradiveq}. A user-specified parameter (e.g. $k$ for the top-$k$ compressor) serves as a proxy for the communication rate in this line of work. 

For an empirical risk minimization problem where the global objective function is the average of local objective functions, recent works \cite{AmiriGunduz-19,Zhu-19,Yang-20} perform analog gradient compression and communication by taking the physical superposition nature of the underlying multiple-access channel into the account.

The assumption of unbiased compression error \cite{Nemirovski-09,Bubeck-15,Bottou-16,Mishchenko-19,Horvath-19-stochastic,Horvath-19-natural,philippenko2020bidirectional,gorbunov2020unified,gorbunov2020linearly,khaled2020unified,gorbunov2021marina,islamov2021distributed} is commonly imposed to enable convergence analyses of compressed SGD. Employing biased compressors in compressed SGD can lead to divergence: for example, both the 1-bit SGD without mini-batching \cite{Seide-14,Karimireddy-19} and the top-$1$ compressed SGD \cite{Aleksandr-20} diverge on some problem instances. A set of sufficient conditions on the compression operators to ensure convergence of SGD is put forth in \cite{magnusson2020maintaining}.

The same paper - \cite{Seide-14} - that initiates the study of quantized SGD is also the first to introduce the idea of adding back previous quantization errors   before quantizing the gradient at the next step of iterative optimization, which fixes the divergence issue mentioned above. The idea, referred to as \emph{error compensation}, or \emph{error feedback}, in the federated learning literature, has been long known as $\Sigma$-$\Delta$ modulation \cite{Gray-89} in the information theory literature. Stich et al.~\cite{Stich-18} apply the mechanism of error feedback in \cite{Seide-14} to a more general setting of SGD and show that it converges with the same order as unquantized SGD on strongly convex and smooth functions, providing the first theoretical performance guarantee of that error feedback strategy.  Karimireddy et al. \cite{Karimireddy-19} extend the analysis of \cite{Stich-18} to the non-convex and weakly convex objective functions, while Zheng et al. \cite{Zheng-19} and Gorbunov et al. \cite{gorbunov2020linearly} prove its convergence in the multi-worker setting. Wu et al.~\cite{Wu-18} propose an error feedback mechanism different from \cite{Seide-14} and prove its convergence on quadratic functions using the same quantizer as in \cite{Alistarh-17}. Past quantization errors in the algorithm of \cite{Wu-18} accumulate from one iteration to another and are weighted by time-decaying factors. The momentum correction used in \cite{Lin-18}, as well as the distributed SGD with skipped communication rounds in \cite{sun2020lazily}, are also variants of error compensation.  Qian et al. \cite{qian2021error} propose an error-compensated accelerated SGD, while Richr\'arik et al. \cite{richtarik2021ef21} propose an error-compensated SGD that achieves the same order of convergence as SGD with unbiased compressors \cite{gorbunov2021marina}. The analyses in \cite{Stich-18,Karimireddy-19,gorbunov2020linearly,richtarik2021ef21} assume that the compressor is a contraction operator, while \cite{Wu-18} also assumes its unbiasedness.  Hor\'vath and Richt\'arik \cite{horvath2020better} construct an unbiased compressor from a contractive compressor and employ the resulting unbiased compressor within SGD as an alternative to error feedback to overcome the divergence issue with biased compressors.

Although a number of works provide convergence analyses of their proposed methods, showing that convergence rates of quantized gradient methods depend on the bit rate $R$ \cite{Alistarh-17,RamezaniKebrya-19,Acharya-19,MayekarTyagi-19,MayekarTyagi-20}, there are few existing convergence lower bounds in terms of $R$ that apply to any algorithm within a specified class. For quantized projected SGD, \cite{MayekarTyagi-19,MayekarTyagi-20} give lower bounds to a minimax expected estimation error (i.e. difference between the output function value and the optimal one), which is in the same order of convergence as that of the unquantized SGD over convex functions.
However, the allowable quantizer input in \cite{MayekarTyagi-19,MayekarTyagi-20} is fixed to be the gradient of the current iterate, precluding the use of error compensation. 

The parameter server framework that we consider in this paper is somewhat different from the \emph{distributed estimation} or optimization setting \cite{Boyd-11}, where there has also been great interest in communication-efficient algorithms to account for the distributed nature of these problems. In such applications, all parties in a connected network communicate back and forth in order to estimate the mean of a distribution \cite{Zhang-13-communication} (or a population \cite{Suresh-17}) or to solve a convex optimization cooperatively with quantization effects \cite{Nedic-09,Reisizadeh-19}.
Information-theoretic lower bounds have also been established either in the minimax sense for distributed statistical estimation problem \cite{Zhang-13-information} or in terms of the communication complexity for the distributed convex learning problem \cite{TsitsiklisLuo-87,ArjevaniShamir-15,Suresh-17}.
}

%\todo{
%Naive: signSGD \cite{Bernstein-18}, \cite{Aleksandr-20} (convergence rate is linear only
%in the special case of an over-parameterized regime (i.e., the regime in which the loss functions on
%all nodes share a common minimizer)), \cite{magnusson2020maintaining}
%Aleksandr-20
%
%\cite{FriedlanderSchmidt-12,Alistarh-17,Wen-17,Bernstein-18}
%
%Omitted: \cite{StichKarimireddy-19}
%
%gradient is computed at the unquantized trajectory instead of at the quantized trajectory.  exiThe EF-SGD \cite{Karimireddy-19} differs from {\DQGD} in the gradient-compute point: {\DQGD} computes the gradient at the unquantized trajectory, whereas EF-SGD still computes the gradient at a quantized trajectory. Although \cite{Karimireddy-19} shows that EF-SGD achieves the same convergence rate as SGD in the limit of many iterations, such a shift in the gradient-compute point is detrimental for {\GD} as we discuss in Section~\ref{subsec:disc}.
%
%%Besides, the standard assumption of unbiasedness of the compressor \cite{Nemirovski-09,Bubeck-15,Bottou-16} is crucial for the convergence of SGD.
%
%}

\subsection{Contributions}

\hl{In this paper, we provide a lower bound on quantized non-stochastic gradient descent, and we show a single-worker algorithm that achieves the lower bound with equality, thereby establishing an information-theoretic fundamental limit of quantized gradient descent. In other words, we quantify exactly the minimum amount of information required to achieve a desired convergence speed (within a class of algorithms), and we show an algorithm that achieves it. Because the algorithm achieves the information-theoretic converse with equality, no other algorithm can surpass its performance. It is remarkable that only a finite bit rate is required to achieve the optimal convergence speed achievable with an infinite rate. Our analysis is sharper than existing analyses because we identify constants and not just the order of convergence. We focus on (nonstochastic) GD and not on SGD as most prior work. We do not assume that the quantizer is unbiased or is a contraction operator - our information-theoretic lower bound applies to any quantizer, and any quantizer can be inserted into our algorithm, although our achievability result suggests that picking a (scalar or vector) quantizer with good covering efficiency would perform best.  Our mechanism for error compensation (that we call ``differential quantization'') differs from prior works in the gradient-compute point, which is crucial for achieving our sharp information-theoretic lower bound. Although our information-theoretic lower bound applies to the multi-worker setting as well, our best achievability bound comes short of it at a finite $R$. Thus, it remains an open problem whether the lower bound is achievable in the multi-worker setting.  While our main results are presented in terms of a quantity that is asymptotic in the number of iterations $T$, the analyses that lead to these results are nonasymptotic. Incidentally, we discover two new results on the classical (unquantized) gradient methods: a slightly more general converse for the gradient descent, and a nonasymptotic global convergence bound on Polyak's heavy ball method. 
}

%However, since our scheme refines its estimate of the optimizer successively, it attains a better delay-reliability tradeoff: the server can decide to stop after any i rounds based on its target accuracy while attaining a close-to-optimal tradeoff between communication and convergence over those i rounds. The decision of when to stop is done by the server and not by the worker.
We consider the single-worker scenario of the parameter server framework \cite{Li-14-scaling,Li-14-communication,Alistarh-17,Wen-17,Khirirat-18,Bernstein-18,RamezaniKebrya-19} consisting of a worker that computes the gradients and a server that successively refines the model parameter (i.e. the iterate) and decides when to stop the distributed iterative algorithm based on its target accuracy or delay constraints.
See Fig.~\ref{fig:diagram}.
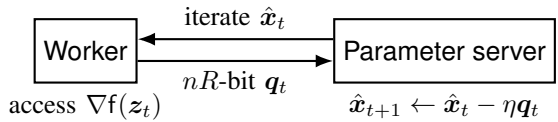
\begin{figure}[h!bt]
\centering
\begin{tikzpicture}[auto,node distance=2.6cm,>=Latex,thick]
    \tikzset{
        block/.style = {
            draw,
            % fill=blue!20,
            rectangle,
            minimum height=2.4em,
            minimum width=2.4em,
            font=\sffamily
        }
    }

    \node[block,label={below:access $\nabla\msf{f}(\bm{z}_t)$}] (worker) {Worker};
    \node[block,right=of worker,label={below:$\hat{\bm{x}}_{t+1}\la\hat{\bm{x}}_t-\eta\bm{q}_t$}] (center) {Parameter server};

    \draw[-Latex,thick] (center.west|-center.6) -- node[above,midway] {iterate $\hat{\bm{x}}_t$} (worker.east|-center.6);
    \draw[-Latex,thick] (worker.-11) -- (center.west|-worker.-11) node[below,midway] {$nR$-bit $\bm{q}_t$};
\end{tikzpicture}
\caption{Quantized gradient descent in a single-worker remote training setting. At each iteration $t$, the server first sends the current iterate $\hat{\bm{x}}_t$ to the worker noiselessly, who computes the gradient at some point $\bm{z}_{t}$ that is a function of (but not necessarily equal to) $\hat{\bm{x}}_t$. Then, the worker forms a descent direction $\bm{q}_{t}$ and pushes it back to the server under the $nR$ bits per iteration constraint.}
\label{fig:diagram}
\end{figure}

We study the fundamental tradeoff between the convergence rate and the communication rate of quantized gradient descent. 
We focus on the class $\set{F}_n$ of smooth and strongly convex objective functions $\fcn{f}\colon \mathbb R^n \mapsto  \mathbb R$ whose minimizers are bounded in the Euclidean norm.
%We do not restrict ourselves to the commonly used \emph{gain-shape} quantization that represents the norm of the gradient with machine-dependent (e.g. 32-bit or 64-bit) floating point numbers and quantizes the normalized gradient only \cite{Alistarh-17,Wen-17,RamezaniKebrya-19}. Instead, we consider the general class of fixed-rate quantizers with a cardinality constraint on the total number of quantizer outputs.
For a quantized iterative algorithm $\textrm A$, its worst-case linear \emph{contraction factor} over $\mathcal F_n$ at rate $R$ bits per problem dimension is defined as
\begin{align}
\sigma_{\textrm A}(n, R) \triangleq \inf_{R^\prime \leq R} \sup_{\msf f \in \mathcal F_n} \limsup_{T \to \infty} {\norm{\hat{\vec{x}}_T(R^\prime)-\vec{x}_{\fcn{f}}^*}}^{\frac 1 T}
\label{eq:lcr}
\end{align}
where $\vec{x}_{\fcn{f}}^*$ is the optimizer, and $\hat{\vec{x}}_0(R^\prime),\hat{\vec{x}}_1(R^\prime), \hat{\vec{x}}_2(R^\prime),\ldots$ is the sequence of iterates generated by $\textrm A$ in response to $\msf f \in \mathcal F_n$ when it operates at $R^\prime$ bits per problem dimension.

We consider three popular algorithms that converge linearly:\footnote{The term ``linear convergence'' is used in the literature as a synonym for convergence with the rate of  geometric progression. Note that SGD converges only sub-linearly over smooth and strongly convex functions \cite{Nemirovski-09,Bubeck-15,Bottou-16}.} the classical gradient descent (GD) with fixed step size, the accelerated gradient descent (AGD) \cite{Nesterov-14}, and the heavy ball method (HB) \cite{Polyak-87}. 
 We propose a principle for error feedback we call \emph{differential quantization} (DQ) that says that the quantizer input should be formed in such a way as to guide the descent trajectory of the quantized algorithm towards the descent trajectory of its unquantized counterpart. By applying the DQ principle to the GD, AGD, and HB algorithms, we construct three new quantized iterative optimization algorithms: DQ-GD, DQ-AGD, and DQ-HB. By analyzing them, we show achievability bounds of the form\footnote{The convergence result on DQ-HB in \eqref{eqn:rate-fcn-upper} requires that the function $\fcn f \in \set F_n$ is twice continuously differentiable.} 
 \begin{equation}
\label{eqn:rate-fcn-upper}
\sigma_{\textrm A}(n, R) \leq \max\bp{\sigma_{\textrm A},\rho_n 2^{-R}  \phi_{\textrm A}(n, R)  },
\end{equation}
where $\textrm A \in \left\{ \textrm{DQ-GD, DQ-AGD, DQ-HB} \right\}$, $\sigma_{\textrm A}$ is the contraction factor of the unquantized counterpart of $\textrm A$, $\rho_n \geq 1$ is the covering efficiency of the quantizer, and $\phi_{\textrm A}(n, R) \geq 1 $ is function that we specify; for example, 
 \begin{equation}
\sigma_{\textrm{DQ-GD}}(n, R) \leq \max\bp{\sigma_{\mathrm{GD}}, \rho_n 2^{-R}}.
\label{eq:DQ-GDintro}
\end{equation}
As \eqref{eqn:rate-fcn-upper} indicates, each of the novel DQ algorithms achieves the corresponding $\sigma_{\textrm A}$ once the rate passes a hard threshold. In other words, there is no loss at all due to quantization once the rate is high enough. 

We show an information-theoretic converse of the form
\begin{equation}
\label{eqn:rate-fcn-lower}
\sigma_{\textrm{A}}(n, R) \geq \max\bp{\sigma_{\mathrm{GD}}, 2^{-R}},
\end{equation}
which applies to any ``quantized gradient descent'' algorithm $ \textrm A$ (in the class of ``quantized gradient descent'' algorithms, summarized in \figref{fig:diagram}, the server can utilize only the last quantized input to form the next iterate). Recalling the classical result of Rogers \cite[Th. 3]{Rogers-63} that shows the existence of quantizers with covering efficiency $\rho_n \to 1$ as $n \to \infty$ and comparing \eqref{eqn:rate-fcn-upper} and   \eqref{eqn:rate-fcn-lower}, one can deduce the asymptotic optimality of DQ-GD within the class of ``GD-like'' algorithms. In contrast, the natural method that quantizes the gradient of its current iterate directly \cite{FriedlanderSchmidt-12,Alistarh-17,Wen-17,Bernstein-18} referred to as naively quantized (NQ) GD in this paper, has contraction factor \hl{(in the single-worker scenario; see Section~\ref{sec:multiworker} for the multi-worker result)}
\begin{equation}
\sigma_{\textrm {NQ-GD}}(n, R) \leq \sigma_{\mathrm{GD}} +  \frac {2\kappa}{\kappa + 1} \rho_n 2^{-R}
\label{eq:DQ-NGintro}
\end{equation}
where $\kappa \geq 1$ is the condition number of $\fcn f$. The guarantee \eqref{eq:DQ-NGintro} is significantly worse than \eqref{eq:DQ-GDintro}. 

Our numerical results indicate that the upper bounds \eqref{eqn:rate-fcn-upper} and  \eqref{eq:DQ-NGintro}  accurately represent the actual achieved contraction factors.

Within a wider class of quantized gradient methods (the server can utilize full memory of the past), the converse \eqref{eqn:rate-fcn-lower} can be surpassed. Once the rate passes the threshold mentioned earlier, DQ-HB attains the minimum possible contraction factor among all algorithms in that wider class, even unquantized ones. 
 
The rest of the paper is organized as follows. Differentially quantized algorithms are presented in Section~\ref{sec:algo}. Their convergence analyses and an experimental validation on least-squares problems are shown in Section~\ref{sec:convergence}. 
The converses are presented in Section~\ref{sec:converse}. The multi-worker setting is discussed in Section~\ref{sec:multiworker}.

\section{Differentially Quantized Algorithms}
\label{sec:algo}

\subsection{Quantizers employed in DQ algorithms}
A \emph{quantizer of dimension $n$ and rate $R$} is a function $\fcn{q}\colon\set{D}\to\R^n$, where $\set{D}\subseteq \R^n$ is the domain, such that the image of $\fcn{q}$ satisfies
\begin{equation}
\label{eqn:def-qnt-rate}
\abs{\mrm{Im}(\fcn{q})}= 2^{nR}.
\end{equation}
This is the classical general fixed-rate quantizer in the information theory literature. 
We fix a dimension-$n$, rate-$R$ quantizer $\fcn{q}$, and we set up quantizer $\fcn{q}_t$ to be used at iteration $t$ as
\begin{equation}
\label{eqn:quant-scale}
\fcn{q}_t(\cdot) = r_t\fcn{q}(\cdot/r_t)
\end{equation}
for a properly chosen sequence of shrinkage factors $\{r_t\}$ (see \eqref{eq:DR_memory}, \eqref{eqn:range-AGD}, and \eqref{eqn:range-HB}, below). Therefore, each quantizer $\fcn{q}_t$ has the same geometric structure but different resolution.

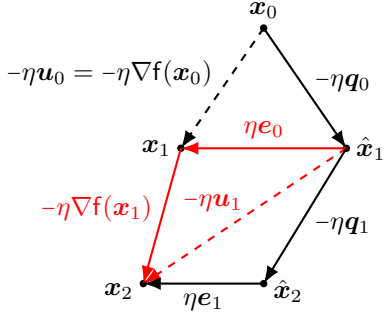
\begin{figure}[t]
\centering
\begin{tikzpicture}[auto,>=Latex,thick]
\filldraw[black] (0,0) circle (1pt) node[above] {$\bm{x}_0$};
\filldraw[black] (-1.1,-1.6) circle (1pt) node[left] {$\bm{x}_1$};
\filldraw[black] (1.1,-1.6) circle (1pt) node[right] {$\hat{\bm{x}}_1$};

\draw[->,dashed] (0,0) -- node[midway,left,yshift=2mm] {$\minus\eta\bm{u}_0=\minus\eta\nabla\msf{f}(\bm{x}_0)$} (-1.1,-1.6);
\draw[->] (0,0) -- node[midway,right,yshift=1mm] {$\minus \eta \bm{q}_0$} (1.1,-1.6);
\draw[->,red] (1.1,-1.6) -- node[midway,above] {$\eta\bm{e}_0$} (-1.1,-1.6);

\filldraw[black] (-1.6,-3.4) circle (1pt) node[left,yshift=-1mm] {$\bm{x}_2$};
\filldraw[black] (0,-3.4) circle (1pt) node[right,yshift=-1mm] {$\hat{\bm{x}}_2$};

\draw[->,red] (-1.1,-1.6) -- node[midway,left,yshift=1mm] {$\minus\eta\nabla\msf{f}(\bm{x}_1)$} (-1.6,-3.4);
\draw[->,red,dashed] (1.1,-1.6) -- node[midway,left,xshift=1mm,yshift=2mm] {$\minus\eta\bm{u}_1$} (-1.6,-3.4);
\draw[->] (1.1,-1.6) -- node[midway,right,yshift=-1mm] {$\minus \eta \bm{q}_1$} (0,-3.4);
\draw[->] (0,-3.4) -- node[midway,below] {$\eta\bm{e}_1$} (-1.6,-3.4);
\end{tikzpicture}
\caption{Illustration of the DQ-GD algorithm (Algorithm~\ref{alg:DQGD_short}).}
\label{fig:DQGD}
\end{figure}

\subsection{Differentially Quantized Gradient Descent}
The (unquantized) gradient descent algorithm searches along the direction of the negative gradient toward which the function value decreases:
\begin{equation}
\label{eq:GD}
\bm{x}_{t+1} = \bm{x}_t - \eta\nabla\msf{f}(\bm{x}_t),
\end{equation}
where $\eta>0$ is the constant stepsize chosen to minimize the function value along the search direction. 

In Fig.~\ref{fig:DQGD}, we illustrate an application of differential quantization (DQ) to GD \eqref{eq:GD}, which yields the DQ-GD algorithm (Algorithm~\ref{alg:DQGD_short}). At each iteration $t$, DQ-GD first determines the iterate $\bm{x}_t$ associated with the corresponding unquantized algorithm, i.e., GD, by compensating previous scaled quantization error $\eta\bm{e}_{t-1}$ (Line~4). It then computes the gradient at $\bm{z}_t=\bm{x}_t$ (\lemref{lma:recur_memory_full}) and sets the quantizer input as (Line~5)
\begin{equation}
\label{eq:input_memory_short}
\bm{u}_t = \nabla\msf{f}\lp \hat{\bm{x}}_t+\eta\bm{e}_{t-1}\rp - \bm{e}_{t-1},
\end{equation}
which in the absence of quantization error $\bm{e}_{t}$ would guide the iterate $\hat{\bm{x}}_t$ back to ${\bm{x}}_{t+1}$ (see Fig.~\ref{fig:DQGD}). The recorded scaled quantization error $\eta \bm{e}_t$ captures exactly the difference between $\hat{\bm{x}}_{t+1}$ and $\bm{x}_{t+1}$ for the next iteration.

See Appendix~\ref{sec:DQGDvary} for the DQ-GD algorithm with varying stepsize $\eta_t$. 

\begin{algorithm2e}[th]
\caption{DQ-GD}
\label{alg:DQGD_short}
   \DontPrintSemicolon
   \SetKw{KwW}{{\small\textsf{Worker}}:}
   \SetKw{KwS}{{\small\textsf{Server}}:}
   \SetKwIF{IfW}{ElseIfW}{ElseW}{if}{then}{else if}{\KwW}{endif}
   Initialize $\bm{e}_{-1} = \bm{0}$\;
   \For{$t = 0, 1, 2, \ldots$}{
      \SetAlgoVlined
      \ElseW{$\bm{z}_t=\hat{\bm{x}}_t+\eta\bm{e}_{t-1}$\;
         $\bm{u}_t=\nabla\msf{f}(\bm{z}_t)-\bm{e}_{t-1}$\;
         $\bm{q}_t=\msf{q}_t(\bm{u}_t)$\;
         $\bm{e}_t=\bm{q}_t-\bm{u}_t$\;
      }
      \KwS $\hat{\bm{x}}_{t+1}=\hat{\bm{x}}_t-\eta\bm{q}_t$\;
   }
\end{algorithm2e}

\subsection{Differentially Quantized Accelerated Gradient Descent}

Nesterov's Accelerated Gradient Descent (AGD) \cite{nesterov82method} keeps track of two iterate sequences
\begin{align}
\vec{y}_{t+1} &= \vec{x}_t - \eta\grad\fcn{f}(\vec{x}_t) \label{eqn:descent-AGD-y} \\
\vec{x}_{t+1} &= \vec{y}_{t+1} + \gamma \left( \vec{y}_{t + 1} - \vec{y}_t\right) \label{eqn:descent-AGD-x}.
\end{align}
It first performs the gradient descent step \eqref{eqn:descent-AGD-y}, and then adds the momentum term  $\gamma \left(\vec{y}_{t+1} - \vec{y}_t\right)$ \eqref{eqn:descent-AGD-x} to form  a projection $\vec{x}_{t+1}$ of the GD iterate $\vec{y}_{t+1}$ to its near future. The momentum term incorporates second-order effects by leveraging the past $\vec{y}_t$. The AGD is the first algorithm that achieved the contraction factor that is order-wise optimal (in terms of the condition number of $\mathsf f$) among all first-order (gradient) optimization methods \cite{Nesterov-14} (\lemref{lma:conv-AGD}). There are various interpretations of Nesterov's acceleration phenomenon. We refer the reader to \cite{AllenZhuOrecchia-14} for a connection to the mirror descent algorithm and to \cite{Su-16} for an interpretation in terms of differential equation. %Although $\sigma_a$ \eqref{eqn:conv-lin-AGD} is in the same order $O(\exp(-1/\sqrt{\kappa}))$ as the converse $\sigma_b$ \eqref{eqn:conv-lin-HB}, the gap $\sigma_a-\sigma_b$ is strictly positive unless $\kappa=1$ trivially.

%of order $O(\exp(-1/\sqrt{\kappa}))$ (GD achieves only $O(\exp(-1/\kappa))$), where $\kappa$ is the condition number (see \eqref{eq:condition}, below) $\sigma_b$ \eqref{eqn:conv-lin-HB} in Lemma~\ref{lma:lower-Nesterov} is possible \cite{Nesterov-14}. 

Differentially Quantized AGD algorithm is presented as Algorithm~\ref{alg:DQAGD}. At each iteration $t$, DQ-AGD uses the past two quantization errors $\vec{e}_{t-1},\vec{e}_{t-2}$ to determine the gradient-compute point $\vec{z}_t$ (Line~4) and the quantizer input $\vec{u}_t$ (Line~5). As dictated by the principle of differential quantization, DQ-AGD computes the gradient at the same point as unquantized AGD, i.e., $\vec{z}_t = \vec x_t$ (\lemref{lma:track-AGD}).

\begin{algorithm2e}[ht]
\caption{DQ-AGD}
\label{alg:DQAGD}
 
\DontPrintSemicolon
\SetKw{KwServer}{Server}
\SetKw{KwWorker}{Worker}
   Initialize $\vec{e}_{-2} = \vec{e}_{-1} = \vec{0}$, $\hat{\vec{y}}_0 = \hat{\vec{x}}_0$\;
\For{$t = 0, 1, 2, \ldots$}{
    \KwWorker:\;
    \Indp
        $\vec{z}_t=\hat{\vec{x}}_t+\eta \left[ \vec{e}_{t-1} + \gamma \left( \vec{e}_{t-1} - \vec{e}_{t-2} \right) \right]$\;
        $\vec{u}_t=\grad\fcn{f}(\vec{z}_t)-  \left[\vec{e}_{t-1} + \gamma \left( \vec{e}_{t-1} - \vec{e}_{t-2} \right) \right]$\;
        $\vec{q}_t=\fcn{q}_{t}(\vec{u}_t)$\;
        $\vec{e}_t=\vec{q}_t-\vec{u}_t$\;
    \Indm
    \KwServer:\;
    \Indp
        $\hat{\vec{y}}_{t+1}=\hat{\vec{x}}_t-\eta\vec{q}_t$\;
        $\hat{\vec{x}}_{t+1}= \hat{\vec{y}}_{t+1} + \gamma \left( \hat{\vec{y}}_{t+1} - \hat{\vec{y}}_t \right)$\;
    \Indm
}
\end{algorithm2e}

\subsection{Differentially Quantized Heavy Ball Method}

Polyak's Heavy Ball (HB) algorithm \cite{Polyak-87} iterates
\begin{equation}
\label{eqn:descent-HB}
\vec{x}_{t+1} = \vec{x}_t - \eta\grad\fcn{f}(\vec{x}_t) + \gamma \left( \vec{x}_{t} - \vec{x}_{t-1} \right),
\end{equation}
where $\gamma\left( \vec{x}_t - \vec{x}_{t-1}\right) $ is the momentum term that nudges $\vec{x}_{t+1}$ in the direction of the previous step,  and accelerates convergence to the optimizer.   In contrast to AGD, the HB method only uses the gradient at the current iterate. The HB method derives from the analogy with physics, since the continuous-time counterpart of \eqref{eqn:descent-HB} is a second-order ODE that models the motion of a body (``the heavy ball'') in a field with potential $\fcn{f}$ under the force of friction.  At the expense of requiring function $\mathsf f$ in $\mathcal F_n$ to be further twice continuously differentiable, the HB algorithm can be shown to converge with the optimal contraction factor achievable among all first-order optimization methods \cite[Th. 3.1]{Polyak-87} (\lemref{lma:HB}), \cite[Th.~2.1.13]{Nesterov-14} (\lemref{lma:lower-Nesterov}). In comparison, the AGD approaches it only order-wise, but it does not require the second derivative of $\mathsf f$ to exist, a significant restriction in practical applications.

Differentially Quantized HB algorithm is presented as Algorithm~\ref{alg:DQHB}. 
In accordance with the principle of differential quantization, the worker computes the gradient at $\bm{z}_t=\bm{x}_t$ (\lemref{lma:track-HB}).
Note that DQ-HB has the same expression for its quantizer input $\vec{u}_t$ (Line~4) as DQ-AGD (Line~5).

\begin{algorithm2e}[ht]
\caption{Differentially Quantized Heavy Ball Method (DQ-HB)}
\label{alg:DQHB}
\DontPrintSemicolon
\SetKw{KwServer}{Server}
\SetKw{KwWorker}{Worker}
   Initialize $\vec{e}_{-2} = \vec{e}_{-1} = \vec{0}, ~ \hat{\vec{x}}_{-1} = \hat{\vec{x}}_{0}$\;
\For{$t = 0, 1, 2, \ldots$}{
    \KwWorker:\;
    \Indp
        $\vec{z}_t=\hat{\vec{x}}_t+\eta\vec{e}_{t-1}$\;
        $\vec{u}_t=\grad\fcn{f}(\vec{z}_t)-\bk{ \vec{e}_{t-1} + \gamma \left( \vec{e}_{t-1}- \vec{e}_{t-2} \right) }$\;
        $\vec{q}_t=\fcn{q}_t(\vec{u}_t)$\;
        $\vec{e}_t=\vec{q}_t-\vec{u}_t$\;
    \Indm
    \KwServer: $\hat{\vec{x}}_{t+1}= \hat{\vec{x}}_t-\eta\vec{q}_t+ \gamma\left( \hat{\vec{x}}_{t} - \hat{\vec{x}}_{t-1}\right)$\;
}
\end{algorithm2e}

\section{Convergence rates of DQ algorithms}
\label{sec:convergence}

\subsection{Definitions}
\label{sec:functiondef}
We denote by $\lnorm\cdot\rnorm$ the Euclidean norm, and by $\set{B}(r)\triDef\bp{\vec{u}\in\R^n \colon \norm{\vec{u}}\leq r}$ the Euclidean ball in $\R^n$ with radius $r$ and center at $\vec 0$. 

We fix positive scalars $L$, and $\mu$, and  $D$, and we say that a continuously differentiable function $\mathsf f \colon \mathbb R^n \mapsto \mathbb R$ is in class $\mathcal F_n$ if
\begin{enumerate}[i)]
\item $\mathsf f$ is $L$-smooth, i.e., 
\begin{equation}
\label{eq:smooth}
\lnorm \nabla\msf{f}(\bm{v})-\nabla\msf{f}(\bm{w})\rnorm \leq L\lnorm \bm{v}-\bm{w}\rnorm;
\end{equation}
\item $\mathsf f$ is $\mu$-strongly convex, i.e., 
\begin{equation}
\label{eqn:str-cvx}
\text{function } \vec{v} \mapsto \fcn{f}(\vec{v})-\frac{\mu}{2}\norm{\vec{v}}^2 \text{ is convex};
\end{equation}
\item  the minimizer $\vec{x}_{\fcn{f}}^* \triDef \argmin_{\vec{x}\in \mathbb R^n} \fcn{f}(\vec{x})$ satisfies
\begin{equation}
\label{eq:iter_opt_range}
\lnorm \bm{x}_{\msf{f}}^*  - \hat{\vec{x}}_0 \rnorm\leq D,
\end{equation}
where $\hat{\vec{x}}_0$ is the starting location of iterative algorithms. 
\end{enumerate}

We say that $\mathsf f \colon \mathbb R^n \mapsto \mathbb R$ is in class $\mathcal F_n^2$ if it is in $\mathcal F_n$ and is in addition twice continuously differentiable. 

We denote the \emph{condition number} of an $\fcn{f} \in \mathcal F_n$ by
\begin{equation}
\kappa \triDef \frac{L}{\mu}.
\label{eq:condition}
\end{equation}
Note that $\kappa \geq 1$ due to \eqref{eq:smooth} and \eqref{eqn:str-cvx}.

For a bounded-domain quantizer $\fcn{q}\colon\set{D}\to\R^n$, we refer to
\begin{equation}
\label{eq:DR}
r(\fcn{q}) \eqDef \max\lbp \delta\colon \set{B}(\delta) \subseteq\mcal{D}\rbp
\end{equation}
as the \emph{dynamic range} of $\msf{q}$, 
to 
\begin{equation}
\mathsf d(\fcn{q}) \triangleq \min\lbp d \colon \forall\bm{x}\in\mcal{D},\;\lnorm \bm{x}-\msf{q}(\bm{x})\rnorm \leq d\rbp  
\label{eq:covradius}
\end{equation}
as its \emph{covering radius}, and to 
\begin{equation}
\label{eq:def_d}
\rho \lp \msf{q}\rp \eqDef \abs{\mrm{Im}(\fcn{q})}^{1/n}\frac{\msf{d}(\msf{q})}{r(\msf{q})}
\end{equation}
as its \emph{covering efficiency}.\footnote{Covering efficiency introduced in \eqref{eq:def_d}  extends the notion of covering efficiency of an infinite lattice \cite{Zamir-14}, which measures how well that lattice covers the whole space, to  bounded-domain quantizers.
} %Covering efficiency of a rate-$R$ quantizer $\fcn{q}$ measures the ratio of its covering radius to its prescribed dynamic range, normalized by its per-dimension cardinality.
A scalar uniform quantizer $\fcn{q}_{\mathsf u}$ has domain $[-r(\fcn{q}_{\mathsf u}),r(\fcn{q}_{\mathsf u})]^n$ and covering efficiency $\sqrt{n}$. This is wasteful: the classical result of Rogers \cite[Th. 3]{Rogers-63} implies that there exists a sequence of $n$-dimensional quantizers $\fcn{q}_{n}$ with $\rho \lp \msf{q}_n\rp \to 1$ as $n \to \infty$, while definition \eqref{eq:def_d} implies that $\rho \lp \msf{q}\rp \geq 1$ for any quantizer $\msf{q}$.

\subsection{DQ-GD: convergence analysis and simulation results}
\label{sec:DQ-GDa}
Unquantized gradient descent with the optimal stepsize given by
\begin{equation}
\label{eq:stepsize_gd}
\eta = \eta_{\mathrm{GD}} \triangleq \frac{2}{L+\mu}
\end{equation}
achieves contraction factor
\begin{equation}
\sigma_{\mathrm{GD}} \triangleq \frac{\kappa-1}{\kappa+1}
\label{eq:sigmaDQ}
\end{equation}
over $\mathcal F_n$ \cite[Th.~1.4]{Polyak-87}, \cite[Th.~2.1.15]{Nesterov-14}  (Lemma~\ref{lma:conv-GD}).
The following result provides a convergence guarantee for DQ-GD.
\begin{theorem}[Convergence of DQ-GD]
\label{thm:DQ-GD}
Fix a dimension-$n$, rate-$R$ quantizer $\fcn{q}$ with dynamic range $1$ and covering efficiency $\rho_n$. Then, Algorithm~\ref{alg:DQGD_short} with stepsize \eqref{eq:stepsize_gd}
and dynamic ranges $r_0 = LD$, 
\begin{equation}
\label{eq:DR_memory}
r_{t + 1} = \sigma_{\mathrm{GD}}^{t+1}\, LD + r_t\, \rho_n 2^{-R}, \quad t = 1, 2, \ldots
\end{equation}
in the definition of $\msf{q}_t$ \eqref{eqn:quant-scale} 
achieves the following contraction factor over $\mathcal F_n$ \eqref{eq:lcr}:
\begin{equation}
\sigma_{\mathrm{DQ-GD}}(n, R) \leq \max\bp{\sigma_{\mathrm{GD}}, \rho_n 2^{-R}}.
\label{eq:sigmaDQ-GD}
\end{equation}
\end{theorem}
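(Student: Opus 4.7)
The plan is to prove the theorem by induction, establishing three facts in sequence: (i) a tracking identity between the differentially quantized trajectory and the unquantized GD trajectory; (ii) a non-overflow property of the calibrated quantizers $\{\msf{q}_t\}$; and (iii) a geometric error bound that yields the claimed contraction factor.

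First, I would establish the tracking identity $\bm{z}_t = \bm{x}_t$ and $\hat{\bm{x}}_t = \bm{x}_t - \eta\bm{e}_{t-1}$, where $\{\bm{x}_t\}$ are the unquantized GD iterates started at $\bm{x}_0 = \hat{\bm{x}}_0$ (this is what the referenced \lemref{lma:recur_memory_full} encodes). The base case $t=0$ is immediate from $\bm{e}_{-1}=\bm{0}$. For the inductive step, substitute $\bm{q}_t = \bm{u}_t + \bm{e}_t$ into the server update to get $\hat{\bm{x}}_{t+1} = \hat{\bm{x}}_t - \eta\nabla\msf{f}(\bm{z}_t) + \eta\bm{e}_{t-1} - \eta\bm{e}_t = (\hat{\bm{x}}_t + \eta\bm{e}_{t-1}) - \eta\nabla\msf{f}(\bm{x}_t) - \eta\bm{e}_t = \bm{x}_{t+1} - \eta\bm{e}_t$, so $\bm{z}_{t+1} = \hat{\bm{x}}_{t+1} + \eta\bm{e}_t = \bm{x}_{t+1}$. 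This exposes the role of the DQ principle: the past error $\bm{e}_{t-1}$ is added to $\hat{\bm{x}}_t$ precisely so the worker computes $\nabla\msf{f}$ at the same location as the unquantized algorithm, and is subtracted from the raw gradient so the server update lands exactly $\eta\bm{e}_t$ away from the ideal $\bm{x}_{t+1}$.

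Next, I would show by induction on $t$ that $\lnorm\bm{u}_t\rnorm \leq r_t$, ensuring $\bm{u}_t$ lies in the domain of $\msf{q}_t$ so that the quantizer never overflows. Using the tracking identity and $\nabla\msf{f}(\bm{x}^*_{\msf{f}}) = \bm{0}$, one has $\bm{u}_t = \nabla\msf{f}(\bm{x}_t) - \bm{e}_{t-1}$, so by the triangle inequality, $L$-smoothness, and the unquantized GD bound $\lnorm\bm{x}_t - \bm{x}^*_{\msf{f}}\rnorm \leq \sigma_{\mathrm{GD}}^t D$, one gets $\lnorm\bm{u}_t\rnorm \leq \sigma_{\mathrm{GD}}^t LD + \lnorm\bm{e}_{t-1}\rnorm$. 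From the scaling \eqref{eqn:quant-scale} together with $r(\msf{q})=1$, the covering radius of $\msf{q}_t$ is $r_t\rho_n 2^{-R}$; invoking the non-overflow inductive hypothesis at time $t-1$ then gives $\lnorm\bm{e}_{t-1}\rnorm \leq r_{t-1}\rho_n 2^{-R}$. Combining yields $\lnorm\bm{u}_t\rnorm \leq \sigma_{\mathrm{GD}}^t LD + r_{t-1}\rho_n 2^{-R} = r_t$, which is exactly the recursion \eqref{eq:DR_memory}.

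To finish, I would combine the tracking identity with the error bound to obtain $\lnorm\hat{\bm{x}}_T - \bm{x}^*_{\msf{f}}\rnorm \leq \lnorm\bm{x}_T - \bm{x}^*_{\msf{f}}\rnorm + \eta\lnorm\bm{e}_{T-1}\rnorm \leq \sigma_{\mathrm{GD}}^T D + \eta r_{T-1}\rho_n 2^{-R}$. Unrolling \eqref{eq:DR_memory} yields the closed form $r_t = LD\sum_{k=0}^{t}\sigma_{\mathrm{GD}}^{t-k}(\rho_n 2^{-R})^k$, hence $\limsup_{t\to\infty} r_t^{1/t} = \max\{\sigma_{\mathrm{GD}}, \rho_n 2^{-R}\}$. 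Taking $T$-th roots of the error bound and letting $T\to\infty$ then gives \eqref{eq:sigmaDQ-GD}. The main subtlety is the calibration of the shrinkage schedule $\{r_t\}$: if $r_t$ is too small the quantizer overflows and the induction collapses, but if $r_t$ is too large the accumulated quantization error overwhelms the GD contraction. The recursion \eqref{eq:DR_memory} is tight on both fronts, which is precisely what allows the bound $\max\{\sigma_{\mathrm{GD}}, \rho_n 2^{-R}\}$ to be attained without any additive loss from the quantization.
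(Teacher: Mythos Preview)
Your proposal is correct and follows essentially the same approach as the paper: the paper likewise decomposes the proof into the tracking identity $\hat{\bm{x}}_t=\bm{x}_t-\eta\bm{e}_{t-1}$ (its \lemref{lma:recur_memory_full}), the inductive non-overflow argument $\bm{u}_t\in\mcal{B}(r_t)$ giving $\lnorm\bm{e}_t\rnorm\leq r_t\rho_n2^{-R}$ (its \lemref{lma:XOD_memory}), and then combines these via the triangle inequality and the closed-form geometric sum for $r_t$ to extract the contraction factor. The ingredients, their order, and the way they are stitched together match the paper's proof essentially line for line.
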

\begin{proof}[Proof sketch]
 The path of DQ-GD and that of GD are related as (see Fig.~\ref{fig:DQGD}, \lemref{lma:recur_memory_full})
\begin{equation}
\hat{\bm{x}}_t = \bm{x}_t - \eta\bm{e}_{t-1} 
\label{eq:recur_memory_short}
\end{equation}
Comparing \eqref{eq:recur_memory_short} and Line~4 in Algorithm~\ref{alg:DQGD_short}, we see that $\bm{z}_t = \bm{x}_t$, i.e., {\DQGD} computes the gradient at the unquantized trajectory $\{\bm{x}_t\}$. The convergence guarantee of {\GD}  \cite[Th.~1.4]{Polyak-87}, \cite[Theorem~2.1.15]{Nesterov-14} (Lemma~\ref{lma:conv-GD}) controls the difference between the first term in the recursion \eqref{eq:recur_memory_short} and the optimizer $\bm x_{\mathsf f}^*$. To bound the second term in \eqref{eq:recur_memory_short}, we observe using \eqref{eq:def_d} that for any $r_t > 0$ in \eqref{eqn:quant-scale}, 
\begin{align}
\max_{\bm{u}\in\mcal{B}(r_t)}\lnorm \msf{q}_t(\bm{u})-\bm{u}\rnorm &= r_t\max_{\bm{u}\in\mcal{B}(1)}\lnorm \msf{q}(\bm{u})-\bm{u}\rnorm \\
 &= r_t \, \rho_n 2^{-R} \label{eq:e_d},
\end{align}
i.e. quantizer $\msf{q}_t$ used at iteration $t$ has dynamic range $r_t$ and covering radius \eqref{eq:e_d}. To complete the proof, we show by induction that with $r_t$ in \eqref{eq:DR_memory}, the input $\bm{u}_t$ to the quantizer $\msf{q}_t$ generated by Algorithm~\ref{alg:DQGD_short} always lies within $\set{B}(r_t)$. Since recurrence relation \eqref{eq:DR_memory} represents a geometric sequence, \eqref{eq:e_d} implies that the quantization error decays exponentially fast. The stepsize \eqref{eq:stepsize_gd} is optimal both for GD \cite[Theorem~2.1.15]{Nesterov-14} and for DQ-GD. See Appendix~\ref{sec:DQ-DGa} for details. 
\end{proof}

The bound in \eqref{eq:sigmaDQ-GD} exhibits a phase-transition behavior: at any $R \geq \log_2 \frac{\rho_n} {\sigma_{\mathrm{GD}}}$, achieving the contraction factor of unquantized {\GD} is possible, while at any $R< \log_2 \frac{\rho_n} {\sigma_{\mathrm{GD}}}$, the achievable contraction factor is only $\rho_n 2^{-R} = \frac{\msf{d}(\msf{q})}{r(\msf{q})}$. The algorithm converges linearly as long as $\rho_n 2^{-R} <1$.

A common approach to quantizing descent algorithms \cite{FriedlanderSchmidt-12,Alistarh-17,Wen-17,Bernstein-18,MayekarTyagi-19,RamezaniKebrya-19} we refer to as \emph{naive quantization} has the worker directly quantize the gradient of its current iterate. Applied to GD, it leads to the Naively Quantized Gradient Descent (NQ-GD) with the quantizer input (cf. \eqref{eq:input_memory_short})
\begin{equation}
\label{eq:input_naive}
\bm{u}_{t} = \nabla\msf{f}(\hat{\bm{x}}_t).
\end{equation}
In \thmref{thm:NQ-GD} in \secref{sec:multiworker} below, we show that 
\begin{equation}
\sigma_{\textrm {NQ-GD}}(n, R) \leq \sigma_{\mathrm{GD}} + \frac{2\kappa}{\kappa + 1}\,\rho_n2^{-R}.
\label{eq:sigmaNQ-GD}
\end{equation}
which is strictly greater than \eqref{eq:sigmaDQ-GD}.

In Fig.~\ref{fig:exp}, we numerically compare the contraction factor of {\DQGD} (Algorithm~\ref{alg:DQGD_short}), the {\NQGD}, and the unquantized {\GD} \eqref{eq:GD} on least-squares problems
\begin{equation}
\label{eq:obj_LS}
\msf{f}(\bm{x}) = \frac{1}{2}\lnorm \bm{y}-\mbf{A}\bm{x}\rnorm^2
\end{equation}
where $\bm{y}\in\mbb{R}^m,\mbf{A}\in\mbb{R}^{m\by n}$, with $m\geq n$.
% \topic{implemented quantizer}
\hl{We generate $500$ matrices $\mbf{A}$'s with i.i.d. standard normal entries, one for each $\bm{y}$, and rescale the spectrum of $\mbf{A}$ so that it has a prescribed condition number $\kappa$. We also run the algorithm on the real-world least-squares matrix \texttt{ash331} extracted from the online repository \emph{SuiteSpare} \cite{Kolodziej-19}. For each per-dimension quantization rate $R\geq1$, we generate $500$ instances of the vector $\bm{y}$ and $\hat{\bm{x}}_0$ with i.i.d. standard normal entries.  We run the iterative algorithms for as many iterations $T$ as possible until reaching the machine's floating point precision, and report the average contraction factor.
We use the uniform scalar quantizer for the ease of implementation and take as a consequence a space-filling loss of $\sqrt{n}$. For smaller values of the data rate $R$, quantized GD may not even converge as $\sqrt{n}2^{-R}\geq1$. In that case, we clip off the contraction factor at $1$ in the plots. We set the stepsize and the quantizer's dynamic range in the DQ-GD algorithm as prescribed by \thmref{thm:DQ-GD}, and in the NQ-GD algorithm as prescribed by \thmref{thm:NQ-GD} in \secref{sec:multiworker} below.}

% \topic{observation}
% \noindent
We observe that {\DQGD} has a significantly faster contraction factor than {\NQGD}, and that the empirical results closely track our analytical convergence bounds \eqref{eq:sigmaDQ-GD} and \eqref{eq:sigmaNQ-GD}.  The contraction factor of unquantized {\GD} serves as a lower bound to both quantized algorithms.

\hl{%An analysis of gradient descent with additive noise is provided in \cite{FriedlanderSchmidt-12}. Particularizing the noise of \cite{FriedlanderSchmidt-12} to quantization noise, one can apply the techniques of \cite{FriedlanderSchmidt-12} to study the contraction factor of NQ-GD (Appendix~\ref{subsec:FS12}). 

Applying the error feedback mechanism of \cite{Seide-14,Stich-18,Karimireddy-19}, developed for SGD, to GD results in an algorithm that forms the quantizer input as
\begin{equation}
\label{eq:input_EFGD}
\bm{u}_t = \nabla\msf{f}(\hat{\bm{x}}_t) - \bm{e}_{t-1}.
\end{equation}
Unlike DQ-GD \eqref{eq:input_memory_short}, error feedback in \eqref{eq:input_EFGD} results in computing the gradient along the quantized trajectory $\{\hat{\bm{x}}_t\}$, and it is unclear whether it can even improve upon NQ-GD \eqref{eq:sigmaNQ-GD} in the setting of our paper - nonstochastic GD with a worst-case performance criterion and without further assumptions on the quantizer (Appendix~\ref{subsec:EFGD}).}

\begin{figure}
     \centering
     \begin{subfigure}[b]{0.49\textwidth}
         \centering
\includegraphics[width=\textwidth]{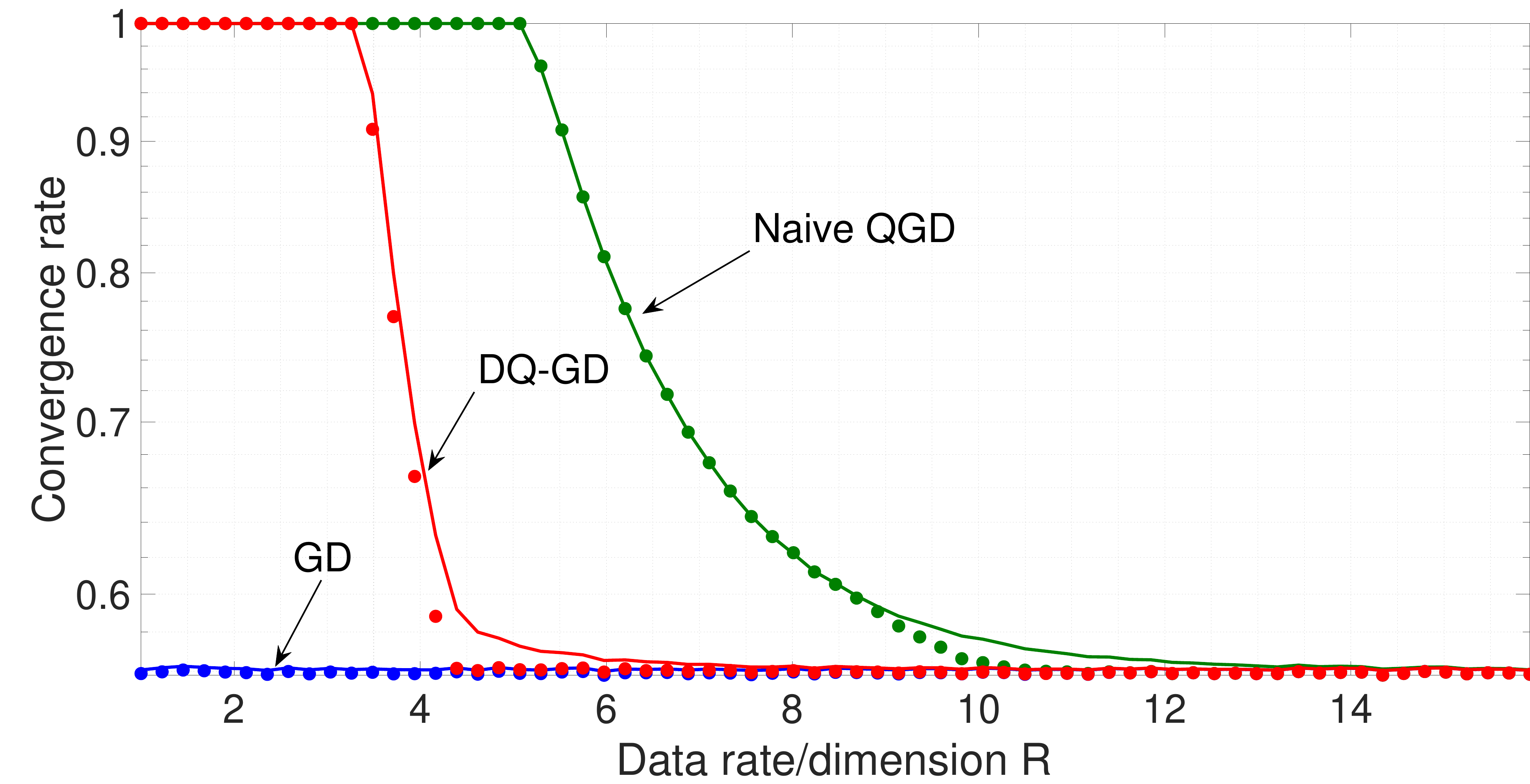}
     \end{subfigure}
     \hfill
     \begin{subfigure}[b]{0.49\textwidth}
         \centering
\includegraphics[width=\textwidth]{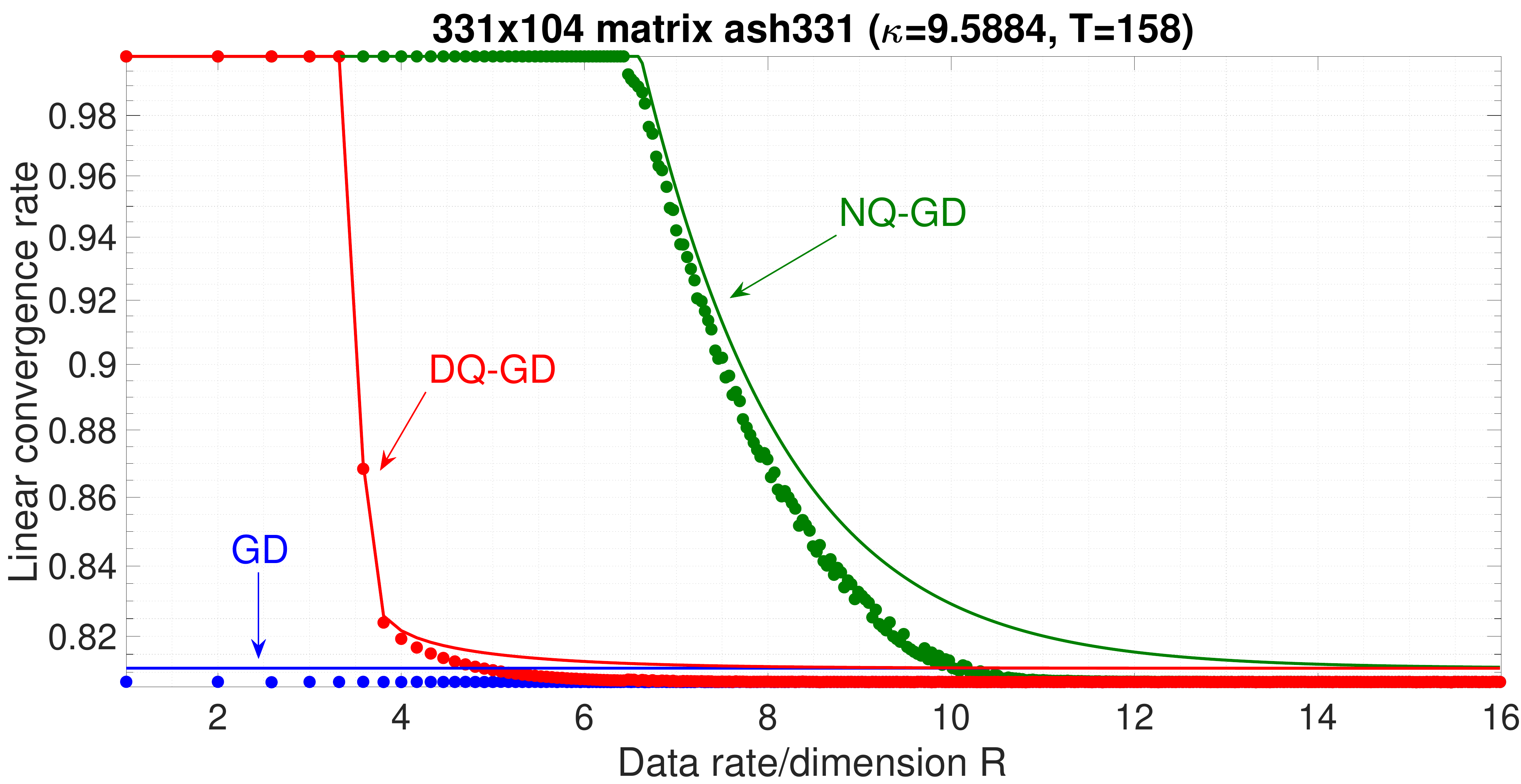}
     \end{subfigure}
\caption{Empirical contraction factors (as circles) and corresponding upper bounds \eqref{eq:sigmaDQ}, \eqref{eq:sigmaDQ-GD}, and \eqref{eq:sigmaNQ-GD} (as lines).
}
\label{fig:exp}
\end{figure}

\subsection{DQ-AGD: convergence analysis}

Unquantized accelerated gradient descent with stepsize
\begin{equation}
\label{eq:stepsize_agd}
\eta = \eta_{\mathrm{AGD}} \triangleq \frac{1}{L}
\end{equation}
and momentum coefficient
\begin{equation}
\gamma = \gamma_{\mathrm{AGD}} \triangleq \frac{\sqrt{\kappa}-1}{\sqrt{\kappa}+1}
\label{eq:gamma_agd}  
\end{equation}
achieves contraction factor
\begin{align}
\sigma_{\mathrm{AGD}} \triangleq &~ \sqrt{1-\frac{1}{\sqrt{\kappa}}}
\label{eq:sigmaADQ}
\end{align}
over $\mathcal F_n$  \eqref{eq:lcr} \cite[Th.~3.18]{Bubeck-15} (\lemref{lma:conv-AGD}), which improves the contraction factor of gradient descent $\sigma_{\mathrm{GD}} = 1 - \frac 1 {\kappa} + O \left(\frac 1 {\kappa^2} \right)$ \eqref{eq:sigmaDQ} to $\sigma_{\mathrm{AGD}} = 1 - \frac 1 {2 \sqrt{\kappa}} + O \left(\frac 1 {\kappa} \right)$, a significant improvement if $\kappa$ is large and optimal order-wise (the converse to the optimal contraction factor expands as $1 -\frac 4 {\sqrt{\kappa}} + O \left(\frac 1 {\kappa} \right)$ \cite{Nesterov-14} (\lemref{lma:lower-Nesterov}) and is attained in $\mathcal F_n^2$ by the heavy ball method \cite{Polyak-87} (\lemref{lma:HB}). 

Denote for brevity the constant
\begin{align}
\lambda &\triDef \left( 1+ \gamma_{\mathrm{AGD}}+ \gamma_{\mathrm{AGD}}\sigma_{\mathrm{AGD}}^{-1}\right) \sqrt{\kappa+1}. \label{eq:lambda}
\end{align}

The following result extends \eqref{eq:sigmaADQ} to DQ-AGD.
\begin{theorem}[Convergence of DQ-AGD]
\label{thm:DQ-AGD}
Fix a dimension-$n$, rate-$R$ quantizer $\fcn{q}$ with dynamic range $1$ and covering efficiency $\rho_n$. Then, Algorithm~\ref{alg:DQAGD}  
with stepsize \eqref{eq:stepsize_agd}, momentum coefficient \eqref{eq:gamma_agd},  
and dynamic ranges $r_{-2} = r_{-1} = 0$,
\begin{align}
\label{eqn:range-AGD}
\!\!\! r_t &=  \sigma_{\mathrm{AGD}}^t  LD \lambda +   \left( r_{t-1} +  \gamma_{\mathrm{AGD}} (r_{t-1} + r_{t-2}) \right) \rho_n 2^{-R},
\end{align}
 $t = 1, 2, \ldots$ in the definition of $\msf{q}_t$ \eqref{eqn:quant-scale} 
achieves the following contraction factor over $\mathcal F_n$ \eqref{eq:lcr}:
\begin{equation}
\!\!\!\!\!\! \sigma_{\mathrm{DQ-AGD}}(n, R) \leq \max\bp{\sigma_{\mathrm{AGD}}, \rho_n 2^{-R} \phi(n, R, \gamma_{\mathrm{AGD}}) }
\label{eq:sigmaDQ-AGD}
\end{equation}
where
\begin{align}
&~\phi(n, R, \gamma) \triangleq  \frac 1 2 ( 1 + \gamma)
+\frac 1 2 \sqrt{(1+\gamma)^2+\frac{4 \gamma}{\rho_n 2^{-R}}} . 
\label{eq:phi} 
\end{align}
\end{theorem}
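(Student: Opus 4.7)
The plan is to adapt the DQ-GD proof strategy of Theorem~\ref{thm:DQ-GD} to the two-step momentum structure of AGD. The only conceptually new element is that the dynamic-range recursion in \eqref{eqn:range-AGD} is a forced second-order linear recursion, so its asymptotic growth is governed by the dominant root of a quadratic characteristic polynomial rather than a single geometric ratio; that root turns out to be exactly $\rho_n 2^{-R}\,\phi(n,R,\gamma_{\mathrm{AGD}})$, which explains the appearance of $\phi$ in \eqref{eq:sigmaDQ-AGD}. First, I would establish the tracking identity by induction on $t$: the DQ-AGD iterates satisfy $\vec{z}_t = \vec{x}_t$, where $\{\vec{x}_t,\vec{y}_t\}$ are the unquantized AGD iterates started from the same initial point. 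Concretely, the induction maintains $\hat{\vec{y}}_{t+1} = \vec{y}_{t+1} - \eta\vec{e}_t$ and $\hat{\vec{x}}_{t+1} = \vec{x}_{t+1} - \eta[\vec{e}_t + \gamma_{\mathrm{AGD}}(\vec{e}_t-\vec{e}_{t-1})]$; substituting into Line~4 of Algorithm~\ref{alg:DQAGD} reproduces $\vec{z}_{t+1}=\vec{x}_{t+1}$, and the quantizer input in Line~5 simplifies to $\vec{u}_t = \grad\fcn{f}(\vec{x}_t) - [\vec{e}_{t-1} + \gamma_{\mathrm{AGD}}(\vec{e}_{t-1}-\vec{e}_{t-2})]$. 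This is the AGD analogue of \eqref{eq:recur_memory_short}, the content of \lemref{lma:track-AGD}.

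Second, I would bound $\|\vec{u}_t\|$ by the triangle inequality and verify by induction that $\fcn{q}_t$ never overloads. Using $L$-smoothness and the unquantized AGD guarantee (\lemref{lma:conv-AGD}),
\begin{equation*}
\|\grad\fcn{f}(\vec{x}_t)\| \leq L\|\vec{x}_t - \vec{x}_{\fcn{f}}^*\| \leq \sigma_{\mathrm{AGD}}^t\, LD\, \lambda,
\end{equation*}
where the constant $\lambda$ in \eqref{eq:lambda} packages (i) the $\sqrt{\kappa+1}$ that arises when converting AGD's potential-function convergence into a Euclidean iterate bound, and (ii) the coefficient $1+\gamma_{\mathrm{AGD}}+\gamma_{\mathrm{AGD}}\sigma_{\mathrm{AGD}}^{-1}$ that absorbs the momentum contributions from $\vec{y}_{t+1}$ and $\vec{y}_t$ in the identity $\vec{x}_{t+1} = (1+\gamma_{\mathrm{AGD}})\vec{y}_{t+1} - \gamma_{\mathrm{AGD}}\vec{y}_t$. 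Each previous quantization error satisfies $\|\vec{e}_s\|\leq r_s\rho_n 2^{-R}$ via \eqref{eq:e_d}, provided inductively that $\fcn{q}_s$ has not overloaded. Summing the two contributions,
\begin{equation*}
\|\vec{u}_t\| \leq \sigma_{\mathrm{AGD}}^t LD\lambda + \bigl[(1+\gamma_{\mathrm{AGD}}) r_{t-1} + \gamma_{\mathrm{AGD}} r_{t-2}\bigr]\rho_n 2^{-R} = r_t,
\end{equation*}
which matches \eqref{eqn:range-AGD} exactly, closing the induction.

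Third, I would solve the recursion \eqref{eqn:range-AGD}. Its homogeneous part has characteristic polynomial $z^2 - (1+\gamma_{\mathrm{AGD}})\rho_n 2^{-R}\, z - \gamma_{\mathrm{AGD}}\rho_n 2^{-R} = 0$, whose dominant root equals
\begin{equation*}
\tfrac{1}{2}(1+\gamma_{\mathrm{AGD}})\rho_n 2^{-R} + \tfrac{1}{2}\sqrt{(1+\gamma_{\mathrm{AGD}})^2(\rho_n 2^{-R})^2 + 4\gamma_{\mathrm{AGD}}\rho_n 2^{-R}} = \rho_n 2^{-R}\,\phi(n,R,\gamma_{\mathrm{AGD}}),
\end{equation*}
matching \eqref{eq:phi}. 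Hence $r_t = O\bigl( (\max\{\sigma_{\mathrm{AGD}},\rho_n 2^{-R}\phi(n,R,\gamma_{\mathrm{AGD}})\})^t \bigr)$, and the tracking identity gives $\|\hat{\vec{x}}_t - \vec{x}_{\fcn{f}}^*\| \leq \|\vec{x}_t - \vec{x}_{\fcn{f}}^*\| + \eta\|\vec{e}_{t-1}+\gamma_{\mathrm{AGD}}(\vec{e}_{t-1}-\vec{e}_{t-2})\|$ with both summands decaying at that rate, so taking the $t$-th root and $\limsup$ in \eqref{eq:lcr} yields \eqref{eq:sigmaDQ-AGD}. The main obstacle is pinning down the exact constant $\lambda$: AGD's standard proof bounds a potential combining $\fcn{f}(\vec{x}_t)-\fcn{f}(\vec{x}_{\fcn{f}}^*)$ and $\|\vec{y}_t - \vec{x}_{\fcn{f}}^*\|^2$ by $\sigma_{\mathrm{AGD}}^{2t}$ times its initial value, and this must be carefully converted into simultaneous Euclidean bounds on $\|\vec{x}_t-\vec{x}_{\fcn{f}}^*\|$ and $\|\vec{y}_t-\vec{x}_{\fcn{f}}^*\|$, then combined through the momentum recursion to produce the precise coefficient in \eqref{eq:lambda}. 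The tracking identity and characteristic-polynomial bookkeeping are essentially routine given the DQ-GD template.
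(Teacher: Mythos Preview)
Your proposal is correct and follows essentially the same approach as the paper: the tracking identity is the paper's \lemref{lma:track-AGD}, the inductive non-overload argument and solution of the second-order recursion are the paper's \lemref{lem:err-bound-AGD}, and your identification of the constant $\lambda$ matches the paper's \lemref{lma:conv-AGD}. The only cosmetic difference is that the paper applies the final triangle inequality to $\hat{\vec{y}}_t$ via \eqref{eqn:track-AGD-y} (giving the simpler error term $\eta\|\vec{e}_{t-1}\|$) rather than to $\hat{\vec{x}}_t$ as you do, but both choices yield the same asymptotic contraction factor in \eqref{eq:lcr}.
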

\begin{proof}[Proof sketch]
The proof follows the roadmap of the proof of \thmref{thm:DQ-GD} with the following complication. Where in Algorithm~\ref{alg:DQGD_short} the quantizer input depends on the previous quantization error $\vec{e}_{t-1}$, the quantizer input in Algorithm~\ref{alg:DQAGD} depends on the past two quantization errors $\vec{e}_{t-1}$ and $\vec{e}_{t-2}$ (Line 5). The resulting recursion \eqref{eqn:range-AGD} is a second-order linear non-homogeneous recurrence relation, which unlike \eqref{eq:DR_memory} does not simply represent a geometric sequence. 
The characteristic polynomial of \eqref{eqn:range-AGD} is
\begin{equation}
\label{eqn:char}
\fcn{p}(r) \triDef r^2 - r \rho_n 2^{-R} (1+\gamma_{\mathrm{AGD}}) - \rho_n 2^{-R} \gamma_{\mathrm{AGD}},
\end{equation}
and $\rho_n 2^{-R} \phi_{\mathrm{DQ-AGD}}(n, R)$ in \eqref{eq:sigmaDQ-AGD} is its positive, larger-magnitude root. This implies that the quantization error decays with the contraction factor in the right side of \eqref{eq:sigmaDQ-AGD}. See Appendix~\ref{sec:DQ-AGD} for details. 
\end{proof}
Define the functions
\begin{align}
R_1(n, \gamma) &\triangleq \log_2 (1+2\gamma) + \log_2 \rho_n \label{eqn:suffi-conv-lin-rate} \\
R_2(n, \sigma, \gamma) &\triangleq \log_2 \frac{(1+\gamma)\sigma +\gamma}{\sigma^2} + \log_2 \rho_n
\label{eqn:suffi-same-as-unqtz}
\end{align}

The achievability bound \eqref{eq:sigmaDQ-AGD} exhibits two phase transitions. The first one is at $\rho_n 2^{-R} \phi(n, R, \gamma_{\mathrm{AGD}}) < 1$, which is equivalent to $\fcn{p}(1) > 0$: if
\begin{equation}
R > R_1(n, \gamma_{\mathrm{AGD}}) \quad \text{bits / dimension},
\end{equation}
then DQ-AGD enjoys linear convergence. The second one is at $\rho_n 2^{-R} \phi(n, R, \gamma_{\mathrm{AGD}}) \leq \sigma_{\mathrm{AGD}}$, which is equivalent to $\fcn{p}(\sigma_{\mathrm{AGD}}) \geq 0$: if
\begin{equation}
R \geq R_2(n, \sigma_{\mathrm{AGD}}, \gamma_{\mathrm{AGD}}) \quad \text{bits / dimension},
\end{equation}
then there is no loss in the long-term convergence behavior of the DQ-AGD compared to AGD. 

%Since $\gamma_{\mathrm{AGD}}$ \eqref{eq:gamma_agd} increases monotonically in $\kappa$,  taking $\kappa \to \infty$ in  $R_1(n, \gamma_{\mathrm{AGD}})$ gives the following sufficient condition for linear convergence:
%\begin{equation}
% R_1(n, \gamma_{\mathrm{AGD}}) > \frac{1}{3} + \log_2 \rho_n  \quad \text{bits / dimension},
%\label{eq:R1lb}
%\end{equation}

Curiously, $R_1(n, 0)$ and $R_2(n, \sigma_{\mathrm{GD}}, 0)$ express the two phase transitions of DQ-DG that were determined in \secref{sec:DQ-GDa}.

\subsection{DQ-HB: convergence analysis and numerical comparison}
Unquantized heavy ball method with stepsize
\begin{equation}
\label{eq:stepsize_hb}
\eta = \eta_{\mathrm{HB}} \triangleq \pr{\frac{2}{\sqrt{L}+\sqrt{\mu}}}^2
\end{equation}
and momentum coefficient
\begin{equation}
\gamma = \gamma_{\mathrm{HB}} \triangleq \pr{\frac{\sqrt{\kappa}-1}{\sqrt{\kappa}+1}}^2
\label{eq:gamma_hb}  
\end{equation}
achieves contraction factor 
\begin{equation}
\label{eqn:conv-lin-HB}
\sigma_{\mathrm{HB}} \triDef \frac{\sqrt{\kappa}-1}{\sqrt{\kappa}+1}
\end{equation}
over $\mathcal F_n^2$  \eqref{eq:lcr} \cite{Polyak-87} (\lemref{lma:HB}), which is optimal among all gradient methods 
\cite[Th.~2.1.13]{Nesterov-14} (\lemref{lma:lower-Nesterov}).

The following convergence analysis of DQ-HB applies to smooth and strongly convex functions that are in addition twice continuously differentiable.

\begin{theorem}[Convergence of DQ-HB]
\label{thm:DQ-HB}
Fix a dimension-$n$, rate-$R$ quantizer $\fcn{q}$ with dynamic range $1$ and covering efficiency $\rho_n$. Then, there exists a constant $\alpha > 0$ such that Algorithm~\ref{alg:DQHB} with stepsize \eqref{eq:stepsize_hb}, momentum coefficient \eqref{eq:gamma_hb} and dynamic ranges $r_{-1} = r_{-2} = 0$, 
\begin{align}
\label{eqn:range-HB}
r_t =  \sigma_{\mathrm{HB}}^t\, t^\alpha e^\alpha \sqrt 2\, L D +   \left( r_{t-1} + \gamma_{\mathrm{HB}} (r_{t-1} + r_{t-2}) \right) \rho_n 2^{-R},
\end{align}
$t = 1, 2, \ldots$ achieves the following contraction factor over $\mathcal F_{n}^2$:
\begin{equation}
\label{eqn:sigma-DQ-HB}
\begin{aligned}
\sigma_{\mathrm{DQ-HB}}(n, R) \leq \max\bp{\sigma_{\mathrm{HB}}, \rho_n 2^{-R} \phi(n, R, \gamma_{\mathrm{HB}}) }, 
\end{aligned}
\end{equation}
where $\phi(n, R, \gamma)$ is defined in \eqref{eq:phi}. 
\end{theorem}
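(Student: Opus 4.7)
The plan is to follow closely the structure of the proofs of \thmref{thm:DQ-GD} and \thmref{thm:DQ-AGD}, identifying the single new wrinkle that requires the assumption $\fcn{f} \in \set{F}_n^2$: unquantized HB converges at rate $\sigma_{\mathrm{HB}}$ only in a ``geometric-times-polynomial'' sense, so the forcing term of the recurrence for $r_t$ is geometric only up to a polynomial factor in $t$. This is exactly what forces the extra $t^\alpha e^\alpha$ factor in the first summand of \eqref{eqn:range-HB}.

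First, I would establish a tracking identity, analogous to \eqref{eq:recur_memory_short}, that relates the DQ-HB iterate to the unquantized HB trajectory. Letting $\{\vec{x}_t\}$ denote the sequence generated by \eqref{eqn:descent-HB} from $\hat{\vec{x}}_0$, I would show $\hat{\vec{x}}_t = \vec{x}_t - \eta \vec{e}_{t-1}$ by induction on $t$, using the server update (Line~8 of Algorithm~\ref{alg:DQHB}) together with the formula for $\vec{u}_t$ (Line~5); the momentum term $\gamma_{\mathrm{HB}}(\hat{\vec{x}}_t - \hat{\vec{x}}_{t-1})$ on the server side absorbs the $\gamma_{\mathrm{HB}}(\vec{e}_{t-1} - \vec{e}_{t-2})$ component of $\vec{u}_t$, which is precisely why the quantizer input has this compound structure. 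The identity immediately gives $\vec{z}_t = \hat{\vec{x}}_t + \eta \vec{e}_{t-1} = \vec{x}_t$ (\lemref{lma:track-HB}), so by $L$-smoothness $\lnorm \grad \fcn{f}(\vec{z}_t) \rnorm \leq L \lnorm \vec{x}_t - \vec{x}_{\fcn{f}}^* \rnorm$, and invoking \lemref{lma:HB} on $\set{F}_n^2$ supplies a bound of the form $\lnorm \vec{x}_t - \vec{x}_{\fcn{f}}^* \rnorm \leq \sigma_{\mathrm{HB}}^t\, t^\alpha e^\alpha \sqrt{2}\, D$ for some $\alpha > 0$. The polynomial prefactor in $t$ arises because the $2\times 2$ companion matrix governing HB's dynamics can have a repeated eigenvalue equal to $\sigma_{\mathrm{HB}}$, in which case its Jordan block injects an amplification of order $t$.

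Second, I would show by induction on $t$ that the quantizer input $\vec{u}_t$ lies in $\set{B}(r_t)$ with $r_t$ given by \eqref{eqn:range-HB}. Applying the triangle inequality to $\vec{u}_t = \grad \fcn{f}(\vec{z}_t) - [\vec{e}_{t-1} + \gamma_{\mathrm{HB}}(\vec{e}_{t-1} - \vec{e}_{t-2})]$, and using $\lnorm \vec{e}_{t-1}\rnorm \leq r_{t-1}\rho_n 2^{-R}$ and $\lnorm \vec{e}_{t-2}\rnorm \leq r_{t-2}\rho_n 2^{-R}$ from \eqref{eq:e_d} and the induction hypothesis, the defining recurrence \eqref{eqn:range-HB} drops out at once. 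Consequently $\lnorm \vec{e}_t \rnorm \leq r_t\, \rho_n 2^{-R}$, and the tracking identity together with the triangle inequality bounds $\lnorm \hat{\vec{x}}_t - \vec{x}_{\fcn{f}}^* \rnorm$ by a constant multiple of $\max\{\lnorm \vec{x}_t - \vec{x}_{\fcn{f}}^* \rnorm,\, r_{t-1}\}$.

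The final and most delicate step, and where I expect the main obstacle, is to solve the second-order non-homogeneous recurrence \eqref{eqn:range-HB}, whose forcing term $\sigma_{\mathrm{HB}}^t\, t^\alpha e^\alpha \sqrt{2}\, LD$ is no longer purely geometric as it was in \eqref{eqn:range-AGD}. The homogeneous part still has characteristic polynomial \eqref{eqn:char} with $\gamma = \gamma_{\mathrm{HB}}$, whose larger positive root equals $\rho_n 2^{-R} \phi(n, R, \gamma_{\mathrm{HB}})$. A variation-of-parameters argument for linear recurrences then yields a bound of the form $r_t \leq C(n,R)\, \mathrm{poly}(t)\, \max\{\sigma_{\mathrm{HB}},\, \rho_n 2^{-R}\phi(n,R,\gamma_{\mathrm{HB}})\}^t$. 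The polynomial amplification is harmless for the contraction factor defined in \eqref{eq:lcr}, since $(\mathrm{poly}(T))^{1/T} \to 1$ as $T \to \infty$; hence $\limsup_{T\to\infty} \lnorm \hat{\vec{x}}_T - \vec{x}_{\fcn{f}}^* \rnorm^{1/T}$ equals exactly $\max\{\sigma_{\mathrm{HB}}, \rho_n 2^{-R} \phi(n, R, \gamma_{\mathrm{HB}})\}$, establishing \eqref{eqn:sigma-DQ-HB}. The existence of the constant $\alpha > 0$ promised in the theorem statement can be extracted directly from the Jordan-form analysis of the HB companion matrix in the proof of \lemref{lma:HB}.
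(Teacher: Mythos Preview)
Your three-lemma structure (tracking identity, inductive dynamic-range bound, solution of the second-order recurrence) is exactly the paper's route, and your argument is correct. Two small deviations are worth flagging. First, your explanation of the $t^\alpha$ prefactor as a Jordan-block effect of a single $2\times2$ companion matrix is not quite what \lemref{lma:HB} does: for general $\fcn{f}\in\set{F}_n^2$ the linearization via the mean value theorem produces a \emph{time-varying} family $\{\vec{A}_t\}$ (through $\Hess\fcn{f}(\vec{v}_t)$), and the polynomial factor comes from a joint-spectral-radius bound on long products over that bounded family, not from a fixed Jordan form. Second, for the recurrence \eqref{eqn:range-HB} the paper does not use variation of parameters; it introduces the auxiliary sequence $r_t'$ satisfying the \emph{same} recursion but with forcing $\sigma_{\mathrm{HB}}^t e^\alpha\sqrt{2}\,LD$ (no $t^\alpha$), proves $r_t\leq t^\alpha r_t'$ by a one-line monotonicity induction, and then reuses the closed-form solution already obtained for DQ-AGD. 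Your variation-of-parameters route also works and yields the same $\mathrm{poly}(t)\cdot\max\{\sigma_{\mathrm{HB}},\rho_n2^{-R}\phi(n,R,\gamma_{\mathrm{HB}})\}^t$ envelope, but the paper's comparison trick is shorter because it avoids resolving the recursion with a polynomially modulated forcing term from scratch.
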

\begin{proof}[Proof sketch]
The proof is similar to the proof of \thmref{thm:DQ-AGD}. The recurrence relation \eqref{eqn:range-HB} differs from \eqref{eqn:range-AGD} in only the presence of the subexponential factor $t^\alpha$, which does not matter when we take $t \to \infty$ to obtain \eqref{eqn:sigma-DQ-HB}. This factor arises from our nonasymptotic sharpening of Polyak's convergence result for the unquantized HB (Lemma~\ref{lma:HB}). See Appendix~\ref{sec:DQ-HB} for details. 
\end{proof}

DQ-HB exhibits two phase transitions, a behavior similar to DQ-HB and DQ-AGD. The two thresholds are given by $R_1$~\eqref{eqn:suffi-conv-lin-rate} and $R_2$~\eqref{eqn:suffi-same-as-unqtz} evaluated with $\sigma = \sigma_{\mathrm{HB}}$ and $\gamma = \gamma_{\mathrm{HB}}$. %The lower bound in \eqref{eq:R1lb} continues to hold for $R_1(n, \gamma_{\mathrm{HB}})$ since $\gamma_{\mathrm{HB}}$ \eqref{eq:gamma_hb} increases monotonically in $\kappa$. 

Plugging the parameters $\gamma$ and $\sigma$ into \eqref{eqn:suffi-same-as-unqtz}, we can infer that DQ-HB always has the largest $R_2$ \eqref{eqn:suffi-same-as-unqtz} for any condition number $\kappa\geq1$ among the three DQ schemes. On the other hand, whether $R_2$ of DQ-AGD is smaller than that of DQ-GD depends on whether $\kappa$ is smaller than a threshold that is roughly $2.18$. For the unquantized algorithms, contraction factor $\sigma_{\mathrm{HB}}$ of HB is always the smallest among the three for any condition number $\kappa\geq1$. On the other hand, whether $\sigma_{\mathrm{AGD}}$ of AGD is smaller than $\sigma_{\mathrm{GD}}$ of GD depends on whether $\kappa$ is greater than a threshold that is roughly $11.83$. For the differentially quantized algorithms, DQ-GD actually has the best convergence behavior in the transient regime where $R> \log_2 \rho_n$ so that GD converges linearly and $R$ is small enough so that DQ-HB does not yet outperform GD, i.e., $\rho_n 2^{-R} \phi(n, R, \gamma_{\mathrm{HB}}) > \sigma_{\mathrm{GD}}$. This is because DQ-GD is the first among the three DQ algorithms to pass $R_1$ \eqref{eqn:suffi-conv-lin-rate} above which it has linear convergence.

In Fig.~\ref{fig:exp-2}, we compare the performance of the differentially quantized algorithms DQ-HB (Algorithm~\ref{alg:DQHB}), DQ-AGD (Algorithm~\ref{alg:DQAGD}) and DQ-GD (Algorithm~\ref{alg:DQGD_short}) on least-squares problems \eqref{eq:obj_LS}. \hl{We use the same experimental setup as in \figref{fig:exp}, with the uniform scalar quantizer. The stepsize, the interpolation coefficient and the dynamic ranges are set to the values prescribed by Theorems~\ref{thm:DQ-GD} (DQ-GD),~\ref{thm:DQ-AGD} (ADQ-GD), and~\ref{thm:DQ-HB} (ADQ-HB).} We set $\alpha=0$ for Algorithm~\ref{alg:DQHB}, and DQ-HB still converges empirically for this parameter. We also record the performance of the corresponding unquantized gradient methods HB, AGD and GD. The curves exhibit the two phase transitions and comparative performance as discussed above. 
The level lines that the contraction factors of these DQ schemes rest on for $R\geq R_2$ are almost the same as the corresponding linear convergence rates $\sigma$ of their unquantized counterparts. We observe that there is a gap between the worst-case linear convergence rate $\sigma_{\mathrm{AGD}}$ that we design DQ-AGD to follow and the empirical convergence rate of AGD. This is because AGD applies for functions that are not necessarily twice continuously differentiable, and the least-squares problems \eqref{eq:obj_LS} happen not to be a worst-case problem class for AGD.

%For the condition numbers $\kappa$'s in Fig.~\ref{fig:exp-2}, we see that DQ-GD has the smallest first phase-transition threshold \eqref{eqn:suffi-conv-lin-rate}, above which it enjoys linear convergence. DQ-GD also has the smallest second phase-transition threshold \eqref{eqn:suffi-same-as-unqtz}, above which it has the same linear convergence rate as the unquantized GD.
%However, when $R$ is large enough so that all these DQ algorithms have the same convergence behavior as their unquantized counterparts, DQ-HB achieves the smallest linear convergence rate, whereas whether or not DQ-AGD converges faster than DQ-GD depends on whether or not $\kappa$ is greater than a threshold that is roughly $11.83$. %Moreover, these thresholds are shifted a bit to the right since we use the scalar uniform quantizer with covering efficiency $\rho_n=\sqrt{n}$ instead of $\rho_n=1+o_n(1)$ achieved by Rogers' vector quantizer that is hard to implement.

\begin{figure*}[t]
\begin{minipage}{0.5\textwidth}
\centering
\includegraphics[width=0.96\textwidth]{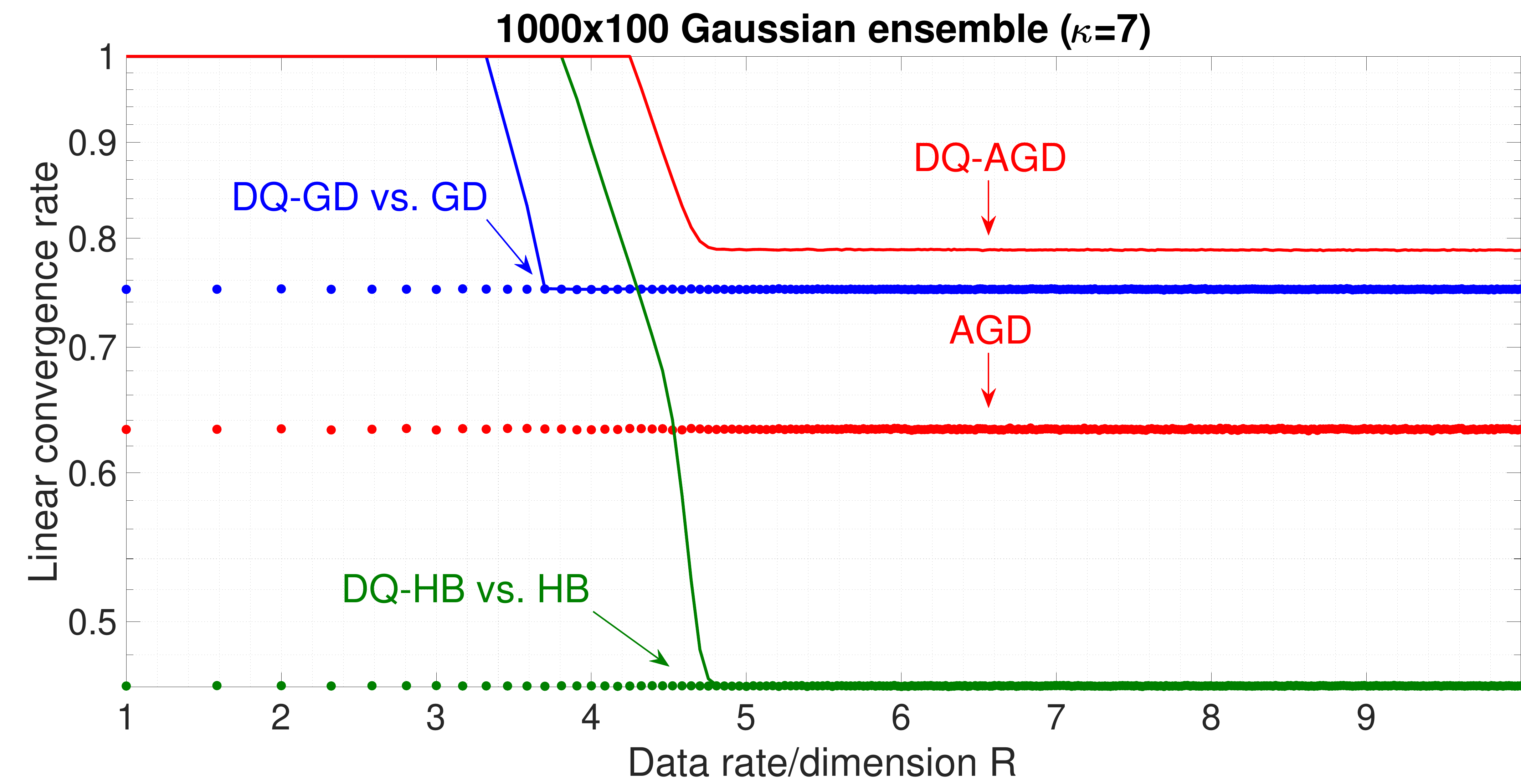}
\end{minipage}%
\begin{minipage}{0.5\textwidth}
\centering
\includegraphics[width=0.96\textwidth]{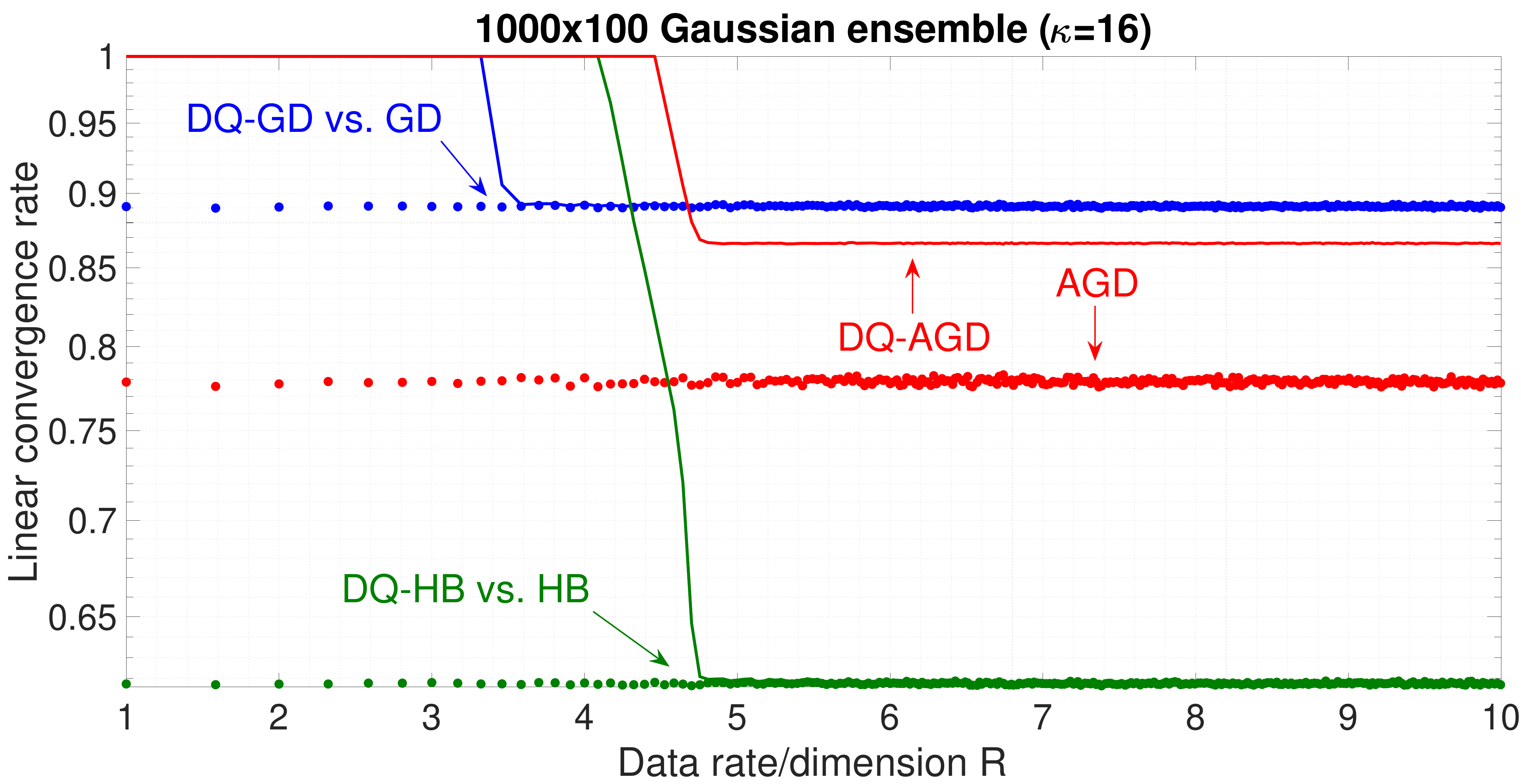}
\end{minipage}
%\caption{Linear convergence rates of various DQ algorithms (plotted as lines) and their corresponding unquantized counterparts (plotted as circles).}
%\label{fig:exp-1}
\end{figure*}

\begin{figure*}[t]
\begin{minipage}{0.5\textwidth}
\centering
\includegraphics[width=0.96\textwidth]{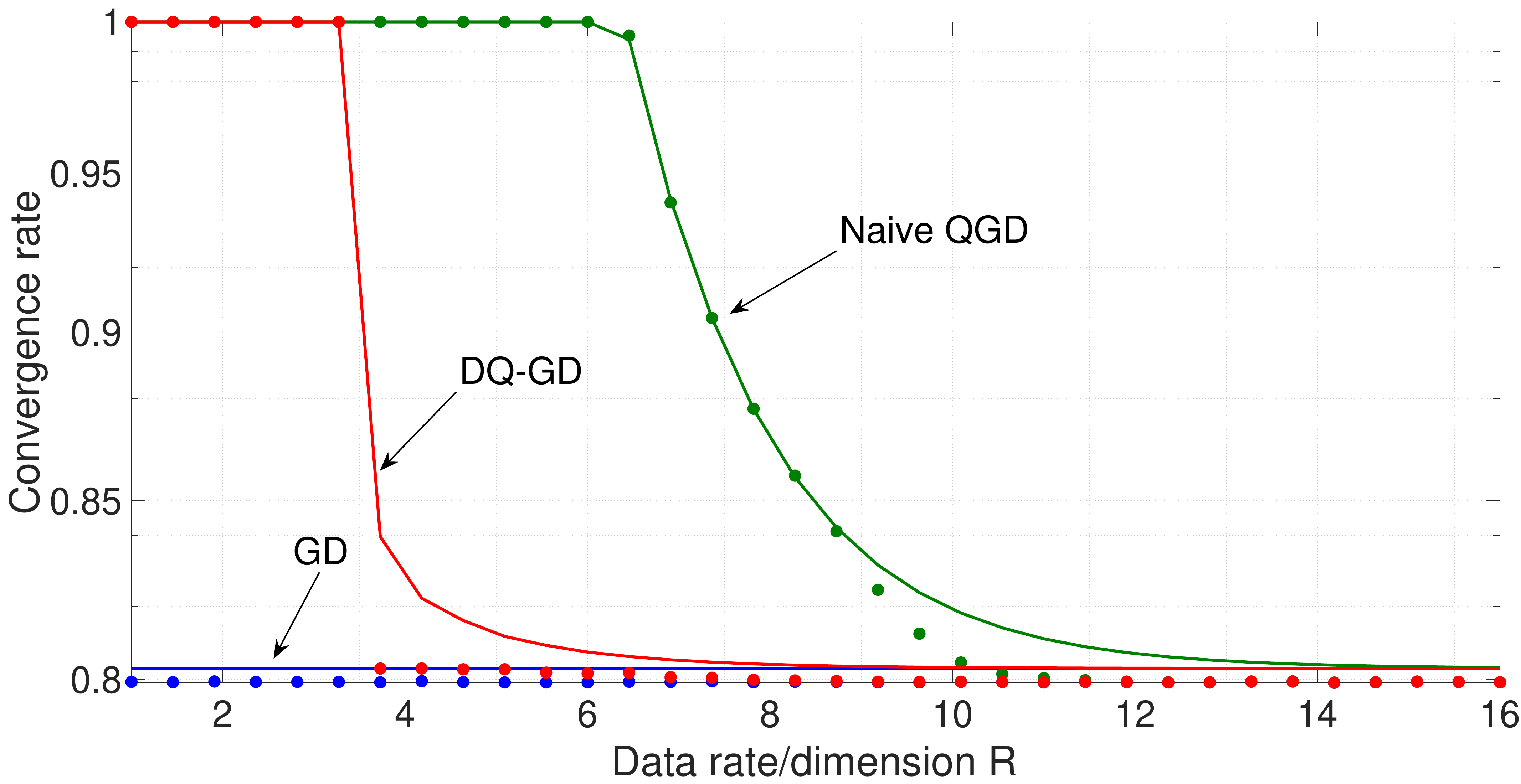}
\end{minipage}%
\begin{minipage}{0.5\textwidth}
\centering
\includegraphics[width=0.96\textwidth]{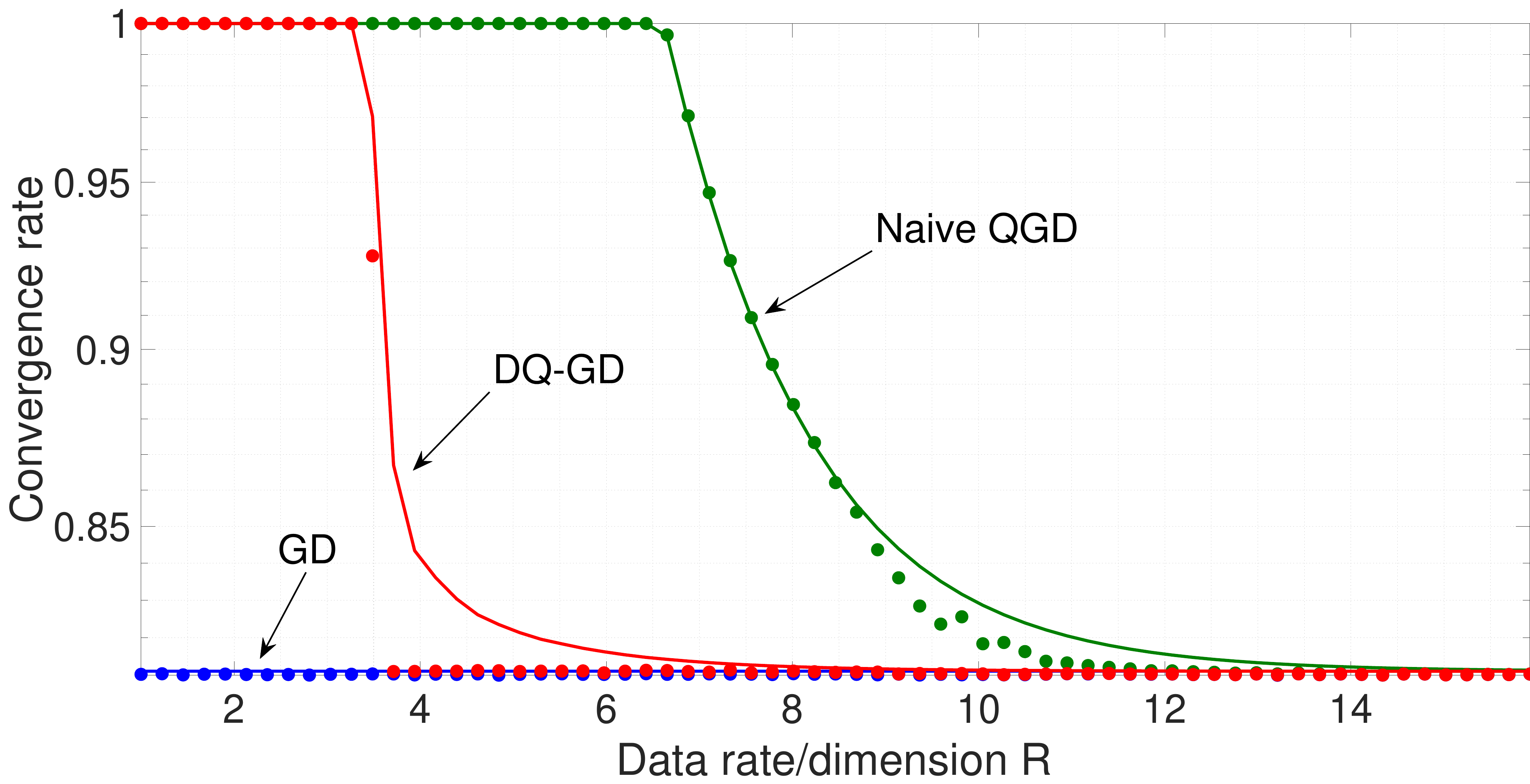}
\end{minipage}
\caption{Empirical contraction factors of various DQ algorithms (plotted as lines) and their corresponding unquantized counterparts (plotted as circles).}
\label{fig:exp-2}
\end{figure*}

\section{Converses}
\label{sec:converse}

\subsection{Quantized gradient descent algorithms}
In this section, we characterize the optimal contraction factor
achievable within class $\mathcal A_{\mathrm{GD}}$ of quantized gradient descent algorithms, formally defined next.

\begin{definition}[Class $\mathcal A_{\mathrm{GD}}$ of quantized gradient descent algorithms]
\label{def:QGD}
A \emph{quantized gradient descent} algorithm $\textrm A \in \mathcal A_{\mathrm{GD}}$ consists of a central server and an end worker. The algorithm is initialized with a collection of quantizers $\fcn{q}$ indexed by rate $R$ such that   $d(\fcn{q}) \to 0$ \eqref{eq:DR} as $R \to \infty$ and a sequence of dynamic ranges $\{r_t\}_{t = 1}^\infty$.  The worker has access to the function~$\msf{f}$. At each iteration $t$, the server first sends $\hat{\bm{x}}_t$ to the worker noiselessly, starting from some $\hat{\bm{x}}_0\in\mbb{R}^n$. The worker then determines its gradient-access point $\bm{z}_{t}$ and its quantizer input $\bm{u}_{t}$ under the structural constraints
\begin{align}
\bm{z}_{t} &\in \hat{\bm{x}}_t + \spn\lbp \bm{e}_{0}\,,\ldots,\bm{e}_{t-1}\rbp \label{eq:query_condi} \\
\bm{u}_{t} &\in \nabla\msf{f}(\bm{z}_{t}) + \spn\lbp \bm{e}_{0}\,,\ldots,\bm{e}_{t-1}\rbp, \label{eq:input_condi}
\end{align}
where $\bm{e}_i \triangleq \bm{q}_i-\bm{u}_i, i = 0, \ldots, t - 1$ are the past quantization errors before iteration $t$, and $+$ denotes Minkowski's sum. Upon receiving $\bm{q}_{t} = \fcn{q}_t(\bm{u}_t)$ \eqref{eqn:quant-scale} from the worker, the server performs the update
\begin{equation}
\label{eq:QGD}
\hat{\bm{x}}_{t+1} = \hat{\bm{x}}_t - \eta\bm{q}_t
\end{equation}
with a fixed stepsize $\eta>0$.
\end{definition}
Due to conditions \eqref{eq:query_condi} and \eqref{eq:input_condi},  if there is no quantization error at each iteration (i.e., if $R = \infty$), then any quantized algorithm in $\mathcal A_{\mathrm{GD}}$ reduces to the unquantized gradient descent. Both DQ-GD and NQ-GD fall in the class $\mathcal A_{\mathrm{GD}}$. 

\begin{theorem}[Converse within class $\mathcal A_{\mathrm{GD}}$]
\label{thm:converseGD}
The contraction factor achievable over functions $\mathsf f \in \mathcal F_n$ within class $\mathcal A_{\mathrm{GD}}$ of algorithms satisfies
\begin{equation}
 \inf_{\textrm A \in \mathcal A_{\mathrm{GD}}}\sigma_{\textrm A}(n, R) \geq \max\bp{\sigma_{\mathrm{GD}}, 2^{-R}}
 \label{eq:converseGD}
 \end{equation}
\end{theorem}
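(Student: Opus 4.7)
The plan is to establish the two lower bounds in \eqref{eq:converseGD} separately, each at an arbitrary rate $R' \leq R$ so that the infimum in \eqref{eq:lcr} inherits them. Without loss of generality I take $\hat{\bm{x}}_0 = \bm{0}$, in which case condition \eqref{eq:iter_opt_range} becomes $\bm{x}_{\fcn{f}}^* \in \mcal{B}(D)$.

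For the information-theoretic bound $\sigma_{\textrm A}(n,R) \geq 2^{-R}$, I would apply a volumetric covering argument. Fix a unit vector $\bm{v}$, and for each $\bm{x}^* \in \mcal{B}(D)$ consider the quadratic $\fcn{f}_{\bm{x}^*}(\bm{x}) = \frac{\mu}{2}\norm{\bm{x}-\bm{x}^*}^2 + \frac{L-\mu}{2}|\bm{v}^{\T}(\bm{x}-\bm{x}^*)|^2$, whose Hessian $\mu\mathbf{I} + (L-\mu)\bm{v}\bm{v}^{\T}$ has eigenvalues $\mu$ (with multiplicity $n-1$) and $L$, placing $\fcn{f}_{\bm{x}^*}$ in $\mathcal{F}_n$ with unique minimizer $\bm{x}^*$. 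Over $T$ iterations the server receives $nR'T$ bits total, so its output $\hat{\bm{x}}_T(R')$ belongs to a deterministic set of at most $N = 2^{nR'T}$ distinct values as $\bm{x}^*$ varies across $\mcal{B}(D)$. By a volume comparison, the covering radius of $\mcal{B}(D)$ by any $N$-point set is at least $D \cdot N^{-1/n} = D \cdot 2^{-R'T}$, so there exists $\bm{x}^* \in \mcal{B}(D)$ for which $\norm{\hat{\bm{x}}_T(R')-\bm{x}^*} \geq D \cdot 2^{-R'T}$. Raising to the $1/T$-th power and letting $T \to \infty$ produces $2^{-R'}$, and infimizing over $R' \leq R$ yields $2^{-R}$.

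For the classical bound $\sigma_{\textrm A}(n,R) \geq \sigma_{\mathrm{GD}}$, I would use the hard quadratic $\fcn{f}(\bm{x}) = \frac{L}{2}x_1^2 + \frac{\mu}{2}\sum_{i=2}^n x_i^2$ with $\hat{\bm{x}}_0$ having balanced nontrivial components along the first two axes, and crucially exploit that $\mathcal{A}_{\mathrm{GD}}$ forces the server to apply a single fixed stepsize $\eta > 0$ and retain no memory beyond $\hat{\bm{x}}_t$. The recursion $\hat{\bm{x}}_{t+1} = \hat{\bm{x}}_t - \eta\bm{q}_t$ decomposes coordinate-wise with a common $\eta$. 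In the high-rate limit $R' \to \infty$, the hypothesis $\mathsf{d}(\fcn{q}) \to 0$ combined with \eqref{eq:query_condi}--\eqref{eq:input_condi} forces $\bm{q}_t \to \grad\fcn{f}(\hat{\bm{x}}_t)$, so the algorithm collapses to unquantized GD whose worst-eigendirection contraction on this quadratic is $\max\{|1-\eta\mu|,|1-\eta L|\}$, minimized over $\eta$ at $\sigma_{\mathrm{GD}}$ and attained at $\eta = 2/(L+\mu)$. For finite $R'$, I would argue that the same floor persists because a single fixed stepsize shared across coordinates cannot be implicitly retuned per eigendirection using only $nR'$ quantizer bits per iteration without sacrificing accuracy in the other coordinate.

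The main obstacle is the finite-$R'$ portion of the GD bound: rigorously ruling out that an elaborate error-feedback scheme, via its freedom to choose $\bm{z}_t$ and $\bm{u}_t$ inside the affine spans \eqref{eq:query_condi}--\eqref{eq:input_condi}, effectively emulates server-side momentum or per-coordinate adaptive stepsizes---both of which would be needed to beat $\max\{|1-\eta\mu|,|1-\eta L|\}$ even in the noiseless case. My plan is to track the joint state $(\hat{x}_{j,t},e_{j,t-1})_{j=1,2}$ as a coupled recursion and, via a minimax eigenvalue argument over this two-direction system driven by the bounded quantizer output, show that the worst-direction contraction inherits the classical floor $\max\{|1-\eta\mu|,|1-\eta L|\} \geq \sigma_{\mathrm{GD}}$ irrespective of the quantizer family. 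The volumetric bound, in contrast, is standard, its only technical check being membership $\fcn{f}_{\bm{x}^*} \in \mathcal{F}_n$ for every $\bm{x}^* \in \mcal{B}(D)$.
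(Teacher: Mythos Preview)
Your covering argument for the $2^{-R}$ bound is essentially the paper's: both count the reachable outputs after $T$ rounds as at most $2^{nR'T}$ and invoke a volume comparison on $\set{B}(D)$. Your explicit quadratic family $\fcn{f}_{\bm{x}^*}$ is a perfectly good way to certify that every $\bm{x}^*\in\set{B}(D)$ arises as a minimizer in $\set{F}_n$; the paper leaves this implicit.

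The $\sigma_{\mathrm{GD}}$ half is where you diverge, and where the gap in your proposal lies. You correctly observe that in the high-rate limit the structural constraints \eqref{eq:query_condi}--\eqref{eq:input_condi} collapse the algorithm to plain GD, but you then set out to reprove the same floor at every \emph{finite} $R'$ by a direct two-eigendirection minimax analysis of the joint state $(\hat{x}_{j,t},e_{j,t-1})$. You flag this yourself as the main obstacle, and rightly so: once the error span has grown, the worker has genuine freedom in choosing $\vec{z}_t$ and $\vec{u}_t$, and ruling out that this freedom emulates per-direction stepsizes is not obviously tractable by the recursion you describe.

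The paper sidesteps this completely by monotonicity. By definition \eqref{eq:lcr}, $\sigma_{\textrm A}(n,R)=\inf_{R'\leq R}(\cdots)$ is non-increasing in $R$, hence $\sigma_{\textrm A}(n,R)\geq\sigma_{\textrm A}(n,\infty)$ for every $\textrm A\in\set{A}_{\mathrm{GD}}$. At infinite rate the quantizer family satisfies $\fcn{d}(\fcn{q})\to0$ by the standing hypothesis in \defnref{def:QGD}, so every $\vec{e}_t=\vec{0}$, the spans in \eqref{eq:query_condi}--\eqref{eq:input_condi} degenerate to $\{\vec{0}\}$, and the only algorithm left is unquantized GD with the fixed stepsize $\eta$ from \eqref{eq:QGD}. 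A single worst-case least-squares instance (the paper constructs one in \lemref{lma:GD_opt}, valid for any starting point and any constant $\eta$) then yields the floor $\sigma_{\mathrm{GD}}$. In short: rather than fighting the error-feedback freedom at finite rate, push the whole burden to the unquantized limit via monotonicity in $R$; your high-rate paragraph already contains the needed ingredient, and the finite-$R'$ minimax plan is unnecessary.
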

\begin{proof}[Proof sketch]
We fix an $\textrm A \in \mathcal A_{\mathrm{GD}}$, and we lower-bound the contraction factor it achieves at rate $R$ in two different ways. On one hand, we show that $\textrm A$ cannot converge faster than the unquantized GD.  Then, we use an argument similar to \cite{Klerk-17} to craft a worst-case problem instance $\msf{g} \in\set{F}_n$ for which the iterates of the unquantized GD satisfy $\lnorm \bm{x}_{t+1}-\bm{x}_{\msf{g}}^*\rnorm = \sigma_{\mathrm{GD}}\lnorm \bm{x}_t-\bm{x}_{\msf{g}}^*\rnorm$, which ensures that $ \inf_{\textrm A \in \mathcal A_{\mathrm{GD}}}\sigma_{\textrm A}(n, R) \geq \sigma_{\mathrm{GD}}$ \hl{(Lemma~\ref{lma:GD_opt})}. On the other hand, we notice that if $\textrm A$ is applied at dimension $n$ and rate $R$, then the set $\mcal{S}_{\text{A}} \subseteq \mathbb R^n$ of all possible locations of the iterate $\hat{\bm{x}}_T$ after $T$ iterations of $\textrm A$ has cardinality at most $2^{nRT}$, and we apply a volume-division argument to claim that $ \inf_{\textrm A \in \mathcal A_{\mathrm{GD}}}\sigma_{\textrm A}(n, R) \geq 2^R$. See Appendix~\ref{sec:DQ-GDc} for details.
\end{proof}
Applying Theorem~\ref{thm:DQ-GD} with Rogers-optimal quantizers with $\rho_n \to 1$ \cite[Th. 3]{Rogers-63} and juxtaposing with Theorem~\ref{thm:converseGD},  we characterize the optimal contraction factor achievable by quantized gradient descent in the limit of large problem dimension as 
\begin{equation}
\lim_{n \to \infty} \inf_{\textrm A \in \mathcal A_{\mathrm{GD}}}\sigma_{\textrm A}(n, R) = \max\bp{\sigma_{\mathrm{GD}}, 2^{-R}}.
\label{eq:ratefnGD}
\end{equation}
In other words, DQ-DG achieves the best possible contraction factor within $\mathcal A_{\mathrm{GD}}$, in the limit of large problem dimension. This is rather remarkable: it means not only that DQ-DG compensates previous quantization errors optimally so that no rate is wasted, but that our convergence analysis in Theorem~\ref{thm:DQ-GD} is tight enough to capture this optimality. Furthermore, notice that the right side of \eqref{eq:ratefnGD} is $< 1$ at any $R>0$. This means that at any $R > 0$ however small, DQ-DG with Rogers-optimal quantizers converges linearly at a large enough problem dimension $n$, i.e. the first phase transition \eqref{eqn:suffi-conv-lin-rate} dissappears.  

\hl{Although the notion of a contraction factor \eqref{eq:lcr} and thus the result in \eqref{eq:ratefnGD} are asymptotic in the number of iterations $T$, the achievability results in Appendix~\ref{sec:DQ-DGa} and converse results in Appendix~\ref{sec:DQ-DGa} used to derive \eqref{eq:ratefnGD} are nonasymptotic. They show that gap between the achievability and converse bounds on the finite-$T$ counterpart of \eqref{eq:ratefnGD} is $O \left(\frac 1 T \right)$. Whether DQ-GD remains optimal at finite $T$ remains an open problem.}

\subsection{Quantized gradient methods}
All quantized algorithms considered in this paper fall in the following class. 
\begin{definition}[Class $\mathcal A_{\mathrm{GM}}$ of quantized gradient methods]
\label{def:QGM}
A \emph{quantized gradient method} $\textrm A \in \mathcal A_{\mathrm{GM}}$ follows Definition~\ref{def:QGD} with \eqref{eq:QGD} relaxed to
\begin{equation}
\label{eq:QGM}
\hat{\bm{x}}_{t+1} \in \hat{\bm{x}}_0 + \mrm{span}\lbp \bm{q}_0,\ldots,\bm{q}_t\rbp.
\end{equation}
 \end{definition}

In the absence of rate constraints, there are no quantization errors, i.e. $\vec{e}_{t}=\vec{0}$ for all $t$, and the class of quantized gradient methods reduces to the class of unquantized gradient methods satisfying
\begin{equation}
\label{eqn:descent-GM}
\vec{x}_{t+1} \in \vec{x}_0 + \mrm{span}\bp{\grad\fcn{f}(\vec{x}_0),\ldots,\grad\fcn{f}(\vec{x}_t)}.
\end{equation}

To present our converse result for $\mathcal A_{\mathrm{GM}}$, we consider functions $\mathsf f$ defined on the square-summable Hilbert space\footnote{We do so to take advantage of the sharpest converse in the literature on the convergence of unquantized gradient methods \eqref{eqn:descent-GM} \cite{Nesterov-14} (\lemref{lma:lower-Nesterov}), which applies to functions on $\mathbb L_2$. Convergence lower bounds for smooth and strongly convex functions on $\R^n$ (rather than $\mathbb L_2$) are also known \cite{Arjevani-16}. However, \cite{Arjevani-16} considers only quadratic functions as objectives, and the considered class of iterative algorithms is more restrictive than \eqref{eqn:descent-GM} in that the next iterate $\vec{x}_{t+1}$ depends on the past $p$ terms $\vec{x}_t,\ldots,\vec{x}_{t-p+1}$ for some fixed $p\in\N$.
}
\begin{equation}
\label{eqn:class-iter}
\textstyle
\mathbb L_2 \triDef \bp{\vec{x}=[\vec x(1),\vec x(2),\ldots] \colon \sum_{i=1}^{\infty}\vec x(i)^2 < \infty}.
\end{equation}
We say that continuously differentiable function $\mathsf f \colon \mathbb L_2 \mapsto \mathbb R$ is in class $\mathcal F_{\infty}$ if it is $L$-smooth, $\mu$-strongly convex and its minimizer is bounded, i.e., $\mathsf f$ satisfies i)-iii) in \secref{sec:functiondef}. 

To quantize an infinitely long vector $\vec{u}\in \mathbb L_2$ to $\vec{q}\in \mathbb L_2$, we fix a free parameter $n \in \mathbb N$, apply a rate-$R$ quantizer $\fcn{q}$ in $\R^n$ \eqref{eqn:def-qnt-rate} to the first $n$ coordinates of $\vec{u}$, and set the remaining coordinates to $0$, i.e., 
\begin{equation}
\begin{cases}
[ \vec q (1),\ldots,\vec q (n) ] &= \fcn{q}\left( [ \vec u(1),\ldots, \vec u(n)] \right) \\
~~~~~~~~~~~~~~\vec q (i) &= 0 \quad\forall i>n, \label{eqn:quant-l2-null}
\end{cases}
\end{equation}
where $\vec u(i)$ denotes $i$-th coordinate of vector $\vec u \in \mathbb L_2$. 

Although only $n$ coordinates $\vec{u}\in\ell_2$ are quantized, we can still control the overall quantization error since in $\mathbb L_2$,
\begin{equation}
\label{eqn:l2-tail}
\sum_{i>n}\vec u (i)^2 = o_n(1)
\end{equation}
due to the Cauchy convergence criterion. Here $o_n(1)$ denotes a function that vanishes as $n \to \infty$. Thus, \eqref{eq:e_d} continues to hold for quantization in $\mathbb L_2$ with $\rho_n$ replaced by $\rho_n + o_n(1)$. It follows that the achievability bounds in Theorems~\ref{thm:DQ-AGD} and~\ref{thm:DQ-HB} with with $\rho_n$ replaced by $\rho_n + o_n(1)$ apply to functions $\mathsf f \in \mathcal F_{\infty}$. 

Contraction factor $\sigma_{\textrm A}(n, R)$ over $\mathcal F_{\infty}$ is defined in the same way as that over $\mathcal F_n$ \eqref{eq:lcr} except that $n$ is now a parameter of the employed quantizer (like $\rho_n$) rather than the dimension of the problem, and the total number of bits sent per iteration is $n R$, where $R$ is the quantizer's rate \eqref{eqn:def-qnt-rate}.

\begin{theorem}[Converse within class $\mathcal A_{\mathrm{GM}}$]
\label{thm:converseGM}
The contraction factor achievable over functions $\mathsf f \in \mathcal F_{\infty}$ within class $\mathcal A_{\mathrm{GM}}$ of algorithms satisfies
\begin{equation}
 \inf_{\textrm A \in \mathcal A_{\mathrm{GM}}}\sigma_{\textrm A}(n, R) \geq \max\bp{\sigma_{\mathrm{HB}}, 2^{-R}}
 \label{eq:converseGM}
 \end{equation}
 where $\sigma_{\mathrm{HB}}$ is given in \eqref{eqn:conv-lin-HB}.
\end{theorem}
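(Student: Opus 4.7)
The plan is to establish the two lower bounds $\sigma_{\textrm A}(n,R) \geq \sigma_{\mathrm{HB}}$ and $\sigma_{\textrm A}(n,R) \geq 2^{-R}$ separately, mirroring the two-pronged strategy of the proof sketch of \thmref{thm:converseGD}; the stated bound is then just the maximum of the two.

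For $\sigma_{\textrm A}(n,R) \geq \sigma_{\mathrm{HB}}$, I would invoke Nesterov's classical lower bound on $\mathbb L_2$ (\lemref{lma:lower-Nesterov}), which supplies a hard instance $\mathsf{g} \in \mathcal F_{\infty}$ whose unique minimizer $\vec{x}^*_{\mathsf{g}}$ has nonzero coordinates that decay like $\sigma_{\mathrm{HB}}^{i}$. The key additional structural ingredient for the quantized case is that the $\mathbb L_2$ quantization convention \eqref{eqn:quant-l2-null} pins every quantizer output $\vec{q}_t$ into the fixed $n$-dimensional coordinate subspace $V_n \triangleq \spn\{\vec{e}_1,\dots,\vec{e}_n\}$, so the update rule \eqref{eq:QGM} confines every iterate of every $\textrm A \in \mathcal A_{\mathrm{GM}}$ to $\hat{\vec{x}}_0 + V_n$. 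Because $\vec{x}^*_{\mathsf{g}}$ has a strictly positive tail outside $V_n$, I conclude $\norm{\hat{\vec{x}}_T - \vec{x}^*_{\mathsf{g}}} \geq \norm{P_{V_n^{\perp}}\vec{x}^*_{\mathsf{g}}} > 0$ for every $T$, whence $\limsup_{T} \norm{\hat{\vec{x}}_T - \vec{x}^*_{\mathsf{g}}}^{1/T} \geq 1 \geq \sigma_{\mathrm{HB}}$ and the first bound follows.

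For $\sigma_{\textrm A}(n,R) \geq 2^{-R}$, I would re-run the cardinality-and-volume-division argument of \thmref{thm:converseGD} inside $\mathbb L_2$. Because $\textrm A$ is deterministic and each rate-$R$ codebook has $2^{nR}$ elements, the set $\mcal{S}_{\textrm A,T}$ of possible iterates after $T$ steps satisfies $|\mcal{S}_{\textrm A,T}| \leq 2^{nRT}$; by the same subspace observation as above, all such iterates lie in $\hat{\vec{x}}_0 + V_n$. I would then instantiate the quadratic family $\mathsf{f}_{\theta}(\vec{x}) = \frac{\mu}{2}\norm{\vec{x}-\theta}^{2}$ for $\theta \in V_n \cap (\hat{\vec{x}}_0 + \set{B}(D))$, each of which lies in $\mathcal F_{\infty}$ with minimizer $\theta$. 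Comparing the $n$-dimensional Lebesgue volume of $\set{B}(D)$ with the union of $2^{nRT}$ covering balls of radius $\epsilon$ forces $\epsilon \geq c\,D\,2^{-RT}$ for an absolute constant $c > 0$, so at least one $\theta$ in the family satisfies $\norm{\hat{\vec{x}}_T^{\theta} - \theta} \geq c\,D\,2^{-RT}$; taking the $T$-th root and sending $T\to\infty$ yields $\sigma_{\textrm A}(n,R) \geq 2^{-R}$.

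The main obstacle is conceptual rather than computational: the subspace-confinement observation already yields $\sigma = 1$ on the Nesterov instance, so the first bound is essentially automatic once the $\mathbb L_2$ framework is committed to, but this is unsatisfying because it does not really exploit the sharp $\sigma_{\mathrm{HB}}$ rate from \lemref{lma:lower-Nesterov}. A more refined implementation---needed if one wants the tight Nesterov threshold to survive in the asymptotic regime where $n$ grows and quantizers with $\rho_n \to 1$ are employed---would deploy a family of truncated Nesterov-like instances whose minimizers are supported on $V_n$, combined with an unquantized-simulator comparison that carefully respects the tail bookkeeping \eqref{eqn:l2-tail} and does not inflate any absolute constants hidden in the $c$ of the volume argument.
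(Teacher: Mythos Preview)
Your proposal is correct, and the volume-division half for the $2^{-R}$ bound mirrors the paper's argument essentially verbatim. Your route to the $\sigma_{\mathrm{HB}}$ bound, however, differs from the paper's. The paper argues via monotonicity: since $\sigma_{\textrm A}(n,R)$ is non-increasing in $R$ by the definition \eqref{eq:lcr}, one has $\inf_{\textrm A}\sigma_{\textrm A}(n,R) \geq \inf_{\textrm A}\sigma_{\textrm A}(n,\infty)$, and at infinite rate the class $\mathcal A_{\mathrm{GM}}$ collapses (via $\vec{e}_t \equiv \vec{0}$ in \eqref{eq:query_condi}--\eqref{eq:input_condi}) to the unquantized gradient methods \eqref{eqn:descent-GM}, to which \lemref{lma:lower-Nesterov} applies directly---exactly the step that replaced \lemref{lma:GD_opt} in the proof of \thmref{thm:converseGD}.

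Your subspace-confinement argument instead exploits the specific form of the $\mathbb L_2$ quantizer \eqref{eqn:quant-l2-null}: because every $\vec{q}_t$ lives in the first $n$ coordinates, \eqref{eq:QGM} traps the iterates in $\hat{\vec{x}}_0 + V_n$, so on Nesterov's instance (whose minimizer has full support) the algorithm cannot converge at all and you get $\sigma \geq 1$. This is valid and stronger than needed, but---as you yourself observe---it leans on the fixed-coordinate quantization convention rather than on the intrinsic rate--convergence tradeoff that \lemref{lma:lower-Nesterov} encodes. The paper's monotonicity reduction is more portable (it would survive, e.g., a model where the worker adaptively selects which $n$ coordinates to transmit), whereas your argument is quicker in the present setup and incidentally surfaces a mild degeneracy of the fixed-coordinate $\mathbb L_2$ model. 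A minor technical slip: for general $\hat{\vec{x}}_0$ the correct lower bound is $\norm{P_{V_n^\perp}(\hat{\vec{x}}_0 - \vec{x}^*_{\mathsf g})}$, not $\norm{P_{V_n^\perp}\vec{x}^*_{\mathsf g}}$; this is handled by the usual translation of Nesterov's construction, and similarly your $\theta$-family for the volume step should sit in $\hat{\vec{x}}_0 + (V_n \cap \set{B}(D))$ rather than $V_n \cap (\hat{\vec{x}}_0 + \set{B}(D))$.
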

\begin{proof}[Proof sketch]
The proof is similar to that of \thmref{thm:converseGD}: we apply a volume-division argument to recover the $2^{-R}$ in the right side of \eqref{eq:converseGM}, and we apply a known result on unquantized gradient methods that states that the best contraction factor achievable over $\mathcal F_{\infty}$ is that of the heavy ball method, $\sigma_{\mathrm{HB}}$ \cite{Nesterov-14} (\lemref{lma:lower-Nesterov}). See Appendix~\ref{sec:DQ-GMc}. 
\end{proof}

Together, Theorems~\ref{thm:converseGM} and~\ref{thm:DQ-HB} imply that for any $R \geq R_2(\infty, \sigma_{\mathrm{HB}}, \gamma_{\mathrm{HB}})$, DQ-HB attains the optimal contraction factor within $\mathcal A_{\mathrm{GM}}$ (under the additional assumption that $\mathsf f \in \mathcal F_{\infty}$ is twice continuously differentiable). 
\hl{The nonasymptotic achievability and converse results in Appendices~\ref{sec:DQ-HB} and~\ref{lma:lower-Nesterov} used to show this result leave open the question of whether DQ-HB is optimal at finite number of iterations $T$, as they determine the finite-length analog of the optimal contraction factor only with accuracy $O \left( \frac{\log T}{T}\right)$. }

\section{Multi-worker gradient methods}
\label{sec:multiworker}
\subsection{Problem setup}
In empirical risk minimization \cite{BoydVandenberghe-04,AbuMostafa-12}, the sample average of the loss function on the data points
\begin{equation}
\label{eq:obj}
\msf{f}(\bm{x}) = \frac{1}{K}\sum_{k=1}^K\msf{f}_k(\bm{x})
\end{equation}
arises as a substitute for the expected loss on the true data distribution that is often unknown. In multi-worker distributed empirical risk minimization, each worker has access to only one of the summands in \eqref{eq:obj}, and they communicate to the parameter server under rate constraints. See Figure~\ref{fig:diagramK}.  

\begin{figure}[t]
\centering
\includegraphics[width=0.40\textwidth]{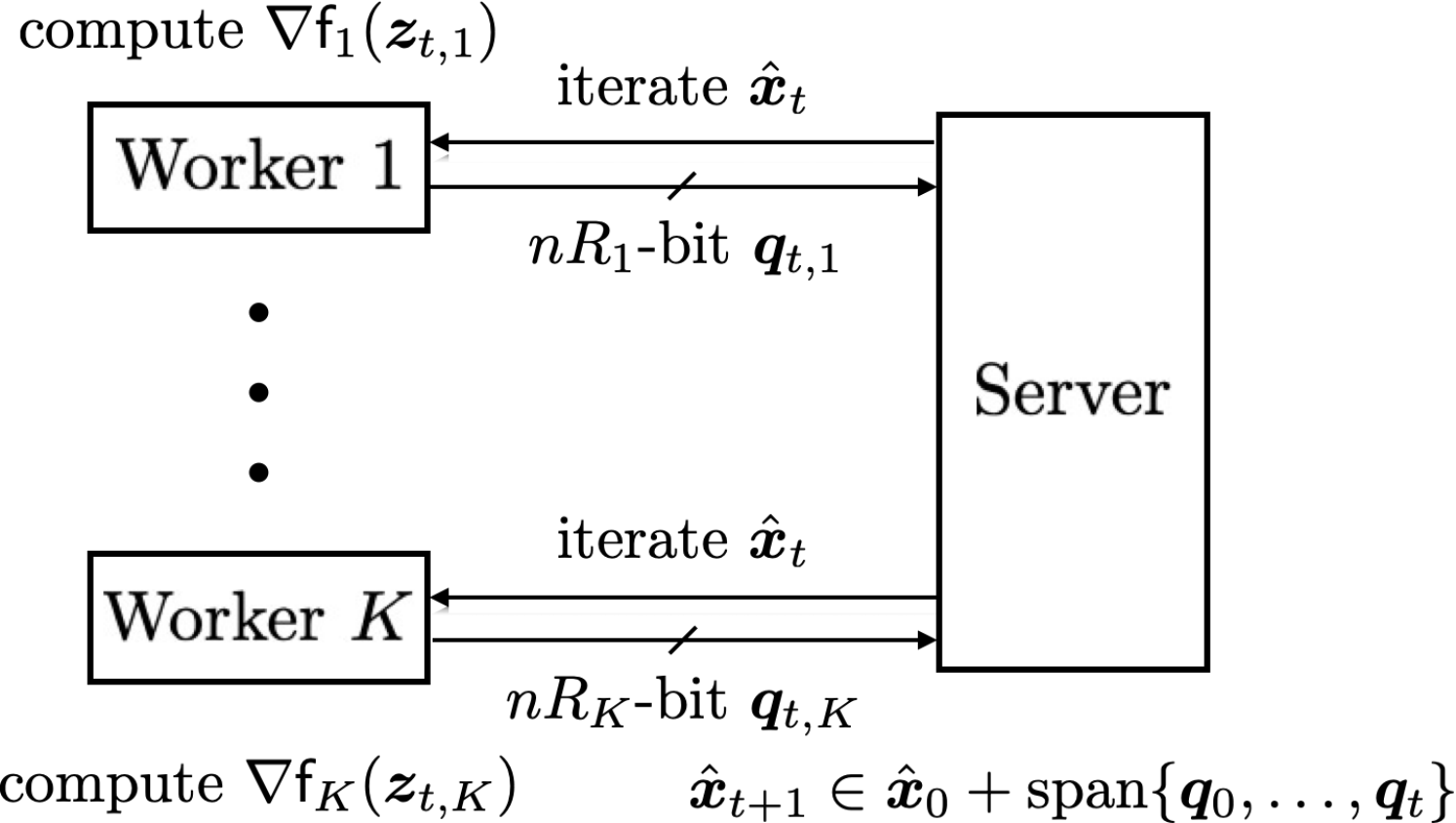}
\caption{$K$-worker quantized gradient method. At each iteration $t$, the server broadcasts the current iterate $\hat{\bm{x}}_t$. Worker $k$ computes the gradient at some point $\bm{z}_{t,k}$ that is a function of (but not necessarily equal to) $\hat{\bm{x}}_t$. Then, worker $k$ forms a descent direction $\bm{q}_{t,k}$ and pushes it back to the server under an $nR_k$-bit rate constraint.}
\label{fig:diagramK}
\end{figure}

\subsection{Converses}
Definitions~\ref{def:QGD} and~\ref{def:QGM} extend naturally to the $K$-worker setting.
Converses in Theorems~\ref{thm:converseGD} and ~\ref{thm:converseGM} extend verbatim to the multi-worker setting where the workers' rates satisfy the sum-rate constraint
\begin{equation}
\label{eq:rate_condi}
\sum_{k=1}^K R_k \leq R.
\end{equation}

\subsection{Differential quantization}
Differential quantization does not apply to $K$-worker quantized gradient methods since each worker does not know the local quantization errors stored by the others, and thus cannot guide the descent trajectory back to the unquantized path. Thus, whether \eqref{eq:converseGD} and \eqref{eq:converseGM} are attainable in the multiworker setting, and how each worker should optimally compensate its own past quantization errors, remain open problems.

\subsection{Naively Quantized Gradient Descent}
The Naively Quantized Gradient Descent applies a common method of quantizing distributed gradient algorithms \cite{FriedlanderSchmidt-12,Alistarh-17,Wen-17,Bernstein-18,MayekarTyagi-19,RamezaniKebrya-19} in which each worker quantizes the gradient of the current iterate, to GD. It is summarized as Algorithm~\ref{alg:NQGD}.     

\begin{algorithm2e}[ht]
\caption{$K$-worker NQ-GD}
\label{alg:NQGD}
   \DontPrintSemicolon
   \SetKw{KwW}{{\small\textsf{Worker $k$}}:}
   \SetKw{KwS}{{\small\textsf{Server}}:}
   \SetKwIF{IfW}{ElseIfW}{ElseW}{if}{then}{else if}{\KwW}{endif}
   \For{$t = 0, 1, 2, \ldots$}{
      \For{$k=1$ \KwTo $K$}{
         \SetAlgoVlined
         \ElseW{
         %$\bm{u}_{t,k}=\nabla\msf{f}_k(\hat{\bm{x}}_t)$\;
          % $\bm{q}_{t,k}=\msf{q}_{t,k}(\bm{u}_{t,k})$\;
             $\bm{q}_{t,k}=\msf{q}_{t,k}(\nabla\msf{f}_k(\hat{\bm{x}}_t))$\;
         }
         \KwS $\hat{\bm{x}}_{t+1}\la\hat{\bm{x}}_t-\frac{\eta}{K}\sum_{k=1}^K\bm{q}_{t,k}$\;
      }
   }
\end{algorithm2e}

Our convergence result for NQ-GD holds under the following assumptions. We assume that continuously differentiable summands $\msf{f}_k$ in \eqref{eq:obj} are (i) $L_k$-smooth and (ii) $\mu_k$-strongly convex, and we continue to assume that (iii) the optimizer of $\msf f$ is bounded as in \eqref{eq:iter_opt_range}. Note that $\msf{f}$ is $L$-smooth and $\mu$-strongly convex with
\begin{equation}
\label{eq:smooth_strcvx_ave}
L \eqDef \frac{1}{K}\sum_{k=1}^KL_k \quad\text{ and }\quad \mu \eqDef \frac{1}{K}\sum_{k=1}^K\mu_k.
\end{equation}
Further, we focus on the \emph{interpolation setting} \cite{SchmidtLeRoux-13,Needell-14,Ma-18} that assumes (iv)
\begin{equation}
\label{eqn:condi-str-grow}
\vec{x}_{\fcn{f}}^* = \vec{x}_{\fcn{f}_k}^* \quad\forall k=1,\ldots,K,
\end{equation}
where
\begin{equation}
\label{eqn:iter-opt-local}
\vec{x}_{\fcn{f}_k}^* \triDef \argmin_{\vec{x}\in\R^n} \fcn{f}_k(\vec{x}).
\end{equation}
The interpolation setting is motivated by the observation that almost all local minima are also global in an over-parametrized neural network with a very large data dimension $n$ \cite{Ma-18}, and is implied  We denote by $\mathcal G_n$ the class of functions $\fcn f$ \eqref{eq:obj} that satisfy the assumptions (i)-(iv). 

The minimum contraction factor achievable by $K$-worker NQ-GD under the sum rate constraint \eqref{eq:rate_condi} is given by
\begin{align}
&~\sigma_{\textrm {NQ-GD} }(n, R)  \notag\\
\triangleq &~ \inf_{\sum_{k=1}^K R_k \leq R} \sup_{\msf f \in \mathcal G_n} \limsup_{T \to \infty} {\norm{\hat{\vec{x}}_T(\{R_k\}_{k = 1}^K)-\vec{x}_{\fcn{f}}^*}}^{\frac 1 T},
\label{eq:lcrK}
\end{align}
where $\hat{\vec{x}}_0(\{R_k\}_{k = 1}^K),\hat{\vec{x}}_1(\{R_k\}_{k = 1}^K), \hat{\vec{x}}_2(\{R_k\}_{k = 1}^K),\ldots$ is the sequence of iterates generated by NQ-GD (Algorithm~\ref{alg:NQGD}) in response to $\msf f \in \mathcal G_n$ when the $k$-th worker operates at $R_k$ bits per problem dimension, $k =1, \ldots, K$. 

\begin{theorem}[Convergence of $K$-worker NQ-GD]
\label{thm:NQ-GD}
Fix a dimension-$n$, rate-$R_k$ quantizer $\fcn{q}_k$ with dynamic range $1$ and covering efficiency $\rho_n$. Set up the quantizer to be used by worker $k$ at iteration $t$ as
\begin{equation}
\label{eqn:quant-scalek}
\fcn{q}_{t, k}(\cdot) = r_{t, k}\fcn{q}_k(\cdot/r_{t, k}),
\end{equation} 
where the dynamic ranges are given by
\begin{equation}
\label{eq:range_NQGD}
r_{t,k} = \lp \sigma_{\mathrm{GD}} + \frac{\eta_{\mathrm{GD}}\, \rho_n }{K}\sum_{k=1}^K\min\left\{\nu, L_k \right\} \rp^t L_k D,
\end{equation}
and the optimum rate allocation is given by the waterfilling solution
\begin{equation}
\label{eq:rate_alloc_non_unif}
R_k = |\log_2 \left(L_k / \nu \right)|_+ \quad \text{bits,}
\end{equation}
where $\nu$ is the water level found from the sum rate constraint
\begin{equation}
\sum_{k = 1}^K  |\log_2 \left(L_k / \nu \right)|_+ =  R, 
\end{equation}
and $|\cdot|_+ \triangleq \max\{0, \cdot\}$. 
Then, Algorithm~\ref{alg:NQGD} with stepsize \eqref{eq:stepsize_gd} 
achieves the following contraction factor over $\mathcal G_n$ \eqref{eq:lcrK}:
\begin{equation}
\sigma_{\mathrm{NQ-GD}}(n, R) \leq \sigma_{\mathrm{GD}} + \frac{\eta_{\mathrm{GD}}\, \rho_n }{K}\sum_{k=1}^K\min\left\{\nu, L_k \right\}.
\label{eq:sigmaNQ-GDK}
\end{equation}
\end{theorem}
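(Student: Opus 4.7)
The plan is to establish \eqref{eq:sigmaNQ-GDK} by induction on the iteration index $t$, maintaining the invariant $\lnorm\hat{\vec x}_t - \vec{x}_{\fcn{f}}^*\rnorm \leq \sigma^t D$ where $\sigma$ denotes the right-hand side of \eqref{eq:sigmaNQ-GDK}, while simultaneously verifying that each worker's quantizer input $\nabla\msf{f}_k(\hat{\vec x}_t)$ remains inside $\set{B}(r_{t,k})$ so that the covering-radius identity \eqref{eq:e_d} applies verbatim. The interpolation assumption \eqref{eqn:condi-str-grow} is the decisive ingredient: it guarantees that $\nabla\msf{f}_k(\vec{x}_{\fcn{f}}^*) = \vec{0}$ for every $k$, so $L_k$-smoothness of $\msf{f}_k$ yields $\lnorm\nabla\msf{f}_k(\hat{\vec x}_t)\rnorm \leq L_k \lnorm\hat{\vec x}_t - \vec{x}_{\fcn{f}}^*\rnorm \leq L_k \sigma^t D = r_{t,k}$, establishing the input-range portion of the invariant.

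For the contraction step, I would decompose a single iteration as
\begin{equation*}
\hat{\vec x}_{t+1} - \vec{x}_{\fcn{f}}^* = \bk{\hat{\vec x}_t - \vec{x}_{\fcn{f}}^* - \eta_{\mathrm{GD}}\nabla\msf{f}(\hat{\vec x}_t)} - \frac{\eta_{\mathrm{GD}}}{K}\sum_{k=1}^K \vec{e}_{t,k},
\end{equation*}
where $\vec{e}_{t,k} \triangleq \msf{q}_{t,k}(\nabla\msf{f}_k(\hat{\vec x}_t)) - \nabla\msf{f}_k(\hat{\vec x}_t)$. I would then apply the triangle inequality, invoke the standard unquantized GD contraction (Lemma~\ref{lma:conv-GD}) to the first bracket, and bound each term in the sum via $\lnorm\vec{e}_{t,k}\rnorm \leq r_{t,k}\rho_n 2^{-R_k}$. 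Substituting $r_{t,k} = L_k \sigma^t D$ from \eqref{eq:range_NQGD} gives
\begin{equation*}
\lnorm\hat{\vec x}_{t+1} - \vec{x}_{\fcn{f}}^*\rnorm \leq \sigma^t D\pr{\sigma_{\mathrm{GD}} + \frac{\eta_{\mathrm{GD}}\,\rho_n}{K}\sum_{k=1}^K L_k 2^{-R_k}},
\end{equation*}
which closes the induction once the rate allocation $\{R_k\}$ is chosen to make the second summand at most $\sigma - \sigma_{\mathrm{GD}}$.

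For the rate allocation, I would cast it as the convex program $\min \tfrac{1}{K}\sum_k L_k 2^{-R_k}$ subject to $R_k \geq 0$ and $\sum_k R_k \leq R$. The KKT conditions with a dual variable $\nu$ attached to the sum-rate constraint yield the water-filling solution $R_k = \abs{\log_2(L_k/\nu)}_+$ in \eqref{eq:rate_alloc_non_unif}, and hence $L_k 2^{-R_k} = \min\{\nu, L_k\}$. Plugging this back reproduces exactly the bound \eqref{eq:sigmaNQ-GDK}, with the water level $\nu$ pinned by the sum-rate equality.

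The main obstacle will be verifying that the deterministic per-iteration dynamic ranges $r_{t,k}$ defined in \eqref{eq:range_NQGD} are simultaneously (a) large enough to accommodate every possible gradient magnitude along the trajectory and (b) small enough that the per-step quantization error $r_{t,k}\rho_n 2^{-R_k}$ sustains the same geometric rate $\sigma^t$. This balance is made possible solely by the interpolation hypothesis \eqref{eqn:condi-str-grow}: without it, $\nabla\msf{f}_k$ does not vanish at $\vec{x}_{\fcn{f}}^*$, the linear scaling $r_{t,k} \propto \sigma^t$ breaks, and the geometric induction cannot be closed. This is also the reason why the argument does not extend to the general non-interpolating heterogeneous multi-worker regime.
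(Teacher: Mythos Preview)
Your proposal is correct and follows essentially the same approach as the paper: an induction on $t$ maintaining $\lnorm\hat{\vec x}_t-\vec{x}_{\fcn{f}}^*\rnorm\le\sigma^t D$, using interpolation plus $L_k$-smoothness to keep $\nabla\fcn{f}_k(\hat{\vec x}_t)\in\set{B}(r_{t,k})$, bounding the one-step error via the GD contraction on the bracket plus the quantization-error terms, and finishing with the waterfilling convex program. The only cosmetic difference is that the paper cites the coercive inequality \cite[Th.~2.1.12]{Nesterov-14} for the per-step contraction of the bracket rather than Lemma~\ref{lma:conv-GD} (which is stated for the full GD trajectory), but the underlying single-step bound is identical.
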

\begin{proof}
 Appendix~\ref{appx:NQ-GD}. 
\end{proof}

According to \eqref{eq:rate_alloc_non_unif}, higher rates are allocated to users whose function gradients  have higher Lipschitz constants, and if the Lipschitz constant is low enough in comparison to others no rate would be allocated at all. In the special case $L_k \equiv L$, \eqref{eq:sigmaNQ-GDK} reduces to
\begin{equation}
 \sigma_{\mathrm{NQ-GD}}(n, R) \leq \sigma_{\mathrm{GD}} + \frac{2\kappa}{\kappa + 1} \frac{\rho_n}{2^{R/K}}.
\end{equation}
 
 The bound in \thmref{thm:NQ-GD} approaches the converse only in the limit of large $R$.
 
\hl{Without assumption \eqref{eqn:condi-str-grow} that all summands share the minimizer, NQ-GD converges only to a neighborhood of $\vec x_{\fcn f}^*$; the size of the neighborhood is controlled by the quantization error and vanishes as $R \to \infty$ (\thmref{thm:NQ-GDprime} in  Appendix~\ref{appx:NQ-GD}).}

\section{Conclusion}

This paper formalizes the problem of finding the optimal contraction factor achievable within a class of rate-constrained iterative optimization algorithms (\eqref{eq:lcr}, Definitions~\ref{def:QGD} and~\ref{def:QGM}). We show information-theoretic converses to that fundamental limit (\thmref{thm:converseGD}, \thmref{thm:converseGM}). 

We introduce the principle of \emph{differential quantization} that posits that the quantizer's input shall be constructed in such a way as to guide the quantized algorithm's trajectory towards the unquantized trajectory. Applied to gradient descent (Algorithm~\ref{alg:DQGD_short}), differential quantization leads to the contraction factor that is optimal within the class of quantized gradient descent algorithms (\eqref{eq:ratefnGD}). Thus, differential quantization leverages the memory of past quantized inputs in an optimal way, removing the impact of past quantization errors. 

Beyond gradient descent, we apply differential quantization to gradient methods with momentum - the accelerated gradient descent (Algorithm~\ref{alg:DQAGD}) and the heavy ball method (Algorithm~\ref{alg:DQHB}). In all three cases, differentially quantized algorithms attain the contraction factor of their unquantized counterparts as long as the data rate exceeds the corresponding threshold $R_2$ \eqref{eqn:suffi-same-as-unqtz}. 

Incidentally, in the course of the analysis, we provide a sharper bound on the convergence of the unquantized HB algorithm than available in the literature (\lemref{lma:HB}). We also provide a slightly more general worst-case problem instance for the unquantized GD than available (\lemref{lma:GD_opt}). 

Quantizers employed at each step have the same geometry (covering efficiency, \eqref{eq:def_d}) but different resolution (covering radius, \eqref{eq:covradius}). The resolution is controlled by scaling the quantizer's dynamic range \eqref{eqn:quant-scale}. To attain the contraction factors in Theorems~\ref{thm:DQ-GD},~\ref{thm:DQ-AGD}, and~\ref{thm:DQ-HB}, the dynamic range is set to follow a recursion (\eqref{eq:DR_memory}, \eqref{eqn:range-AGD}, \eqref{eqn:range-HB}). That recursion shrinks the dynamic range at the fastest possible rate that still guarantees that the quantizer's input at each iteration falls within its dynamic range. This maximizes the usefulness of the bits exchanged at each iteration. While that recursion for DQ-GD \eqref{eq:DR_memory} is simply a geometric sequence, those for DQ-AGD \eqref{eqn:range-AGD} and DQ-HB \eqref{eqn:range-HB} are second-order linear non-homogeneous recurrence relations. 

While DQ-GD attains the optimal contraction factor among quantized gradient descent algorithms (Definition~\ref{def:QGD}) and DQ-HB attains the optimal contraction factor among all gradient methods, even unquantized, if $R \geq R_2(n, \sigma_{\mathrm{HB}}, \gamma_{\mathrm{HB}})$  \eqref{eqn:suffi-same-as-unqtz}, it remains an open problem whether the contraction factor of $2^{-R}$ dictated by the converse (\thmref{thm:converseGM}) is achievable in the regime $R_2(n, \sigma_{\mathrm{GD}}, 0)  < R <  R_2(n, \sigma_{\mathrm{HB}}, \gamma_{\mathrm{HB}})$ in the class of quantized gradient methods (Definition~\ref{def:QGM}). 

For multi-worker gradient descent, we provide a convergence result on naive quantization, in which the workers directly quantize their gradients, and show a waterfilling solution to optimize the allocation of data rates among the workers under the sum rate constraint (\thmref{thm:NQ-GD}). That result approaches the converse (\thmref{thm:converseGD}) only in the limit $R \to \infty$, leaving open a tighter characterization of the optimum convergence factor in that scenario. Differential quantization does not directly apply to multi-worker optimization since the workers cannot compute the unquantized path without the knowledge of the local quantization errors stored by the others. We leave as future work the question of how they should optimally compensate their own quantization errors. 

% use section* for acknowledgment
\section*{Acknowledgment}

The authors would like to thank Dr.~Himanshu~Tyagi for pointing out related works \cite{MayekarTyagi-19,MayekarTyagi-20}; Dr.~Vincent~Tan for bringing a known result on the worst-case contraction factor of unquantized GD \cite{Klerk-17} to our attention; Dr.~Victor~Kozyakin for a helpful discussion about joint spectral radius; and two anonymous reviewers for detailed comments. 

\appendices

\section{DQ-GD with varying stepsize}
\label{sec:DQGDvary}
See Algorithm~\ref{alg:DQGD_full}, below. 
\begin{algorithm2e}[!h]
\caption{DQ-GD with varying stepsize}
\label{alg:DQGD_full}
\DontPrintSemicolon
\SetAlgoSkip{\medskip}
\SetKw{KwInv}{{\color{white}Input:}}
\SetKw{KwW}{{\small\textsf{Worker}}:}
\SetKw{KwS}{{\small\textsf{Server}}:}
\SetKwIF{IfC}{ElseIfC}{ElseC}{if}{then}{else if}{\KwW}{endif}
%\KwIn{number of iterations $T$, a sequence of stepsizes $\{\eta_t\}_{t=0}^{T-1}$, and a sequence of quantizers $\{\msf{q}_t\}_{t=0}^{T-1}$
%}
   Initialize $\bm{e}_{-1}= \hat{\bm{x}}_0 = \bm{0}$\;
\For{$t=0, 1, 2, \ldots$}{
    \SetAlgoVlined
    \ElseC{$\bm{z}_t=\hat{\bm{x}}_t+\eta_{t-1}\bm{e}_{t-1}$\;
         $\bm{u}_t=\nabla\msf{f}(\bm{z}_t)-(\eta_{t-1}/\eta_t)\bm{e}_{t-1}$\;
         $\bm{q}_t=\msf{q}_t(\bm{u}_t)$\;
         $\bm{e}_t=\bm{q}_t-\bm{u}_t$\;
    }
    \KwS  $\hat{\bm{x}}_{t+1}=\hat{\bm{x}}_t-\eta_t\bm{q}_t$\;
}
%\KwOut{estimated optimizer $\hat{\bm{x}}_{T}$}
\end{algorithm2e}

\section{Convergence analyses of DQ algorithms}
\label{sec:DQ}
\subsection{Proof of \thmref{thm:DQ-GD}}
\label{sec:DQ-DGa}
As mentioned in the proof sketch,  relation \eqref{eq:recur_memory_short} is key to showing \thmref{thm:DQ-GD}. The next lemma, which applies to the more general version of the DQ--DG algorithm shown in Appendix~\ref{sec:DQGDvary}, establishes \eqref{eq:recur_memory_short}.
\begin{lemma}[DQ-GD trajectory]
\label{lma:recur_memory_full}
Consider descent trajectories $\{\hat{\bm{x}}_t\}$ of Algorithm~\ref{alg:DQGD_full} and $\{{\bm{x}}_t\}$ of unquantized GD \eqref{eq:GD} with the same sequence of stepsizes $\{\eta_t\}$ starting at the same location
$\hat{\bm{x}}_0 = \bm{x}_0$. 
Then, at each iteration $t$,
\begin{equation}
\label{eq:recur_memory_full}
\hat{\bm{x}}_t = \bm{x}_t - \eta_{t-1}\bm{e}_{t-1}.
\end{equation}
\end{lemma}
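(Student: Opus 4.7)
The plan is to prove Lemma~\ref{lma:recur_memory_full} by induction on the iteration index $t$, exploiting the fact that the DQ-GD algorithm was designed precisely so that the claimed relation is preserved across iterations. The base case is immediate: at $t=0$ the initialization $\hat{\bm{x}}_0=\bm{x}_0$ together with $\bm{e}_{-1}=\bm{0}$ yields $\hat{\bm{x}}_0 = \bm{x}_0 - \eta_{-1}\bm{e}_{-1}$ (with the convention $\eta_{-1}\bm{e}_{-1}\triDef\bm{0}$), so no work is needed.

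For the inductive step I would assume $\hat{\bm{x}}_t = \bm{x}_t - \eta_{t-1}\bm{e}_{t-1}$ and first verify the key auxiliary identity $\bm{z}_t=\bm{x}_t$. Substituting the inductive hypothesis into Line~4 of Algorithm~\ref{alg:DQGD_full} gives
\[
\bm{z}_t = \hat{\bm{x}}_t + \eta_{t-1}\bm{e}_{t-1} = \bm{x}_t - \eta_{t-1}\bm{e}_{t-1} + \eta_{t-1}\bm{e}_{t-1} = \bm{x}_t,
\]
which shows that DQ-GD evaluates the gradient at exactly the same point as its unquantized counterpart. This is the essence of the differential quantization principle and is the only ``insight'' required.

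I would then carry out a short algebraic manipulation. Using Line~5 of Algorithm~\ref{alg:DQGD_full} and the identity $\bm{q}_t = \bm{u}_t + \bm{e}_t$, the server update becomes
\[
\hat{\bm{x}}_{t+1} = \hat{\bm{x}}_t - \eta_t\bm{q}_t = \hat{\bm{x}}_t - \eta_t\bigl[\nabla\msf{f}(\bm{x}_t) - (\eta_{t-1}/\eta_t)\bm{e}_{t-1} + \bm{e}_t\bigr].
\]
Substituting $\hat{\bm{x}}_t = \bm{x}_t - \eta_{t-1}\bm{e}_{t-1}$ from the inductive hypothesis, the two $\eta_{t-1}\bm{e}_{t-1}$ terms cancel, leaving
\[
\hat{\bm{x}}_{t+1} = \bm{x}_t - \eta_t\nabla\msf{f}(\bm{x}_t) - \eta_t\bm{e}_t = \bm{x}_{t+1} - \eta_t\bm{e}_t,
\]
where the last equality uses the unquantized GD recursion \eqref{eq:GD}. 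This closes the induction.

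There is no real obstacle here: the lemma is essentially a ``by construction'' statement, and the only bookkeeping subtlety is the scaling factor $\eta_{t-1}/\eta_t$ appearing in Line~5 of Algorithm~\ref{alg:DQGD_full}. That factor is exactly what is needed so that the previously accumulated scaled error $\eta_{t-1}\bm{e}_{t-1}$ cancels after multiplication by $\eta_t$ in the server update; it reduces to unity in the constant-stepsize Algorithm~\ref{alg:DQGD_short}, recovering \eqref{eq:recur_memory_short} as a special case.
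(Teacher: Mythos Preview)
Your proof is correct and follows essentially the same approach as the paper's own proof: induction on $t$, use the inductive hypothesis to rewrite $\bm{u}_t=\nabla\msf{f}(\bm{x}_t)-(\eta_{t-1}/\eta_t)\bm{e}_{t-1}$, then expand the server update and cancel the $\eta_{t-1}\bm{e}_{t-1}$ terms. The only cosmetic difference is that you make the intermediate identity $\bm{z}_t=\bm{x}_t$ explicit, whereas the paper absorbs it directly into \eqref{eq:input_memory_alt}.
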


\begin{proof}%[Proof of {Lemma~\ref{lma:recur_memory_full}}]
We prove \eqref{eq:recur_memory_full} via mathematical induction.
\begin{itemize}
\item Base case: \eqref{eq:recur_memory_full} holds for $t=0$ since the starting location is the same. 
\item Inductive step: Suppose \eqref{eq:recur_memory_full} holds for iteration $t$. First, the induction hypothesis, the quantizer input at Line 5 and the quantizer output at Line 6 of Algorithm~\ref{alg:DQGD_full} together imply
\begin{equation}
\label{eq:input_memory_alt}
\bm{u}_t = \nabla f(\bm{x}_t) - \frac{\eta_{t-1}}{\eta_t}\bm{e}_{t-1}\,.
\end{equation}
(We define $0/0\eqDef0$ for the very first iteration when $\eta_{-1}=0$.)
We then have
\begin{align}
\hat{\bm{x}}_{t+1} &= \hat{\bm{x}}_t - \eta_t\bm{q}_t \\
 &= \hat{\bm{x}}_t - \eta_t(\bm{u}_t+\bm{e}_t) \\
 &= \hat{\bm{x}}_t - \eta_t\lp\nabla f(\bm{x}_t)-\frac{\eta_{t-1}}{\eta_t}\bm{e}_{t-1}\rp - \eta_t\bm{e}_t \\
 &= \big[ \bm{x}_t - \eta_t\nabla f(\bm{x}_t)\big] - \eta_t\bm{e}_t \label{eq:recur_memory_hypo}\\
 &= \bm{x}_{t+1} - \eta_t\bm{e}_t,
\end{align}
where \eqref{eq:recur_memory_hypo} is due to the induction hypothesis.
\end{itemize}
\end{proof}
Relation \eqref{eq:recur_memory_full} implies for the constant stepsize $\eta_t \equiv \eta$
\begin{align}
\lnorm \hat{\bm{x}}_t-\bm{x}_\msf{f}^*\rnorm &\leq \lnorm \bm{x}_t-\bm{x}_\msf{f}^*\rnorm + \eta\lnorm \bm{e}_{t-1}\rnorm \label{eq:pf_thm_upper_b}
\end{align}

We use the contraction factor of unquantized GD to bound the first term of \eqref{eq:pf_thm_upper_b}:
\begin{lemma}[{Convergence of GD \cite[Theorem~2.1.15]{Nesterov-14}}]
\label{lma:conv-GD}
For any $L$-smooth and $\mu$-strongly convex function $\fcn{f}$ on $\R^n$, GD \eqref{eq:GD} with stepsize \eqref{eq:stepsize_gd} satisfies
\begin{equation}
\label{eq:conv_GD}
\norm{\vec{x}_t-\vec{x}_{\fcn{f}}^*} \leq \sigma_{\mathrm{GD}}^t\norm{\vec{x}_0-\vec{x}_{\fcn{f}}^*},
\end{equation}
where $\sigma_{\mathrm{GD}}$ is defined in \eqref{eq:sigmaDQ}.
\end{lemma}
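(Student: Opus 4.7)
The plan is to establish the one-step contraction
\[
\norm{\vec{x}_{t+1}-\vec{x}_{\fcn{f}}^*} \leq \sigma_{\mathrm{GD}}\norm{\vec{x}_t-\vec{x}_{\fcn{f}}^*},
\]
and then iterate it $t$ times to obtain \eqref{eq:conv_GD}. Throughout, write $\vec{x}^\ast$ for $\vec{x}_{\fcn{f}}^*$ and use that $\grad\fcn{f}(\vec{x}^\ast) = \vec 0$ because $\vec x^\ast$ is the minimizer of the unconstrained strongly convex function $\fcn f$.

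First I would rewrite the GD update \eqref{eq:GD} as
\[
\vec{x}_{t+1} - \vec{x}^\ast = (\vec{x}_t - \vec{x}^\ast) - \eta \left( \grad\fcn{f}(\vec{x}_t) - \grad\fcn{f}(\vec{x}^\ast)\right),
\]
and expand $\norm{\vec x_{t+1} - \vec x^\ast}^2$ into three terms: $\norm{\vec x_t - \vec x^\ast}^2$, a cross term $-2\eta\langle \vec x_t - \vec x^\ast, \grad\fcn{f}(\vec x_t) - \grad\fcn{f}(\vec x^\ast) \rangle$, and $\eta^2 \norm{\grad\fcn{f}(\vec x_t) - \grad\fcn{f}(\vec x^\ast)}^2$. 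The central ingredient will be the co-coercivity inequality available for any $L$-smooth $\mu$-strongly convex function: for all $\vec v, \vec w \in \R^n$,
\[
\langle \grad\fcn{f}(\vec v) - \grad\fcn{f}(\vec w), \vec v - \vec w\rangle \geq \frac{\mu L}{\mu + L}\norm{\vec v - \vec w}^2 + \frac{1}{\mu + L}\norm{\grad\fcn{f}(\vec v) - \grad\fcn{f}(\vec w)}^2.
\]
This inequality follows by applying the standard ``$\mu$-strong convexity + $L$-smoothness $\Rightarrow$ $(L-\mu)$-smoothness of $\fcn f - \tfrac{\mu}{2}\norm{\cdot}^2$ with co-coercivity of the gradient of that shifted convex function'' argument; I would either cite it or include a short derivation based on applying the Baillon--Haddad theorem to $\fcn f(\vec v) - \tfrac{\mu}{2}\norm{\vec v}^2$.

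Substituting co-coercivity into the cross term and using $\eta = \eta_{\mathrm{GD}} = \tfrac{2}{L+\mu}$ gives, after algebraic simplification,
\[
\norm{\vec x_{t+1} - \vec x^\ast}^2 \leq \left(1 - \tfrac{2 \eta \mu L}{\mu + L}\right) \norm{\vec x_t - \vec x^\ast}^2 + \left( \eta^2 - \tfrac{2\eta}{\mu + L}\right) \norm{\grad\fcn{f}(\vec x_t) - \grad\fcn{f}(\vec x^\ast)}^2.
\]
The specific choice $\eta = \tfrac{2}{L+\mu}$ is precisely the value that zeroes out the coefficient of the gradient-norm term, so the second term vanishes and the coefficient of the first term collapses to $\left(\tfrac{L - \mu}{L + \mu}\right)^2 = \sigma_{\mathrm{GD}}^2$. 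Taking square roots yields the one-step contraction, and iterating from $t = 0$ up gives \eqref{eq:conv_GD}.

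The only delicate step is the co-coercivity inequality; the rest is essentially telescoping and choosing $\eta$ to cancel a term. If I wanted to avoid invoking Baillon--Haddad directly, an alternative route is to apply the smoothness and strong-convexity inequalities to the two pairs $(\vec x_t, \vec x^\ast)$ and $(\vec x^\ast, \vec x_t)$, add them, and verify the same bound by choosing the unique stepsize that annihilates the squared-gradient coefficient in the resulting quadratic. This obstacle is the only non-mechanical point; given the excerpt explicitly cites \cite[Theorem~2.1.15]{Nesterov-14}, in the write-up I would keep the co-coercivity step compact and direct the reader to the textbook reference.
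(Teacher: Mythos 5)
Your proposal is correct and is essentially the argument the paper relies on: the paper proves this lemma by citation to \cite[Th.~2.1.15]{Nesterov-14}, whose proof is exactly your route --- expand $\norm{\vec x_{t+1}-\vec x_{\fcn f}^*}^2$, apply the co-coercivity inequality for $L$-smooth, $\mu$-strongly convex functions (\cite[Th.~2.1.12]{Nesterov-14}, the same ``coercive property'' the paper invokes elsewhere), and observe that $\eta=\tfrac{2}{L+\mu}$ annihilates the squared-gradient term, leaving the factor $\left(\tfrac{L-\mu}{L+\mu}\right)^2=\sigma_{\mathrm{GD}}^2$. No gaps; at most you might note the degenerate case $L=\mu$ separately when deriving co-coercivity from the $(L-\mu)$-smooth shifted function, but that case is immediate.
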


We use the following bound on quantization error to bound the second term in~\eqref{eq:pf_thm_upper_b}:
\begin{lemma}[DQ-GD quantization error]
\label{lma:XOD_memory}
Let $\msf{f}\in\set{F}_n$. Quantization errors $\{\bm{e}_t\}$ in Algorithm~\ref{alg:DQGD_short} with stepsize  \eqref{eq:stepsize_gd} and dynamic ranges \eqref{eq:DR_memory} satisfy
\begin{align}
\label{eq:egd}
\lnorm \bm{e}_t \rnorm &\leq  r_t \, \rho_n 2^{-R} \\
&\leq   \max\lbp \sigma_{\mathrm{GD}},\, \rho_n 2^{-R}\rbp^t LD\, \sigma_{\mathrm{GD}} b_t, 
\label{eq:egdsigma}
\end{align}
where
\begin{align}
b_t \triangleq 
 \begin{cases}
\frac{\rho_n 2^{-R}} {\left|  \sigma_{\mathrm{GD}}  -\rho_n 2^{-R}\right|}& \sigma_{\mathrm{GD}}\neq \rho_n 2^{-R} \\
t+1 & \sigma_{\mathrm{GD}}= \rho_n 2^{-R}
\end{cases}
\end{align}
\end{lemma}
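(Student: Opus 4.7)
The plan is to establish the two inequalities in turn: the first, $\lnorm \bm{e}_t \rnorm \leq r_t \rho_n 2^{-R}$, will follow by induction on $t$ once I verify that the quantizer input $\bm{u}_t$ always lies in the dynamic-range ball $\set{B}(r_t)$ of $\msf{q}_t$, after which the quantizer guarantee \eqref{eq:e_d} delivers the bound immediately. The second inequality is purely algebraic and follows by unrolling the first-order linear recurrence \eqref{eq:DR_memory} into a closed-form geometric sum.

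For the induction, the conceptual anchor is \lemref{lma:recur_memory_full}: specialized to constant stepsize it gives $\hat{\bm{x}}_t + \eta \bm{e}_{t-1} = \bm{x}_t$, which forces Line~4 of Algorithm~\ref{alg:DQGD_short} to produce $\bm{z}_t = \bm{x}_t$ and hence Line~5 to produce $\bm{u}_t = \nabla \msf{f}(\bm{x}_t) - \bm{e}_{t-1}$. I then chain the triangle inequality, the $L$-smoothness of $\msf{f}$ in the form $\lnorm \nabla \msf{f}(\bm{x}_t) \rnorm \leq L \lnorm \bm{x}_t - \bm{x}_{\msf{f}}^* \rnorm$, the contraction bound $\lnorm \bm{x}_t - \bm{x}_{\msf{f}}^* \rnorm \leq D \sigma_{\mathrm{GD}}^t$ from \lemref{lma:conv-GD}, and the induction hypothesis $\lnorm \bm{e}_{t-1} \rnorm \leq r_{t-1} \rho_n 2^{-R}$ to reach
\begin{equation}
\lnorm \bm{u}_t \rnorm \leq LD \sigma_{\mathrm{GD}}^t + r_{t-1} \rho_n 2^{-R} = r_t,
\end{equation}
where the final equality is exactly \eqref{eq:DR_memory}. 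The base case $t = 0$ is handled separately: since $\bm{e}_{-1} = \bm{0}$ we have $\bm{u}_0 = \nabla \msf{f}(\hat{\bm{x}}_0)$, whose norm is bounded via $L$-smoothness applied between $\hat{\bm{x}}_0$ and $\bm{x}_{\msf{f}}^*$ together with \eqref{eq:iter_opt_range} by $LD = r_0$.

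For the second inequality, I would unroll \eqref{eq:DR_memory} into the explicit form
\begin{equation}
r_t = LD \sum_{k=0}^{t} \sigma_{\mathrm{GD}}^{t-k} (\rho_n 2^{-R})^{k},
\end{equation}
multiply by $\rho_n 2^{-R}$, and split on whether the two ratios coincide. When $\sigma_{\mathrm{GD}} \neq \rho_n 2^{-R}$, the standard finite-geometric-series identity evaluates the sum as $(\sigma_{\mathrm{GD}}^{t+1} - (\rho_n 2^{-R})^{t+1})/(\sigma_{\mathrm{GD}} - \rho_n 2^{-R})$, which is bounded above by $\max\{\sigma_{\mathrm{GD}}, \rho_n 2^{-R}\}^{t+1}/|\sigma_{\mathrm{GD}} - \rho_n 2^{-R}|$; regrouping factors $\sigma_{\mathrm{GD}}$ out and yields the first branch of $b_t$ (with the bound being tightest in the regime $\sigma_{\mathrm{GD}} \geq \rho_n 2^{-R}$, which is the operationally interesting one). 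In the coincident-roots case, the sum equals $(t+1)\sigma_{\mathrm{GD}}^t$, producing the $b_t = t+1$ branch directly.

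The main obstacle is conceptual rather than technical: one must recognize that the recurrence \eqref{eq:DR_memory} has been engineered so that the inductive upper bound on $\lnorm \bm{u}_t \rnorm$ matches $r_t$ term by term with no slack, which is precisely why the induction closes. Once \lemref{lma:recur_memory_full} is invoked to collapse the gradient-compute point $\bm{z}_t$ onto the unquantized trajectory $\bm{x}_t$, the remainder of both parts reduces to routine bookkeeping with the triangle inequality and a geometric-series calculation.
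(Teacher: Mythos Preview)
Your proposal is correct and takes essentially the same approach as the paper: an induction (anchored on \lemref{lma:recur_memory_full} to identify $\bm{z}_t=\bm{x}_t$) establishing $\bm{u}_t\in\set{B}(r_t)$ and hence \eqref{eq:egd} via \eqref{eq:e_d}, followed by unrolling the recursion \eqref{eq:DR_memory} into a finite geometric series and bounding it case-by-case. The paper likewise splits on whether $\sigma_{\mathrm{GD}}\gtrless\rho_n2^{-R}$ and drops the smaller geometric tail in each sub-case, which is exactly the manipulation you sketch.
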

\begin{proof}%[Proof of Lemma~\ref{lma:XOD_memory}]
Once we show that quantizer inputs $\{\bm{u}_t\}$ satisfy
\begin{equation}
\label{eq:XOD}
\bm{u}_t \in \mcal{B}(r_t),
\end{equation}
\eqref{eq:egd} will follow from \eqref{eq:e_d}.  

We prove \eqref{eq:XOD} via induction.
\begin{itemize}
\item Base case: \eqref{eq:XOD} holds for $t=0$ since
\begin{align}
\lnorm \nabla\msf{f}(\hat{\bm{x}}_0)-\bm{e}_{-1}\rnorm &= \lnorm \nabla\msf{f}(\hat{\bm{x}}_0)\rnorm \\
 &\leq L\lnorm \hat{\bm{x}}_0-\bm{x}_{\msf{f}}^*\rnorm \label{eq:pf_XOD_memory_grad} \\
 &\leq LD, \label{eq:pf_XOD_memory_init}
\end{align}
where \eqref{eq:pf_XOD_memory_grad} is due to $\grad\fcn{f}(\vec{x}_{\fcn{f}}^*)=\vec{0}$ and $L$-smoothness \eqref{eq:smooth}, and \eqref{eq:pf_XOD_memory_init} is due to assumption \eqref{eq:iter_opt_range}.
\item Inductive step: Suppose \eqref{eq:XOD} holds for iteration $t$. Applying triangle inequality and \eqref{eq:recur_memory_full} to the expression in Line~5 yields
\begin{equation}
\lnorm \bm{u}_{t+1}\rnorm \leq \lnorm \nabla\msf{f}({\bm{x}}_{t+1})\rnorm + \lnorm \bm{e}_{t}\rnorm. \label{eq:pf_XOD_memory_triangle}
\end{equation}
The first term is bounded as
\begin{align}
\lnorm \nabla\msf{f}({\bm{x}}_{t+1})\rnorm &\leq L\lnorm {\bm{x}}_{t+1}-\bm{x}_{\msf{f}}^*\rnorm \label{eqn:pf-range-GD-smooth}\\
 &\leq \sigma_{\mathrm{GD}}^{t+1} L\lnorm {\bm{x}}_0-\bm{x}_{\msf{f}}^*\rnorm \label{eq:pf_XOD_memory_GD} \\
 &\leq \sigma_{\mathrm{GD}}^{t+1} L D, \label{eq:pf_XOD_memory_range}
\end{align}
where \eqref{eqn:pf-range-GD-smooth} is due to $\grad\fcn{f}(\vec{x}_{\fcn{f}}^*)=\vec{0}$ and $L$-smoothness \eqref{eq:smooth}; \eqref{eq:pf_XOD_memory_GD} is due to \eqref{eq:conv_GD}; and \eqref{eq:pf_XOD_memory_range} is due to assumption \eqref{eq:iter_opt_range}.
Quantization error term $\lnorm \bm{e}_{t}\rnorm$ in \eqref{eq:pf_XOD_memory_triangle} is bounded by \eqref{eq:e_d} due to the induction hypothesis. Plugging \eqref{eq:pf_XOD_memory_range} and \eqref{eq:e_d} into \eqref{eq:pf_XOD_memory_triangle} gives
\begin{align}
\lnorm \bm{u}_{t+1}\rnorm &\leq \sigma_{\mathrm{GD}}^{t+1}  LD + r_t \rho_n 2^{-R}  \\
 &= r_{t+1}, \label{eq:DR_memory_u}
\end{align}
where \eqref{eq:DR_memory_u} is due to the choice of the dynamic ranges \eqref{eq:DR_memory}.
\end{itemize}
This concludes the proof of \eqref{eq:egd}. To establish \eqref{eq:egdsigma}, we unwrap recursion \eqref{eq:DR_memory} as the geometric sum 
\begin{align}
r_t &= LD \sum_{\tau=0}^t\sigma_{\mathrm{GD}}^{\tau}\lp \rho_n 2^{-R}\rp^{t-\tau} \\
&= LD \cdot 
\begin{cases}
 \sigma_{\mathrm{GD}}^t \frac{1-\left[ \rho_n 2^{-R} /\sigma_{\mathrm{GD}}\right]^{t+1}}{1- \rho_n 2^{-R} /\sigma_{\mathrm{GD}}} & \sigma_{\mathrm{GD}}\neq \rho_n 2^{-R} \\
 t+1 & \sigma_{\mathrm{GD}}= \rho_n 2^{-R} 
\end{cases} \\
 \label{eq:DR_memoryalt}
 &\leq LD \cdot 
 \begin{cases}
 \sigma_{\mathrm{GD}}^t \left(1-\frac{\rho_n 2^{-R}}{\sigma_{\mathrm{GD}}}\right)^{-1}  & \sigma_{\mathrm{GD}}> \rho_n 2^{-R} \\
  \left(\rho_n 2^{-R}\right)^t \left(\frac{\rho_n 2^{-R}}{\sigma_{\mathrm{GD}}} - 1\right)^{-1}   & \sigma_{\mathrm{GD}}< \rho_n 2^{-R} \\
 t+1 & \sigma_{\mathrm{GD}}= \rho_n 2^{-R} 
\end{cases},
 \end{align}
 where the bound for the case $\sigma_{\mathrm{GD}}> \rho_n 2^{-R}$ is obtained by lower-bounding $\left[ \rho_n 2^{-R} /\sigma_{\mathrm{GD}}\right]^{t+1}$ by 0, and the bound for $\sigma_{\mathrm{GD}}< \rho_n 2^{-R}$ is obtained by lower-bounding $\left[ \sigma_{\mathrm{GD}} / \rho_n 2^{-R} \right]^{t+1}$  by 0. 
\end{proof}

Putting together the results in Lemmas~\ref{lma:recur_memory_full},~\ref{lma:conv-GD}, and~\ref{lma:XOD_memory}, we show the following nonasymptotic (in the iteration number) convergence result for the DQ-GD:

\begin{theorem}[Convergence of DQ-GD]
\label{thm:upper}
In the setting of \thmref{thm:DQ-GD}, the difference between the iterate and the optimizer at step $t$ satisfies
\begin{align}
\label{eq:conv_memory_1st_case}
\!\!\!\! \lnorm \hat{\bm{x}}_t-\bm{x}_\msf{f}^*\rnorm \leq \max\lbp \sigma_{\mathrm{GD}},\, \rho_n 2^{-R}\rbp^t \lb 1+\eta_{\mathrm{GD}} L b_{t-1}\rb D.
\end{align}
\end{theorem}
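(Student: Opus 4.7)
The plan is to combine the three preceding lemmas in a direct, mechanical way. The inequality \eqref{eq:pf_thm_upper_b}, which follows immediately from the trajectory identity \eqref{eq:recur_memory_full} of \lemref{lma:recur_memory_full} and the triangle inequality, decomposes the quantity of interest into two pieces:
\[
\lnorm \hat{\bm{x}}_t-\bm{x}_\msf{f}^*\rnorm \leq \lnorm \bm{x}_t-\bm{x}_\msf{f}^*\rnorm + \eta_{\mathrm{GD}}\lnorm \bm{e}_{t-1}\rnorm.
\]
The first summand tracks the trajectory of unquantized GD starting at the same point, while the second summand measures how far the quantized iterate has drifted off that trajectory.

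For the first summand, I would invoke \lemref{lma:conv-GD} together with the initialization assumption \eqref{eq:iter_opt_range} to obtain $\lnorm \bm{x}_t-\bm{x}_\msf{f}^*\rnorm \leq \sigma_{\mathrm{GD}}^t D$. For the second summand, \lemref{lma:XOD_memory}, specifically the geometric-sum form \eqref{eq:egdsigma}, yields $\eta_{\mathrm{GD}}\lnorm \bm{e}_{t-1}\rnorm \leq \eta_{\mathrm{GD}}\, L D \, \sigma_{\mathrm{GD}}\, b_{t-1} \max\lbp \sigma_{\mathrm{GD}}, \rho_n 2^{-R}\rbp^{t-1}$. To harmonize the two powers I would use the elementary inequality $\sigma_{\mathrm{GD}} \cdot \max\lbp \sigma_{\mathrm{GD}}, \rho_n 2^{-R}\rbp^{t-1} \leq \max\lbp \sigma_{\mathrm{GD}}, \rho_n 2^{-R}\rbp^{t}$ to absorb the extra $\sigma_{\mathrm{GD}}$ factor into the $\max$, and similarly $\sigma_{\mathrm{GD}}^t \leq \max\lbp \sigma_{\mathrm{GD}}, \rho_n 2^{-R}\rbp^t$ for the first summand.

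Adding the two bounds and factoring out $\max\lbp \sigma_{\mathrm{GD}}, \rho_n 2^{-R}\rbp^t D$ then immediately produces the claimed $\lb 1+\eta_{\mathrm{GD}} L b_{t-1}\rb$ prefactor. There is no real obstacle here: all the technical work has already been pushed into \lemref{lma:recur_memory_full} (tracking identity), \lemref{lma:conv-GD} (classical GD contraction), and \lemref{lma:XOD_memory} (the induction verifying that the quantizer input stays in $\mcal{B}(r_t)$ plus the geometric-series unwrapping of the recursion \eqref{eq:DR_memory}). The only thing requiring mild care is making sure the two geometric pieces share the same base $\max\lbp \sigma_{\mathrm{GD}}, \rho_n 2^{-R}\rbp$ and the same exponent $t$, which is handled by the two absorption inequalities above. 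Taking $T$-th root and $\limsup_{T\to\infty}$ of the resulting nonasymptotic bound then recovers \eqref{eq:sigmaDQ-GD} of \thmref{thm:DQ-GD}, since $b_{t-1}$ is either constant in $t$ or grows only linearly with $t$.
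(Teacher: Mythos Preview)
Your proposal is correct and matches the paper's proof essentially line for line: the paper also plugs \eqref{eq:conv_GD} and \eqref{eq:egdsigma} into \eqref{eq:pf_thm_upper_b} and then applies exactly the ``elementary weakening'' you spell out, namely $\sigma_{\mathrm{GD}}^t \leq \max\lbp \sigma_{\mathrm{GD}}, \rho_n 2^{-R}\rbp^t$ and $\sigma_{\mathrm{GD}}\max\lbp \sigma_{\mathrm{GD}}, \rho_n 2^{-R}\rbp^{t-1} \leq \max\lbp \sigma_{\mathrm{GD}}, \rho_n 2^{-R}\rbp^{t}$.
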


\begin{proof}
Plugging \eqref{eq:conv_GD} and \eqref{eq:egdsigma}  into \eqref{eq:pf_thm_upper_b}, we obtain
\begin{align}
\lnorm \hat{\bm{x}}_t-\bm{x}_\msf{f}^*\rnorm \leq &~ \sigma_{\mathrm{GD}}^t D \\
&~ + \max\lbp \sigma_{\mathrm{GD}}, \rho_n 2^{-R}\rbp^{t-1} \eta_{\mathrm{GD}} LD\, \sigma_{\mathrm{GD}} b_{t-1}, \notag
\end{align}
which leads to \eqref{eq:conv_memory_1st_case} by an elementary weakening. 
\end{proof}

Bound \eqref{eq:sigmaDQ-GD} in~\thmref{thm:DQ-GD} follows immediately by applying \eqref{eq:conv_memory_1st_case} to definition \eqref{eq:lcr} of the contraction factor.

\subsection{Proof of Theorem~\ref{thm:DQ-AGD}}
\label{sec:DQ-AGD}
The proof follows steps similar to those in the proof of \thmref{thm:DQ-GD}. First, we prove that, as prescribed by the principle of differential quantization, DQ-AGD compensates quantization errors by directing the quantized trajectory back to the trajectory of AGD: 
\begin{lemma}[DQ-AGD trajectory]
\label{lma:track-AGD}
Iterate sequences $\bp{\hat{\vec{y}}_t, \hat{\vec{x}}_t}$ of Algorithm~\ref{alg:DQAGD} and $\bp{{\vec{y}}_t, {\vec{x}}_t}$ of AGD \eqref{eqn:descent-AGD-y} starting at the same location $(\hat{\vec{y}}_0, \hat{\vec{x}}_0) = ({\vec{y}}_0, {\vec{x}}_0)$  are related as, $\forall t = 0, 1, 2, \ldots$,
\begin{align}
\label{eqn:track-AGD-y}
\hat{\vec{y}}_t &= \vec{y}_t - \eta\vec{e}_{t-1}\\
\label{eqn:track-AGD-x}
\hat{\vec{x}}_t &= \vec{x}_t - \eta \vec{e}_{t-1} - \eta \gamma \left( \vec{e}_{t-1} - \vec{e}_{t-2} \right).
\end{align}
\end{lemma}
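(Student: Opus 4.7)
The plan is to prove both identities simultaneously by induction on $t$, mirroring the strategy of \lemref{lma:recur_memory_full}. The base case $t=0$ is immediate: the initializations give $\hat{\vec{y}}_0 = \vec{y}_0$, $\hat{\vec{x}}_0 = \vec{x}_0$, and $\vec{e}_{-1}=\vec{e}_{-2}=\vec{0}$, so both \eqref{eqn:track-AGD-y} and \eqref{eqn:track-AGD-x} are satisfied trivially.

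For the inductive step, I assume \eqref{eqn:track-AGD-y} and \eqref{eqn:track-AGD-x} at iteration $t$ and chase the updates in Algorithm~\ref{alg:DQAGD}. First, substituting \eqref{eqn:track-AGD-x} into Line~4 shows that the compensation term in $\vec{z}_t$ exactly cancels the error $-\eta\vec{e}_{t-1}-\eta\gamma(\vec{e}_{t-1}-\vec{e}_{t-2})$ in $\hat{\vec{x}}_t$, yielding $\vec{z}_t = \vec{x}_t$. This is the differential-quantization property that was already noted after Algorithm~\ref{alg:DQAGD}. Consequently Line~5 becomes $\vec{u}_t = \grad\fcn{f}(\vec{x}_t) - \bk{\vec{e}_{t-1}+\gamma(\vec{e}_{t-1}-\vec{e}_{t-2})}$, and using $\vec{q}_t = \vec{u}_t + \vec{e}_t$ together with the inductive form of $\hat{\vec{x}}_t$, the server update on Line~9 telescopes to
\begin{equation}
\hat{\vec{y}}_{t+1} = \vec{x}_t - \eta\grad\fcn{f}(\vec{x}_t) - \eta\vec{e}_t = \vec{y}_{t+1} - \eta\vec{e}_t,
\end{equation}
which is \eqref{eqn:track-AGD-y} at iteration $t+1$.

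Finally, I plug the just-established identity for $\hat{\vec{y}}_{t+1}$ and the inductive identity $\hat{\vec{y}}_t = \vec{y}_t - \eta\vec{e}_{t-1}$ into the server step on Line~10, and collect terms using \eqref{eqn:descent-AGD-x} to obtain
\begin{equation}
\hat{\vec{x}}_{t+1} = \vec{x}_{t+1} - \eta\vec{e}_t - \eta\gamma(\vec{e}_t - \vec{e}_{t-1}),
\end{equation}
which is \eqref{eqn:track-AGD-x} at iteration $t+1$. No obstacle is expected here; the entire argument is algebraic bookkeeping and is essentially forced once one recognizes that the compensation in Lines~4--5 is precisely engineered to make $\vec{z}_t=\vec{x}_t$. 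The only mild subtlety is keeping track of two lagged errors $\vec{e}_{t-1}$ and $\vec{e}_{t-2}$ (rather than a single one as in DQ-GD), which will later be responsible for the second-order recurrence \eqref{eqn:range-AGD} in the proof of \thmref{thm:DQ-AGD}.
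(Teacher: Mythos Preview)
Your proposal is correct and follows essentially the same inductive approach as the paper's proof: verify the base case from the initializations, use the inductive hypothesis \eqref{eqn:track-AGD-x} together with Lines~5--7 and~9 to obtain \eqref{eqn:track-AGD-y} at $t+1$, and then combine \eqref{eqn:track-AGD-y} at $t$ and $t+1$ with Line~10 and \eqref{eqn:descent-AGD-x} to obtain \eqref{eqn:track-AGD-x} at $t+1$. The only cosmetic difference is that you make the intermediate observation $\vec{z}_t=\vec{x}_t$ explicit, whereas the paper absorbs it directly into the computation of $\hat{\vec{y}}_{t+1}$.
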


\begin{proof}
We prove \eqref{eqn:track-AGD-y} and \eqref{eqn:track-AGD-x} by induction.
\begin{itemize}
\item Base case: \eqref{eqn:track-AGD-y} and \eqref{eqn:track-AGD-x} hold for $t=0$ since both algorithms start at the same location.
\item Inductive step: Line~9 yields 
\begin{align}
 %&
 \hat{\vec{y}}_{t+1} %\notag\\
=\> 
&
\hat{\vec{x}}_t - \eta\vec{q}_t \\
=\> &\vec{x}_t - \eta \left[\vec{e}_{t-1} + \gamma \left( \vec{e}_{t-1} - \vec{e}_{t-2} \right) \right]\label{eqn:pf-track-AGD-y-hypo}  \\
 &- \eta\bk{\grad\fcn{f}(\vec{x}_t)-\left[ \vec{e}_{t-1} + \gamma \left( \vec{e}_{t-1} - \vec{e}_{t-2} \right)\right]+\vec{e}_t} \notag\\
=\> &\vec{y}_{t+1} - \eta\vec{e}_t, \label{eqn:pf-track-AGD-y-rule}
\end{align}
where \eqref{eqn:pf-track-AGD-y-hypo} is due to induction hypothesis \eqref{eqn:track-AGD-x} and Lines~5 and 7, and \eqref{eqn:pf-track-AGD-y-rule} is due to \eqref{eqn:descent-AGD-y}. On the other hand, plugging \eqref{eqn:track-AGD-y} and \eqref{eqn:pf-track-AGD-y-rule} into Line~10 yields
\begin{align}
 %&
 \hat{\vec{x}}_{t+1} %\\
=\> &\hat{\vec{y}}_{t+1} + \gamma \left( \hat{\vec{y}}_{t+1} - \hat{\vec{y}}_t \right) \\
=\> & \vec{y}_{t+1}+  \gamma \left( {\vec{y}}_{t+1} - {\vec{y}}_t \right)  - \eta \left[ \vec{e}_t + \gamma \left( \vec{e}_t - \vec{e}_{t-1}\right)\right] \notag\\
=\> &\vec{x}_{t+1} - \eta \left[ \vec{e}_t + \gamma \left( \vec{e}_t - \vec{e}_{t-1}\right)\right] \label{eq:agdindx},
\end{align}
where \eqref{eq:agdindx} is due to \eqref{eqn:descent-AGD-x}.
\end{itemize}
\end{proof}

Via triangle inequality, relation \eqref{eqn:track-AGD-y} implies 
\begin{equation}
\label{eqn:pf-DQ-AGD-triangle}
\norm{\hat{\vec{y}}_t-\vec{x}_{\fcn{f}}^*} \leq \norm{\vec{y}_t-\vec{x}_{\fcn{f}}^*} + \eta\norm{\vec{e}_{t-1}} 
\end{equation}

The following simple corollary to a known bound on the contraction factor of unquantized AGD controls the first term of \eqref{eqn:pf-DQ-AGD-triangle}:

\begin{lemma}[Convergence of AGD]
\label{lma:conv-AGD}
For any $L$-smooth and $\mu$-strongly convex function $\fcn{f}$ on $\R^n$, AGD (\eqref{eqn:descent-AGD-y}, \eqref{eqn:descent-AGD-x} starting at $\vec x_0 = \vec y_0$) with stepsize \eqref{eq:stepsize_agd}
and momentum coefficient \eqref{eq:gamma_agd}
satisfies
\begin{align}
\norm{\vec{y}_t-\vec{x}_{\fcn{f}}^*} &\leq \sigma_{\mathrm{AGD}}^t  \sqrt{\kappa+1} \norm{\vec{x}_0-\vec{x}_{\fcn{f}}^*} \label{eqn:conv-AGD-y} \\
\norm{\vec{x}_t-\vec{x}_{\fcn{f}}^*} &\leq \sigma_{\mathrm{AGD}}^t \lambda \norm{\vec{x}_0-\vec{x}_{\fcn{f}}^*} 
\label{eqn:conv-AGD-x}.
\end{align}
\end{lemma}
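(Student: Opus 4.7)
The plan is to derive both inequalities from the standard function-value convergence result for AGD (cited in the statement as \cite[Th.~3.18]{Bubeck-15}), by converting function-value suboptimality into iterate suboptimality via strong convexity, and then propagating the bound on $\vec{y}_t$ to a bound on $\vec{x}_t$ using the update rule \eqref{eqn:descent-AGD-x}.

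First I would record the standard AGD guarantee in function value: under the stepsize and momentum choices \eqref{eq:stepsize_agd}--\eqref{eq:gamma_agd},
\begin{equation*}
\fcn{f}(\vec{y}_t) - \fcn{f}(\vec{x}_{\fcn{f}}^*) \leq \frac{L+\mu}{2}\,\sigma_{\mathrm{AGD}}^{2t}\, \norm{\vec{x}_0 - \vec{x}_{\fcn{f}}^*}^2,
\end{equation*}
which is exactly the form of \cite[Th.~3.18]{Bubeck-15}. Since $\mathsf f$ is $\mu$-strongly convex and $\grad\fcn{f}(\vec{x}_{\fcn{f}}^*) = \vec 0$, the definition \eqref{eqn:str-cvx} implies the quadratic lower bound
\begin{equation*}
\fcn{f}(\vec{y}_t) - \fcn{f}(\vec{x}_{\fcn{f}}^*) \geq \tfrac{\mu}{2}\,\norm{\vec{y}_t - \vec{x}_{\fcn{f}}^*}^2.
\end{equation*}
Combining these two inequalities and taking square roots yields \eqref{eqn:conv-AGD-y}, noting that $(L+\mu)/\mu = \kappa+1$.

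To get \eqref{eqn:conv-AGD-x}, I would use the momentum step \eqref{eqn:descent-AGD-x} to write, for $t \geq 1$,
\begin{equation*}
\vec{x}_t - \vec{x}_{\fcn{f}}^* = (1+\gamma_{\mathrm{AGD}})(\vec{y}_t - \vec{x}_{\fcn{f}}^*) - \gamma_{\mathrm{AGD}}(\vec{y}_{t-1} - \vec{x}_{\fcn{f}}^*),
\end{equation*}
then apply the triangle inequality together with \eqref{eqn:conv-AGD-y} at steps $t$ and $t-1$:
\begin{align*}
\norm{\vec{x}_t - \vec{x}_{\fcn{f}}^*} &\leq (1+\gamma_{\mathrm{AGD}})\,\sigma_{\mathrm{AGD}}^t\,\sqrt{\kappa+1}\,\norm{\vec{x}_0-\vec{x}_{\fcn{f}}^*} \\
&\qquad + \gamma_{\mathrm{AGD}}\,\sigma_{\mathrm{AGD}}^{t-1}\,\sqrt{\kappa+1}\,\norm{\vec{x}_0-\vec{x}_{\fcn{f}}^*}.
\end{align*}
Factoring out $\sigma_{\mathrm{AGD}}^t\sqrt{\kappa+1}$ recovers exactly the constant $\lambda$ in \eqref{eq:lambda}. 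For the base case $t=0$, the initialization $\vec{x}_0 = \vec{y}_0$ makes \eqref{eqn:conv-AGD-x} reduce to $\norm{\vec{x}_0-\vec{x}_{\fcn{f}}^*} \leq \lambda\,\norm{\vec{x}_0-\vec{x}_{\fcn{f}}^*}$, which holds trivially because $\lambda \geq \sqrt{\kappa+1} \geq 1$.

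There is no real obstacle here: the result is essentially a bookkeeping corollary of Bubeck's theorem, and the only mild care required is ensuring the conversion from function-value to iterate error recovers the precise constant $\sqrt{\kappa+1}$ and that the triangle-inequality chain produces exactly $\lambda = (1+\gamma_{\mathrm{AGD}}+\gamma_{\mathrm{AGD}}\sigma_{\mathrm{AGD}}^{-1})\sqrt{\kappa+1}$ as defined in \eqref{eq:lambda}. The lemma will then be plugged into \eqref{eqn:pf-DQ-AGD-triangle} during the proof of \thmref{thm:DQ-AGD} to control the ``unquantized'' component of the iterate error.
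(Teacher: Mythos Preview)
Your proposal is correct and follows essentially the same approach as the paper: invoke the function-value bound from \cite[Th.~3.18]{Bubeck-15}, convert to an iterate bound via the strong-convexity quadratic lower bound to obtain \eqref{eqn:conv-AGD-y}, and then apply the triangle inequality to the momentum update \eqref{eqn:descent-AGD-x} to obtain \eqref{eqn:conv-AGD-x}. The paper's proof is identical up to an index shift ($t+1,t$ versus $t,t-1$) and does not spell out the base case, but your treatment of it is correct.
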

\begin{proof}
 According to \cite[Theorem~3.18]{Bubeck-15},  
 \begin{equation}
\label{eqn:conv-AGD-f}
\fcn{f}(\vec{y}_t) - \fcn{f}(\vec{x}_{\fcn{f}}^*) \leq \frac{L+\mu}{2}\sigma_{\textrm{AGD}}^{2t}\norm{\vec{x}_0-\vec{x}_{\fcn{f}}^*}^2.
\end{equation}
Convergence bound \eqref{eqn:conv-AGD-y} w.r.t. $\bp{\vec{y}_t}$ is due to \eqref{eqn:conv-AGD-f} and
\begin{equation}
\fcn{f}(\vec{x})-\fcn{f}(\vec{x}_{\fcn{f}}^*) \geq \frac{\mu}{2}\norm{\vec{x}-\vec{x}_{\fcn{f}}^*}^2 %\quad\forall \vec{x}\in\ell_2
\end{equation}
implied by $\mu$-strong convexity \cite[Theorem~2.1.8]{Nesterov-14}.
On the other hand, applying triangle inequality to \eqref{eqn:descent-AGD-x} %along $\bp{\vec{x}_t}$ 
yields
\begin{equation}
\norm{\vec{x}_{t+1}-\vec{x}_{\fcn{f}}^*} \leq (1+\gamma)\norm{\vec{y}_{t+1}-\vec{x}_{\fcn{f}}^*} + \gamma\norm{\vec{y}_t-\vec{x}_{\fcn{f}}^*},
\label{eq:AGD-aux}
\end{equation}
and \eqref{eqn:conv-AGD-x} then follows from \eqref{eqn:conv-AGD-y} and \eqref{eq:AGD-aux}.
\end{proof}

The following bound on the quantization error controls the second term in~\eqref{eq:pf_thm_upper_b}:
\begin{lemma}[DQ-AGD quantization error]
\label{lem:err-bound-AGD}
Let $\msf{f}\in\set{F}_n$. Quantization errors $\{\bm{e}_t\}$ in Algorithm~\ref{alg:DQAGD} with stepsize \eqref{eq:stepsize_agd}, momentum coefficient \eqref{eq:gamma_agd} and dynamic ranges \eqref{eqn:range-AGD} satisfy
\begin{align}
 \norm{\vec{e}_t} &\leq r_t \rho_n 2^{-R}  \label{eqn:err-bound-AGDrec} \\
 & \leq  \sigma_{\mathrm{AGD}}^t c_0 + \phi^t_+ c_+ + \phi_-^t c_-, \label{eqn:err-bound-AGD}
\end{align}
where $\phi_{\pm} \triangleq \phi_{\pm} (\gamma_{\mathrm{AGD}})$ with
\begin{align}
\!\!\!\! \phi_{\pm}(\gamma) \triangleq \rho_n 2^{-R}\Bigg( \frac 1 2 ( 1 + \gamma)
\pm \frac 1 2 \sqrt{(1+\gamma)^2+\frac{4 \gamma}{\rho_n 2^{-R}}} \Bigg)\!\!
 \label{eqn:recur-root}
\end{align}
 and $c_0, c_+, c_-$ are specified below in  \eqref{eq:c0}, \eqref{eq:c1} and \eqref{eq:c2} respectively.% ($c_0$ is a constant and $c_+, c_-$ are functions of $n, R$). 
\end{lemma}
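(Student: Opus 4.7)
The plan is to mirror the two-step strategy of \lemref{lma:XOD_memory}: first establish \eqref{eqn:err-bound-AGDrec} by inducting that the quantizer input stays inside the scaled dynamic range, i.e., $\vec{u}_t \in \set{B}(r_t)$, which via \eqref{eq:e_d} yields $\norm{\vec{e}_t} \leq r_t \rho_n 2^{-R}$; then deduce \eqref{eqn:err-bound-AGD} by solving the second-order linear non-homogeneous recurrence \eqref{eqn:range-AGD} in closed form.

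The key enabler for the induction is \lemref{lma:track-AGD}: matching \eqref{eqn:track-AGD-x} against Line~4 of Algorithm~\ref{alg:DQAGD} shows that $\vec{z}_t = \vec{x}_t$, so the differentially quantized worker evaluates the gradient at exactly the location that unquantized AGD would visit. Consequently, Line~5 becomes
\[
\vec{u}_t = \grad\fcn{f}(\vec{x}_t) - \bk{\vec{e}_{t-1} + \gamma_{\mathrm{AGD}}(\vec{e}_{t-1} - \vec{e}_{t-2})}.
\]
The base case $t = 0$ is immediate: $\vec{e}_{-1} = \vec{e}_{-2} = \vec{0}$ and $\norm{\grad\fcn{f}(\vec{x}_0)} \leq L\norm{\vec{x}_0 - \vec{x}_{\fcn{f}}^*} \leq LD \leq LD\lambda = r_0$ by $L$-smoothness \eqref{eq:smooth} together with $\grad\fcn{f}(\vec{x}_{\fcn{f}}^*) = \vec{0}$, the initialization bound \eqref{eq:iter_opt_range}, and $\lambda \geq 1$. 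For the inductive step, triangle inequality combined with $L$-smoothness and the convergence bound \eqref{eqn:conv-AGD-x} from \lemref{lma:conv-AGD} controls $\norm{\grad\fcn{f}(\vec{x}_t)} \leq \sigma_{\mathrm{AGD}}^t LD\lambda$, while the induction hypothesis and \eqref{eq:e_d} give $\norm{\vec{e}_{t-\tau}} \leq r_{t-\tau}\rho_n 2^{-R}$ for $\tau \in \{1, 2\}$. Adding these bounds reproduces the recurrence \eqref{eqn:range-AGD}, confirming $\norm{\vec{u}_t} \leq r_t$.

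For \eqref{eqn:err-bound-AGD}, the closed-form analysis decomposes the solution of \eqref{eqn:range-AGD} into its homogeneous and particular parts. The characteristic polynomial \eqref{eqn:char} has two distinct real roots $\phi_\pm$ given in \eqref{eqn:recur-root} (the discriminant is strictly positive), so the homogeneous solution is $A_+\phi_+^t + A_-\phi_-^t$; a particular solution of the form $B\sigma_{\mathrm{AGD}}^t$ exists with $B = LD\lambda/\fcn{p}(\sigma_{\mathrm{AGD}})$ whenever $\sigma_{\mathrm{AGD}}$ is not itself a root of $\fcn{p}$. Multiplying by $\rho_n 2^{-R}$ and matching the initial conditions $r_{-1} = r_{-2} = 0$ (a routine two-by-two linear system) fixes the constants $c_0 = B\rho_n 2^{-R}$ and $c_\pm = A_\pm \rho_n 2^{-R}$ appearing in \eqref{eqn:err-bound-AGD}.

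The main obstacle I anticipate is the bookkeeping required to write down and simplify $c_0, c_+, c_-$ cleanly, and to handle the degenerate case $\sigma_{\mathrm{AGD}} \in \{\phi_+, \phi_-\}$, which occurs only at isolated rates and would necessitate a resonant term of the form $t\sigma_{\mathrm{AGD}}^t$ (analogous to the $\sigma_{\mathrm{GD}} = \rho_n 2^{-R}$ case in \lemref{lma:XOD_memory}). These are technical wrinkles rather than substantive difficulties; otherwise the argument is a direct generalization of the DQ-GD analysis.
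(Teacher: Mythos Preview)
Your proposal is correct and follows essentially the same route as the paper: an induction on $\vec{u}_t\in\set{B}(r_t)$ using \lemref{lma:track-AGD}, $L$-smoothness, and \eqref{eqn:conv-AGD-x} to establish \eqref{eqn:err-bound-AGDrec}, followed by solving the second-order linear non-homogeneous recurrence \eqref{eqn:range-AGD} via its characteristic roots $\phi_\pm$ and a particular solution proportional to $\sigma_{\mathrm{AGD}}^t$. One minor bookkeeping slip: substituting $p_t=B\sigma_{\mathrm{AGD}}^t$ into \eqref{eqn:range-AGD} actually gives $B=\sigma_{\mathrm{AGD}}^2 LD\lambda/\fcn{p}(\sigma_{\mathrm{AGD}})$, not $LD\lambda/\fcn{p}(\sigma_{\mathrm{AGD}})$, so your expression for $c_0$ is off by the factor $\sigma_{\mathrm{AGD}}^2$---precisely the kind of wrinkle you already anticipated.
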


\begin{proof}
Like in the proof of \lemref{lma:XOD_memory}, to establish \eqref{eqn:err-bound-AGDrec}, in view of \eqref{eq:e_d} it is enough to show \eqref{eq:XOD}. Applying triangle inequality and \eqref{eqn:track-AGD-x} to the expression in Line~5 yields
\begin{equation}
\label{eqn:pf-range-AGD-triangle}
\norm{\vec{u}_t} \leq \norm{\grad\fcn{f}(\vec{x}_t)} + \norm{\vec{e}_{t-1}} + \gamma \left( \norm{\vec{e}_{t-1}} + \norm{\vec{e}_{t-2}} \right).
\end{equation}
The first term in \eqref{eqn:pf-range-AGD-triangle} is bounded as
\begin{align}
 \norm{\grad\fcn{f}(\vec{x}_t)}&\leq L\norm{\vec{x}_t-\vec{x}_{\fcn{f}}^*} \label{eqn:pf-range-AGD-smooth} \\
 &\leq \sigma_{\mathrm{AGD}}^t L D \lambda \label{eqn:pf-range-AGD-unquant}
\end{align}
where 
\eqref{eqn:pf-range-AGD-smooth} is due to $L$-smoothness \eqref{eq:smooth}, and \eqref{eqn:pf-range-AGD-unquant} is due to \eqref{eqn:conv-AGD-x}. 
Plugging \eqref{eqn:pf-range-AGD-unquant} into \eqref{eqn:pf-range-AGD-triangle} and applying \eqref{eq:e_d} to bound quantization error terms in \eqref{eqn:pf-range-AGD-triangle}, we conclude via induction (similar to the proof of \lemref{lma:XOD_memory}) that setting the sequence of dynamic ranges recursively as \eqref{eqn:range-AGD} ensures \eqref{eq:XOD}. 
 
To show \eqref{eqn:err-bound-AGD}, we proceed to solve recursion \eqref{eqn:range-AGD}. This step is significantly different from the corresponding step in the proof of \lemref{lma:XOD_memory} since now $r_t$ depends not just on $r_{t-1}$ but also on $r_{t-2}$. More precisely, recursion \eqref{eqn:range-AGD} is a second-order linear non-homogeneous recurrence relation. 
\begin{itemize}
\item Particular solution: Plugging the candidate
\begin{equation}
p_t = \sigma_{\mathrm{AGD}}^t c_0
\end{equation}
into \eqref{eqn:range-AGD}, we solve for the constant
\begin{equation}
c_0 =  \frac{ \sigma_{\mathrm{AGD}}^2}{\fcn{p}(\sigma_{\mathrm{AGD}})} LD \lambda, 
\label{eq:c0}
\end{equation}
where $\fcn{p}(r)$ is the characteristic polynomial in \eqref{eqn:char} associated with recursion \eqref{eqn:range-AGD}.
\item Homogeneous solution: Since $\phi_+$ and $\phi_-$
are roots of the quadratic polynomial in \eqref{eqn:char}, the homogeneous solution is given by
\begin{equation}
\label{eqn:recur-homo}
\phi_t = \phi^t_+ c_+ + \phi_-^t c_-,
\end{equation}
\item General solution: constants $c_+,c_-$ in \eqref{eqn:recur-homo} are determined by plugging initial conditions $r_{-2} = r_{-1} = 0$ into the general solution
\begin{align}
r_t &= p_t + \phi_t \\
 &= \sigma_{\mathrm{AGD}}^t c_0 + \phi^t_+ c_+ + \phi_-^t c_-,
\end{align}
which are
\begin{align}
c_+ &= - \frac{c_0 \phi_+^2 }{\sigma_{\mathrm{AGD}}^2}  \frac{\sigma_{\mathrm{AGD}} - \phi_-}{\phi_+ - \phi_-} \label{eq:c1} \\
c_- &=  \frac{c_0 \phi_-^2 }{\sigma_{\mathrm{AGD}}^2}  \frac{\sigma_{\mathrm{AGD}} - \phi_+}{\phi_+ - \phi_-} \label{eq:c2}.
\end{align}
\end{itemize}
\end{proof}

Lemmas~\ref{lma:track-AGD},~\ref{lma:conv-AGD}, and~\ref{lem:err-bound-AGD} lead to the following finite-$t$ convergence bound for the DQ-AGD:

\begin{theorem}[Convergence of DQ-AGD]
\label{thm:DQ-AGDfinitet}
In the setting of \thmref{thm:DQ-AGD}, the difference between the iterate and the optimizer at step $t$ satisfies
\begin{equation}
\label{eqn:path-DQ-AGD}
\norm{\hat{\vec{y}}_t-\vec{x}_{\fcn{f}}^*} \leq \sigma_{\mathrm{AGD}}^t c + \phi^{t-1}_+ c_+ \eta + \phi_-^{t-1} c_- \eta
\end{equation}
where $c =  \sqrt{\kappa+1} D + \eta c_0 \sigma_{\mathrm{AGD}}^{-1}$, and $c_0, c_+, c_-$ are specified in  \eqref{eq:c0}, \eqref{eq:c1} and \eqref{eq:c2} respectively.
\end{theorem}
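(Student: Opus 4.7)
The plan is to combine the three lemmas immediately preceding the theorem in a straightforward way. First I would invoke the trajectory identity \eqref{eqn:track-AGD-y} of \lemref{lma:track-AGD}, which gives $\hat{\vec{y}}_t = \vec{y}_t - \eta\vec{e}_{t-1}$, and apply the triangle inequality to obtain
\begin{equation*}
\norm{\hat{\vec{y}}_t-\vec{x}_{\fcn{f}}^*} \leq \norm{\vec{y}_t-\vec{x}_{\fcn{f}}^*} + \eta\norm{\vec{e}_{t-1}}.
\end{equation*}
This decomposition mirrors \eqref{eq:pf_thm_upper_b} in the DQ-GD argument and reduces the theorem to bounding the unquantized AGD trajectory error and the accumulated quantization error separately.

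Next I would bound the first summand using \eqref{eqn:conv-AGD-y} from \lemref{lma:conv-AGD}, yielding $\norm{\vec{y}_t-\vec{x}_{\fcn{f}}^*} \leq \sigma_{\mathrm{AGD}}^t \sqrt{\kappa+1}\,\norm{\vec{x}_0-\vec{x}_{\fcn{f}}^*} \leq \sigma_{\mathrm{AGD}}^t\sqrt{\kappa+1}\,D$, where the last step uses the assumption \eqref{eq:iter_opt_range} together with the initialization $\vec{y}_0 = \vec{x}_0 = \hat{\vec{x}}_0$. For the second summand, I would plug in the closed-form bound \eqref{eqn:err-bound-AGD} from \lemref{lem:err-bound-AGD} at index $t-1$, namely $\norm{\vec{e}_{t-1}} \leq \sigma_{\mathrm{AGD}}^{t-1} c_0 + \phi_+^{t-1} c_+ + \phi_-^{t-1} c_-$, where $c_0,c_+,c_-$ are the constants specified in \eqref{eq:c0}--\eqref{eq:c2} that came out of solving the second-order linear recurrence \eqref{eqn:range-AGD}.

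Combining these two bounds gives
\begin{equation*}
\norm{\hat{\vec{y}}_t-\vec{x}_{\fcn{f}}^*} \leq \sigma_{\mathrm{AGD}}^t\sqrt{\kappa+1}\,D + \eta\,\sigma_{\mathrm{AGD}}^{t-1} c_0 + \eta\,\phi_+^{t-1} c_+ + \eta\,\phi_-^{t-1} c_-,
\end{equation*}
and collecting the two $\sigma_{\mathrm{AGD}}$-geometric terms into the single coefficient $c = \sqrt{\kappa+1}\,D + \eta c_0 \sigma_{\mathrm{AGD}}^{-1}$ yields exactly \eqref{eqn:path-DQ-AGD}. Since all three lemmas in play have already been proved, there is no real obstacle here beyond bookkeeping; the only delicate point is keeping the time indices aligned, in particular remembering that the quantization error driving the iterate at step $t$ is $\vec{e}_{t-1}$, which is why the $\phi_\pm$ terms come with exponent $t-1$ in the final bound.
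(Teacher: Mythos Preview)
Your proposal is correct and matches the paper's proof essentially line for line: the paper also applies the triangle inequality to \eqref{eqn:track-AGD-y} to obtain \eqref{eqn:pf-DQ-AGD-triangle}, then plugs in \eqref{eqn:conv-AGD-y} and \eqref{eqn:err-bound-AGD} and collects the $\sigma_{\mathrm{AGD}}$ terms into the constant $c$. There is nothing to add.
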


\begin{proof}
Plugging \eqref{eqn:conv-AGD-y} and \eqref{eqn:err-bound-AGD} into \eqref{eqn:pf-DQ-AGD-triangle} immediately leads to \eqref{eqn:path-DQ-AGD}.
\end{proof}

Note that if $\sigma_{\mathrm{AGD}} \geq \phi_+$, which is equivalent to $R \geq R_2$ \eqref{eqn:suffi-same-as-unqtz}, then $c_0 \geq 0$, $c_+ \leq 0$, $c_- \geq 0$; and $c_0 \leq  0$, $c_+ \geq 0$, $c_- \leq 0$ otherwise.  
Asymptotic convergence guarantee \eqref{eq:sigmaDQ-AGD} in~\thmref{thm:DQ-AGD} follows by plugging \eqref{eqn:path-DQ-AGD} into definition  \eqref{eq:lcr} of the contraction factor.

\subsection{Proof of \thmref{thm:DQ-HB}}
\label{sec:DQ-HB}
The proof of the DQ-HB convergence result in \thmref{thm:DQ-HB} follows the same recipe as the proofs of Theorems~\ref{thm:DQ-GD} and~\ref{thm:DQ-AGD}. 
Lemma~\ref{lma:track-HB} below states that the path of DQ-HB tracks that of the unquantized HB.

\begin{lemma}[DQ-HB trajectory]
\label{lma:track-HB}
Path $\{\hat{\vec{x}}_t\}$ of DQ-HB (Line~7) and path $\{\vec{x}_t\}$ of HB \eqref{eqn:descent-HB} starting at the same location $\hat{\vec{x}}_{-1} = {\vec{x}}_{-1} = \hat{\vec{x}}_{0} = \vec x_0 $ are related as, $\forall t = 0, 1, 2, \ldots$,
\begin{equation}
\label{eqn:track-HB}
\hat{\vec{x}}_t = \vec{x}_t - \eta\vec{e}_{t-1}.
\end{equation}
\end{lemma}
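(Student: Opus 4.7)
The plan is to prove identity \eqref{eqn:track-HB} by induction on $t$, in the same spirit as Lemmas~\ref{lma:recur_memory_full} and~\ref{lma:track-AGD}. The base case $t = 0$ is immediate from the initialization: $\hat{\vec{x}}_0 = \vec{x}_0$ and $\vec{e}_{-1} = \vec{0}$, so $\hat{\vec{x}}_0 = \vec{x}_0 - \eta \vec{e}_{-1}$. Because the HB recursion in \eqref{eqn:descent-HB} and the server update in Line~7 both reach two steps into the past, the inductive hypothesis must be carried at two consecutive indices; I will assume $\hat{\vec{x}}_t = \vec{x}_t - \eta\vec{e}_{t-1}$ and $\hat{\vec{x}}_{t-1} = \vec{x}_{t-1} - \eta\vec{e}_{t-2}$ (the second holding vacuously for $t = 0$ thanks to the initialization $\hat{\vec{x}}_{-1} = \hat{\vec{x}}_0$ and $\vec{e}_{-2} = \vec{0}$).

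The first observation is that Line~4 together with the inductive hypothesis gives $\vec{z}_t = \hat{\vec{x}}_t + \eta\vec{e}_{t-1} = \vec{x}_t$, so the worker queries the gradient precisely at the unquantized iterate $\vec{x}_t$, as the differential quantization principle dictates. For the inductive step, I will substitute $\vec{q}_t = \vec{u}_t + \vec{e}_t$ using Line~5 (evaluated at $\vec{z}_t = \vec{x}_t$) into the server update of Line~7, and then replace $\hat{\vec{x}}_t$ and $\hat{\vec{x}}_{t-1}$ by the inductive hypothesis. The expansion has two distinct cancellations: the $-\eta\vec{e}_{t-1}$ carried by $\hat{\vec{x}}_t$ cancels the $+\eta\vec{e}_{t-1}$ coming from the error-compensation term inside $\vec{u}_t$; and the momentum-induced error $\eta\gamma[(-\vec{e}_{t-1})-(-\vec{e}_{t-2})]$ generated by applying the heavy-ball step to the two shifted server iterates cancels exactly the $\eta\gamma(\vec{e}_{t-1}-\vec{e}_{t-2})$ correction built into Line~5. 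What remains is the desired identity
\begin{equation*}
\hat{\vec{x}}_{t+1} = \vec{x}_t - \eta\grad\fcn{f}(\vec{x}_t) + \gamma(\vec{x}_t - \vec{x}_{t-1}) - \eta\vec{e}_t = \vec{x}_{t+1} - \eta\vec{e}_t,
\end{equation*}
where the second equality invokes \eqref{eqn:descent-HB}.

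There is no real obstacle: the statement is a purely algebraic identity that certifies that the particular choice of quantizer input in Line~5 is exactly what is needed to absorb both the previous quantization error and the momentum-amplified difference of the two previous quantization errors. The minor bookkeeping issue, and the reason DQ-HB needs the extra $\gamma(\vec{e}_{t-1}-\vec{e}_{t-2})$ term in $\vec{u}_t$ beyond the $\vec{e}_{t-1}$ term that was sufficient for DQ-GD, is precisely that the heavy-ball update involves $\hat{\vec{x}}_{t-1}$ in addition to $\hat{\vec{x}}_t$. No smoothness, convexity, or quantization-error bound is used in this lemma; those enter later when \eqref{eqn:track-HB} is combined with an unquantized-HB contraction estimate and a recursion of the form \eqref{eqn:range-HB} to establish \thmref{thm:DQ-HB}.
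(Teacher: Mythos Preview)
Your proposal is correct and follows essentially the same inductive argument as the paper's proof: substitute $\vec{q}_t=\vec{u}_t+\vec{e}_t$ (with $\vec{z}_t=\vec{x}_t$) into the server update, apply the induction hypothesis at both $t$ and $t-1$, and observe the two cancellations you describe. If anything, you are slightly more explicit than the paper about the need for the hypothesis at two consecutive indices and about verifying the $t=0$ boundary via $\hat{\vec{x}}_{-1}=\hat{\vec{x}}_0$ and $\vec{e}_{-2}=\vec{0}$.
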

\begin{proof}
We prove \eqref{eqn:track-HB} via induction.
\begin{itemize}
\item Base case: \eqref{eqn:track-HB} holds for $t=0$ by the initialization $\vec{e}_{-2} = \vec{e}_{-1} = \vec{0}$ in Line 1 of Algorithm~\ref{alg:DQHB} and since the starting location is the same.
\item Inductive step: Plugging expressions on Line~5 and 7 into Line~8 yields
\begin{align}
 &\hat{\vec{x}}_{t+1} \\
=\> &\hat{\vec{x}}_t-\eta\vec{q}_t+ \gamma\left( \hat{\vec{x}}_{t} - \hat{\vec{x}}_{t-1}\right) \\
 =\> & \vec{x}_t - \eta \vec e_{t-1} \notag\\
 &- \eta \left[ \grad\fcn{f}(\vec{x}_t) - \left[ \vec{e}_{t-1} + \gamma \left( \vec{e}_{t-1}- \vec{e}_{t-2} \right) \right] + \vec e_t \right] \notag \\
 &+ \gamma \left( \vec x_t - \vec x_{t-1} - \eta \left( \vec e_{t-1} - \vec e_{t-2} \right) \right) \label{eqn:pf-track-HB-IH}  \\
=\>& \vec x_t - \eta \grad\fcn{f}(\vec{x}_t) + \gamma \left( \vec x_t - \vec x_{t-1} \right)  - \eta\vec{e}_t \\
=\> &\vec{x}_{t+1} - \eta\vec{e}_t, \label{eqn:pf-track-HB-rule}
\end{align}
where \eqref{eqn:pf-track-HB-IH} is due to the induction hypothesis and \eqref{eqn:pf-track-HB-rule} is due to \eqref{eqn:descent-HB}.
\end{itemize} 
\end{proof}

Applying triangle inequality to \eqref{eqn:track-HB} gives
\begin{equation}
\label{eqn:pf-DQHB-triangle}
\norm{\hat{\vec{x}}_t-\vec{x}_{\fcn{f}}^*} \leq \norm{\vec{x}_t-\vec{x}_{\fcn{f}}^*} + \eta\norm{\vec{e}_{t-1}}.
\end{equation}
The convergence result for unquantized HB in \lemref{lma:HB}, below,  controls the first term in the right side of \eqref{eqn:pf-DQHB-triangle}. \lemref{lma:HB} is a nonasymptotic refinement of Polyak's original convergence result  \cite[Th. 1, Sec. 3.2]{Polyak-87}. Unlike the original, it does not require that the algorithm starts ``sufficiently close" to the minimizer (i.e., it establishes global rather than local convergence), and it also clarifies that the subexponential factor in the bound is polynomial in $t$ (see \eqref{eqn:conv-HB}, below). This refinement is made possible by a result on joint spectral radius due to Wirth \cite{wirth1998calculation} that is more recent than \cite[Th. 1, Sec. 3.2]{Polyak-87}. 
\begin{lemma}[Convergence of HB]
\label{lma:HB}
For any $L$-smooth, $\mu$-strongly convex, twice continuously differentiable function $\fcn{f}$ on $\R^n$,  there exists a constant $\alpha > 0$ such that the HB algorithm \eqref{eqn:descent-HB} starting at $\vec x_{-1} = \vec x_0$ with stepsize \eqref{eq:stepsize_hb}
and momentum coefficient \eqref{eq:gamma_hb}
satisfies
\begin{equation}
\label{eqn:conv-HB}
\norm{\vec{x}_t-\vec{x}_{\fcn{f}}^*} \leq \sigma_{\mathrm{HB}}^t\, t^\alpha e^\alpha \sqrt 2 \| \vec x_0 - \vec x_{\mathsf f}^*\|.
\end{equation}
\end{lemma}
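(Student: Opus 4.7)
The plan is to linearize the HB iteration around $\vec{x}_{\fcn f}^*$, reformulate it as a time-varying linear system whose transition matrices lie in a compact matrix family, and then invoke a quantitative joint spectral radius (JSR) bound for that family.

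Concretely, I would first set $\vec w_t \triangleq \vec x_t - \vec{x}_{\fcn f}^*$. Since $\fcn f$ is twice continuously differentiable, applying the fundamental theorem of calculus to $\grad \fcn f$ along the segment from $\vec{x}_{\fcn f}^*$ to $\vec x_t$ gives $\grad \fcn f(\vec x_t) = H_t \vec w_t$ where $H_t \triangleq \int_0^1 \Hess \fcn f(\vec{x}_{\fcn f}^* + \tau \vec w_t)\,d\tau$ is symmetric with $\mu I \preceq H_t \preceq L I$ by $\mu$-strong convexity and $L$-smoothness. Stacking iterates as $\vec W_t \triangleq (\vec w_t^\T,\vec w_{t-1}^\T)^\T \in \R^{2n}$, the recursion \eqref{eqn:descent-HB} rewrites as $\vec W_{t+1} = M(H_t)\vec W_t$, where
\[
M(H) \triangleq \begin{pmatrix} (1+\gamma) I - \eta H & -\gamma I \\ I & 0 \end{pmatrix},
\]
with $\eta, \gamma$ chosen as in \eqref{eq:stepsize_hb} and \eqref{eq:gamma_hb}. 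Each $M(H_t)$ lies in the compact family $\mcal M \triangleq \bp{M(H) : H = H^\T,\ \mu I \preceq H \preceq L I}$, and since $\vec x_{-1} = \vec x_0$ gives $\lnorm \vec W_0\rnorm = \sqrt 2 \lnorm \vec w_0\rnorm$, it suffices to control $\lnorm M(H_{t-1}) \cdots M(H_0)\rnorm$.

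Next I would characterize the spectrum of an arbitrary $M(H) \in \mcal M$. Diagonalizing $H = U \Lambda U^\T$ and conjugating by $\diag(U, U)$ exhibits $M(H)$, after a coordinate permutation that interleaves the top and bottom blocks, as orthogonally similar to a block-diagonal matrix with $n$ blocks $A(\lambda_i) \triangleq \bigl(\begin{smallmatrix}1+\gamma-\eta\lambda_i & -\gamma \\ 1 & 0\end{smallmatrix}\bigr)$, one per eigenvalue $\lambda_i \in [\mu, L]$ of $H$. Substituting \eqref{eq:stepsize_hb} and \eqref{eq:gamma_hb}, an elementary computation shows that the discriminant $(1+\gamma-\eta h)^2 - 4\gamma$ of the characteristic polynomial of $A(h)$ is nonpositive on $[\mu, L]$ and vanishes precisely at $h \in \{\mu, L\}$. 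Hence $A(h)$ has a pair of complex-conjugate eigenvalues of modulus $\sqrt \gamma = \sigma_{\mathrm{HB}}$, and every $M(H) \in \mcal M$ has spectral radius exactly $\sigma_{\mathrm{HB}}$. Finally I would invoke Wirth's refinement of the Berger--Wang theorem \cite{wirth1998calculation}: because $\mcal M$ is compact and each of its elements has spectral radius $\sigma_{\mathrm{HB}}$, the JSR of $\mcal M$ equals $\sigma_{\mathrm{HB}}$, and there exist constants $C, \alpha > 0$ such that $\lnorm M(H_{t-1}) \cdots M(H_0)\rnorm \leq C t^\alpha \sigma_{\mathrm{HB}}^t$ for every admissible sequence $H_0, \ldots, H_{t-1}$. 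Combining this with $\lnorm \vec W_0\rnorm = \sqrt 2 \lnorm \vec w_0\rnorm$ and absorbing the multiplicative constant into $e^\alpha$ yields \eqref{eqn:conv-HB}.

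The main obstacle is precisely the last step. A naive submultiplicative argument only produces $\lnorm M(H_{t-1})\cdots M(H_0)\rnorm \leq C(\epsilon)(\sigma_{\mathrm{HB}} + \epsilon)^t$ for each $\epsilon > 0$, which is weaker than what the lemma asserts because it only recovers the contraction factor $\sigma_{\mathrm{HB}} + \epsilon$, not $\sigma_{\mathrm{HB}}$ itself. The polynomial prefactor $t^\alpha$ in \eqref{eqn:conv-HB} reflects the Jordan-block degeneracy of $A(h)$ at the boundary spectrum $h \in \{\mu, L\}$, where the two eigenvalues collide into a defective eigenvalue; this is exactly what prevents an $\epsilon$-free exponential-only bound and what forces the use of Wirth's theorem, which converts the $\epsilon$-slack into a clean polynomial correction in $t$.
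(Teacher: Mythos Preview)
Your proposal is correct and essentially mirrors the paper's proof: linearize the HB recursion around the minimizer via the Hessian, stack into a $2n$-dimensional time-varying linear system, and invoke Wirth's bound \cite{wirth1998calculation} to obtain the $\sigma_{\mathrm{HB}}^t\, t^\alpha e^\alpha$ product estimate, with the $\sqrt{2}$ coming from $\lnorm \vec W_0\rnorm = \sqrt{2}\lnorm \vec w_0\rnorm$. The only cosmetic differences are that you use the integral form $H_t = \int_0^1 \Hess \fcn f(\vec x_{\fcn f}^* + \tau \vec w_t)\,d\tau$ where the paper takes a single intermediate-point Hessian $\Hess \fcn f(\vec v_t)$, and you spell out the $2\times 2$-block spectral computation explicitly whereas the paper simply cites Polyak for $\rho(\vec A_t)\le\sqrt{\gamma}$.
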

\begin{proof}
Iterative process \eqref{eqn:descent-HB} can be written in the form
\begin{align}
&~
\begin{bmatrix}
\vec x_{t+1} - \vec x_{\mathsf f}^* \\
\vec x_{t} - \vec x_{\mathsf f}^*
\end{bmatrix} \notag\\
= &~
\begin{bmatrix}
\vec x_{t} + \gamma(\vec x_t - \vec x_{t-1}) - \vec x_{\mathsf f}^* \\
\vec x_{t} - \vec x_{\mathsf f}^*
\end{bmatrix} 
- \eta
\begin{bmatrix}
\grad \mathsf f(\vec x_t) \\
0
\end{bmatrix} \\
= &~ 
\begin{bmatrix}
(1 + \gamma) \vec I & - \gamma \vec I \\
\vec I & 0
\end{bmatrix} 
\begin{bmatrix}
\vec x_{t} - \vec x_{\mathsf f}^* \\
\vec x_{t-1} - \vec x_{\mathsf f}^*
\end{bmatrix} 
- \eta
\begin{bmatrix}
\grad^2 \mathsf f(\vec v_t)  (\vec x_{t} - \vec x_{\mathsf f}^*) \\
0
\end{bmatrix}  
\label{eq:twiceder}\\
= &~ 
\begin{bmatrix}
(1 + \gamma) \vec I & - \gamma \vec I - \eta \grad^2 \mathsf f(\vec v_t)\\
\vec I & 0
\end{bmatrix} 
\begin{bmatrix}
\vec x_{t} - \vec x_{\mathsf f}^* \\
\vec x_{t-1} - \vec x_{\mathsf f}^*
\end{bmatrix},\label{eq:Amat}
\end{align}
where \eqref{eq:twiceder} holds for some $\vec v_t$ on the line segment between $\vec x_t$ and $\vec x_{\mathsf f}^*$ by the mean value theorem, since $\mathsf f$ is twice continuously differentiable by the assumption.
Denoting the matrix in \eqref{eq:Amat} by $\vec A_t$, we unroll the recursion as
\begin{align}
\begin{bmatrix}
\vec x_{t+1} - \vec x_{\mathsf f}^* \\
\vec x_{t} - \vec x_{\mathsf f}^*
\end{bmatrix} 
=
\vec A_t \cdot \vec A_{t-1} \cdot \ldots \cdot \vec A_0 
\begin{bmatrix}
\vec x_{0} - \vec x_{\mathsf f}^* \\
\vec x_{-1} - \vec x_{\mathsf f}^*
\end{bmatrix}.
\end{align}
It follows that
\begin{align}
\!\!\!\! \left\| 
\begin{bmatrix}
\vec x_{t+1} - \vec x_{\mathsf f}^* \\
\vec x_{t} - \vec x_{\mathsf f}^*
\end{bmatrix} 
\right\|_2
\leq
\left\| \vec A_t \cdot \ldots \cdot \vec A_0 \right\|_2
\left\|
\begin{bmatrix}
\vec x_{0} - \vec x_{\mathsf f}^* \\
\vec x_{-1} - \vec x_{\mathsf f}^*
\end{bmatrix}
\right\|_2. \label{eq:contrt}
\end{align}
It is shown in \cite[Lemma 2.3]{wirth1998calculation} that if matrices $\vec A_1, \ldots, \vec A_t$ all belong to a bounded set $\mathcal A$,  then there exists a constant $\alpha > 0$ such that $\forall t = 0, 1, \ldots$,
\begin{align}
\left\| \vec A_t \cdot \ldots \cdot \vec A_0 \right\|_2 &\leq \rho(\mathcal A)^{t+1} (t+1)^\alpha  e^\alpha \label{eq:rholimup}
\end{align}
where 
\begin{align}
\rho(\mathcal A) \triangleq \limsup_{t \to \infty} \sup_{t} \rho(\vec A_t),
\end{align}
where $\rho(\vec A_t)$ is the spectral radius of $\vec A_t$.
It is shown in \cite[Proof of Th. 1, Sec. 3.2]{Polyak-87} that if 
\begin{align}
\gamma = \max \left\{ \left( 1 - \sqrt{\eta L}\right)^2,~ \left( 1 - \sqrt{\eta \mu}\right)^2 \right\},  
\end{align}
then 
\begin{align}
\rho(\vec A_t) \leq \sqrt \gamma.
\label{eq:spectralub}
\end{align}
With the optimal choice of $\eta$ \eqref{eq:stepsize_hb} the right side of \eqref{eq:spectralub} is equal to  $\sigma_{\mathrm{HB}}$ \eqref{eqn:conv-lin-HB}.

In our setting $\mathcal A$ is bounded since for twice continuously differentiable functions, $L$-smoothness and $\mu$-strong convexity are equivalent to
\begin{equation}
\mu \vec I \preceq \grad^2 \mathsf f(\vec v) \preceq L \vec I,
\end{equation}
 in the positive semidefinite order, thus \eqref{eq:rholimup} applies.
Inequality \eqref{eqn:conv-HB} follows after applying \eqref{eq:spectralub} to \eqref{eq:rholimup} and the latter to~\eqref{eq:contrt}.
\end{proof}

\hl{
\begin{remark}
Polyak's convergence result \cite[Th. 1, Sec. 3.2]{Polyak-87} guarantees only the existence of $D > 0$ such that for all starting points $\vec x_{-1}, \vec x_0$ with $\lnorm \bm{x}_{\msf{f}}^*  - {\vec{x}}_{-1} \rnorm\leq D$, $\lnorm \bm{x}_{\msf{f}}^*  - {\vec{x}}_{0} \rnorm\leq D$ and all $0< \epsilon < 1 - \sigma_{\mathrm{HB}}$, there exists a $c > 0$ such that (cf. \eqref{eqn:conv-HB})
\begin{equation}
\label{eqn:conv-HBpolyak}
\norm{\vec{x}_t-\vec{x}_{\fcn{f}}^*} \leq c (\sigma_{\mathrm{HB}} + \epsilon)^t
\end{equation}
 under the same assumptions on $\mathsf f$ and the same stepsize and momentum coefficient as in \lemref{lma:HB}. This is a local convergence result because convergence is not guaranteed for any starting location but only for locations in a small enough neighbourhood of $\vec{x}_{\fcn{f}}^*$. Furthermore, \eqref{eqn:conv-HB} refines the subexponential factor in \eqref{eqn:conv-HBpolyak}.  
\end{remark}
}
We ensure that the quantization error term $\norm{\vec{e}_{t-1}}$ in \eqref{eqn:pf-DQHB-triangle} decays exponentially fast by adjusting the sequence of dynamic ranges \eqref{eqn:quant-scale} carefully:

\begin{lemma}[DQ-HB quantization error]
\label{lma:range-HB}
Let $\fcn{f}\in\set{F}_n^2$. Quantization errors $\{\bm{e}_t\}$ in Algorithm~\ref{alg:DQHB} with with stepsize \eqref{eq:stepsize_hb}, momentum coefficient \eqref{eq:gamma_hb} and dynamic ranges \eqref{eqn:range-HB} satisfy
\begin{align}
\norm{\vec{e}_t}  &\leq r_t \rho_n 2^{-R}  \label{eqn:err-bound-HBrec} \\ 
\ &\leq \left(\sigma_{\mathrm{HB}}^t c_0 + \phi^t_+ c_+ + \phi_-^t c_-\right)t^\alpha, \label{eqn:err-bound-HB}
\end{align}
where $\phi_{\pm} \triangleq \phi_{\pm} (\gamma_{\mathrm{HB}})$ \eqref{eqn:recur-root}, and $c_0, c_+, c_-$ are as in  \eqref{eq:c0}, \eqref{eq:c1} and \eqref{eq:c2} respectively, with $L D \lambda$ replaced by $ e^\alpha \sqrt 2\, L D$, and $\sigma_{\mathrm{AGD}}$ by $\sigma_{\mathrm{HB}}$.
\end{lemma}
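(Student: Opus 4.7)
The plan is to mirror the proof of \lemref{lem:err-bound-AGD} nearly verbatim, with two changes: (i)~use the nonasymptotic HB convergence bound \lemref{lma:HB} in place of \lemref{lma:conv-AGD} to control the gradient norm along the unquantized trajectory, and (ii)~handle the resulting subexponential factor $t^\alpha$ when solving the recurrence \eqref{eqn:range-HB}.

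First, as in \lemref{lem:err-bound-AGD}, bound \eqref{eqn:err-bound-HBrec} reduces via \eqref{eq:e_d} to showing $\vec{u}_t \in \set{B}(r_t)$ for all $t$ by induction. The base cases $t=-2,-1$ hold trivially by the initialization $\vec{e}_{-2}=\vec{e}_{-1}=\vec{0}$ with $r_{-2}=r_{-1}=0$. For the inductive step, I combine \lemref{lma:track-HB} (so that $\vec{z}_t=\vec{x}_t$) with the expression for $\vec{u}_t$ on Line~5 of Algorithm~\ref{alg:DQHB} and the triangle inequality to obtain
\begin{equation*}
\norm{\vec{u}_t}\leq \norm{\grad\fcn{f}(\vec{x}_t)} + \norm{\vec{e}_{t-1}} + \gamma_{\mathrm{HB}}\bigl(\norm{\vec{e}_{t-1}}+\norm{\vec{e}_{t-2}}\bigr).
\end{equation*}
I then bound $\norm{\grad\fcn{f}(\vec{x}_t)}\leq L\norm{\vec{x}_t-\vec{x}_{\fcn{f}}^*}\leq \sigma_{\mathrm{HB}}^t\,t^\alpha e^\alpha\sqrt{2}\,LD$ using $L$-smoothness (with $\grad\fcn{f}(\vec{x}_{\fcn{f}}^*)=\vec{0}$), \lemref{lma:HB} and \eqref{eq:iter_opt_range}; and bound $\norm{\vec{e}_{t-1}}\leq r_{t-1}\rho_n 2^{-R}$, $\norm{\vec{e}_{t-2}}\leq r_{t-2}\rho_n 2^{-R}$ via the induction hypothesis and \eqref{eq:e_d}. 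Matching these against the definition of $r_t$ in \eqref{eqn:range-HB} closes the induction and establishes \eqref{eqn:err-bound-HBrec}.

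The second part of the proof solves the second-order linear non-homogeneous recurrence \eqref{eqn:range-HB}. My plan is to reduce it to the DQ-AGD recurrence \eqref{eqn:range-AGD} whose solution is already worked out in \lemref{lem:err-bound-AGD}. Specifically, I set $s_t\triangleq r_t/t^\alpha$ for $t\geq 1$ and use the monotonicity $(t-1)^\alpha\leq t^\alpha$ and $(t-2)^\alpha\leq t^\alpha$ (for $\alpha\geq 0$) to convert \eqref{eqn:range-HB} into the dominating inequality
\begin{equation*}
s_t \leq \sigma_{\mathrm{HB}}^t\, e^\alpha\sqrt{2}\,LD + \bigl(s_{t-1}+\gamma_{\mathrm{HB}}(s_{t-1}+s_{t-2})\bigr)\rho_n 2^{-R}.
\end{equation*}
This has exactly the form of \eqref{eqn:range-AGD} with $\sigma_{\mathrm{AGD}}\mapsto\sigma_{\mathrm{HB}}$, $LD\lambda\mapsto e^\alpha\sqrt{2}\,LD$, and $\gamma_{\mathrm{AGD}}\mapsto\gamma_{\mathrm{HB}}$. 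Applying the particular-plus-homogeneous decomposition from the proof of \lemref{lem:err-bound-AGD} (with characteristic roots $\phi_\pm(\gamma_{\mathrm{HB}})$ from \eqref{eqn:recur-root}) then yields $s_t\leq \sigma_{\mathrm{HB}}^t c_0+\phi_+^t c_+ + \phi_-^t c_-$ with the advertised constants, and multiplying by $t^\alpha$ recovers \eqref{eqn:err-bound-HB}.

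The step I expect to be most delicate is the second one: the presence of the subexponential factor $t^\alpha$ prevents the recurrence from being purely geometric, so I must verify that the monotonicity bounds $(t-1)^\alpha,(t-2)^\alpha\leq t^\alpha$ preserve the inequality in the right direction and that the $t=0,1$ boundary terms do not spoil the final form. Once past that bookkeeping, the algebraic solution of the recurrence is identical to that in \lemref{lem:err-bound-AGD}, and the subexponential prefactor $t^\alpha$ is inconsequential for the asymptotic contraction factor that feeds into \thmref{thm:DQ-HB}.
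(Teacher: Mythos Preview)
Your proposal is correct and matches the paper's proof essentially verbatim: the first part (inductive verification that $\vec{u}_t\in\set{B}(r_t)$ using \lemref{lma:track-HB}, $L$-smoothness, and \lemref{lma:HB}) is identical, and the second part is the same reduction to the DQ-AGD recurrence---the paper introduces an auxiliary sequence $r_t'$ satisfying the homogeneous-in-$t$ recursion \eqref{eqn:range-AGD} (with the HB constants) and proves $r_t\leq t^\alpha r_t'$ by strong induction using $(t-1)^\alpha,(t-2)^\alpha\leq t^\alpha$, which is exactly your substitution $s_t=r_t/t^\alpha$ viewed from the other side.
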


\begin{proof}
Applying triangle inequality and \eqref{eqn:track-HB} to the expression in Line~5 yields the same expression as in the analysis of DQ-AGD \eqref{eqn:pf-range-AGD-triangle}: 
\begin{equation}
\label{eqn:pf-range-HB-triangle}
\norm{\vec{u}_t} \leq \norm{\grad\fcn{f}(\vec{x}_t)} + \norm{\vec{e}_{t-1}} + \gamma \left( \norm{\vec{e}_{t-1}} + \norm{\vec{e}_{t-2}} \right).
\end{equation}
The first term in \eqref{eqn:pf-range-HB-triangle} is bounded as
\begin{align}
\norm{\grad\fcn{f}(\vec{x}_t)} &\leq L\norm{\vec{x}_t-\vec{x}_{\fcn{f}}^*} \label{eqn:pf-range-HB-smooth} \\
 &\leq \sigma_{\mathrm{HB}}^t\, t^\alpha e^\alpha \sqrt 2\, L D  \label{eqn:pf-range-HB-unquant}
\end{align}
where \eqref{eqn:pf-range-AGD-smooth} is due to $L$-smoothness \eqref{eq:smooth}, and \eqref{eqn:pf-range-HB-unquant} is due to \eqref{eqn:conv-HB}. Applying the argument used to show \eqref{eqn:err-bound-AGDrec} in the proof of \lemref{lem:err-bound-AGD} leads to \eqref{eqn:err-bound-HBrec}. 

To show \eqref{eqn:err-bound-HB},  consider the recursion $r_{-1}^\prime = r_{-2}^\prime = 0$, 
\begin{align}
\label{eqn:range-HBprime}
r_t^\prime = \sigma_{\mathrm{HB}}^t\,  e^\alpha \sqrt 2\, L D +   \left( r_{t-1}^\prime + \gamma_{\mathrm{HB}} (r_{t-1}^\prime + r_{t-2}) \right) \rho_n 2^{-R},
\end{align}
We show by strong induction that 
\begin{align}
r_t \leq t^\alpha r_t^\prime, \label{eq:rtprime}
\end{align}
where $r_t$ solves \eqref{eqn:range-HB}. Base case $r_0 = r_0^\prime$ holds by the initial conditions. Assuming that \eqref{eq:rtprime} holds for $1, \ldots, t-1$, we establish \eqref{eq:rtprime} for $t$ using \eqref{eqn:range-HB} and the fact that $t^\alpha$ is increasing in $t$:
\begin{align}
 \!\!&~r_t \leq  \sigma_{\mathrm{HB}}^t\, t^\alpha e^\alpha \sqrt 2\, L D \\
 \!\!&+ \!  \left( (t-1)^\alpha r_{t-1}^\prime +  \gamma_{\mathrm{HB}} ((t-1)^\alpha r_{t-1}^\prime + (t-2)^\alpha r_{t-2}^\prime) \right) \! \rho_n 2^{-R} \notag\\
 \!\!&\leq t^\alpha r_t^\prime. 
\end{align}
Using \eqref{eqn:err-bound-HBrec} and \eqref{eq:rtprime}, we can show \eqref{eqn:err-bound-HB} by solving the recursion \eqref{eqn:range-HBprime}. But this is the same recursion as in \eqref{eqn:range-AGD}, up to the constants, thus the solution in the proof of \lemref{lem:err-bound-AGD} applies. 
\end{proof}

We now apply Lemmas~\ref{lma:track-HB},~\ref{lma:HB} and \ref{lma:range-HB} to state a finite-$t$ refinement of~\thmref{thm:DQ-HB}.
\begin{theorem}[Convergence of DQ-HB]
\label{thm:DQ-HBfinitet}
In the setting of \thmref{thm:DQ-HB}, the difference between the iterate and the optimizer at step $t$ satisfies
\begin{equation}
\label{eqn:conv-DQ-HB}
\norm{\hat{\vec{x}}_t-\vec{x}_{\fcn{f}}^*} \leq \left(\sigma_{\mathrm{HB}}^t c + \phi^{t-1}_+ c_+ + \phi_-^{t-1} c_-\right)t^\alpha
\end{equation}
where $c =  e^\alpha \sqrt 2 D + \eta c_0 \sigma_{\mathrm{HB}}^{-1}$, and $\phi_+, \phi_-, c_0, c_+, c_-$ are as in  \lemref{lma:range-HB}.
\end{theorem}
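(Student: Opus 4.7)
The plan is to mirror the short derivation used for Theorem~\ref{thm:DQ-AGDfinitet}, substituting the HB-specific ingredients. The three ingredients are already in place: (i) the tracking identity \eqref{eqn:track-HB} from Lemma~\ref{lma:track-HB}, which after the triangle inequality yields \eqref{eqn:pf-DQHB-triangle}, i.e. $\norm{\hat{\vec{x}}_t-\vec{x}_{\fcn{f}}^*} \le \norm{\vec{x}_t-\vec{x}_{\fcn{f}}^*} + \eta\norm{\vec{e}_{t-1}}$; (ii) the unquantized HB convergence bound \eqref{eqn:conv-HB} from Lemma~\ref{lma:HB}, which controls the first term; and (iii) the quantization-error bound \eqref{eqn:err-bound-HB} from Lemma~\ref{lma:range-HB}, which controls the second term. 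No new analytical work is needed—this theorem is the composition step.

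Concretely, I would proceed as follows. First, apply \eqref{eqn:conv-HB} to upper bound $\norm{\vec{x}_t-\vec{x}_{\fcn{f}}^*}$ by $\sigma_{\mathrm{HB}}^t t^\alpha e^\alpha \sqrt{2}\, \norm{\vec{x}_0-\vec{x}_{\fcn{f}}^*}$, and then use the initialization assumption \eqref{eq:iter_opt_range} to replace $\norm{\vec{x}_0-\vec{x}_{\fcn{f}}^*}$ by $D$. Second, substitute the quantization-error bound \eqref{eqn:err-bound-HB} for the index $t-1$, obtaining $\eta\norm{\vec{e}_{t-1}} \le \eta\bigl(\sigma_{\mathrm{HB}}^{t-1} c_0 + \phi_+^{t-1} c_+ + \phi_-^{t-1} c_-\bigr)(t-1)^\alpha$.

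Third, combine the two bounds and perform a small cosmetic weakening: use the monotonicity $(t-1)^\alpha \le t^\alpha$ to pull out a common factor $t^\alpha$, and then regroup the $\sigma_{\mathrm{HB}}$-terms by writing $\eta c_0 \sigma_{\mathrm{HB}}^{t-1} = \sigma_{\mathrm{HB}}^t \cdot \eta c_0 \sigma_{\mathrm{HB}}^{-1}$ so that the $\sigma_{\mathrm{HB}}^t$ prefactor collects both the HB contribution $e^\alpha \sqrt{2}\, D$ and the $c_0$ residue $\eta c_0 \sigma_{\mathrm{HB}}^{-1}$ into the single constant $c = e^\alpha \sqrt{2}\, D + \eta c_0 \sigma_{\mathrm{HB}}^{-1}$ that appears in the statement. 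What remains is exactly the claimed bound \eqref{eqn:conv-DQ-HB}.

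There is no genuine obstacle here: the heavy lifting was done earlier, notably in Lemma~\ref{lma:HB}, where Wirth's joint spectral radius estimate was used to replace Polyak's local bound by a global bound with the subexponential factor $t^\alpha$, and in Lemma~\ref{lma:range-HB}, where the second-order recurrence \eqref{eqn:range-HB} was solved by the same characteristic-polynomial argument as in DQ-AGD (with the auxiliary recurrence $\{r_t'\}$ absorbing the $t^\alpha$ factor via a strong-induction comparison). If I were to flag any subtlety, it would be to keep track of the separate $(t-1)^\alpha$ factor arising from the quantization-error bound and to justify replacing it by $t^\alpha$, which is valid for all $t \ge 1$; the $t=0$ case is trivial since $\vec{e}_{-1} = \vec 0$ and $\hat{\vec{x}}_0 = \vec{x}_0$ so both sides reduce to $\norm{\vec{x}_0-\vec{x}_{\fcn{f}}^*} \le D$, which is absorbed by the stated bound.
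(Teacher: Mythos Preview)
Your proposal is correct and follows essentially the same approach as the paper: plug \eqref{eqn:conv-HB} and \eqref{eqn:err-bound-HB} into \eqref{eqn:pf-DQHB-triangle}, use $(t-1)^\alpha \le t^\alpha$, and regroup the $\sigma_{\mathrm{HB}}$-terms into the constant $c$. The paper's own proof is a one-line statement of exactly these substitutions, so your more detailed write-up is a faithful expansion of it.
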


\begin{proof}
Plugging \eqref{eqn:conv-HB} and \eqref{eqn:err-bound-HB} into \eqref{eqn:pf-DQHB-triangle} and using $(t-1)^{\alpha} < t^\alpha$ leads to \eqref{eqn:conv-DQ-HB}.
\end{proof}

\section{Converses}
\subsection{Converse for unquantized GD}
\label{apx:cGD}
As explained in the proof sketch, the lower bound in \eqref{eq:converseGD} is a combination of an unquantized GD converse and a volume-division converse. The former relies on the following converse result, obtained by constructing a least-square problem instance that satisfies Nesterov's upper bound in \lemref{lma:conv-GD} with equality.  

\begin{lemma}[Optimality of $\sigma_{\mathrm{GD}}$]
\label{lma:GD_opt}
Consider GD \eqref{eq:GD} with starting point $\bm{x}_0$ and any constant stepsize $\eta$. Then, there exists a problem instance $\msf{f}\in\set{F}_n$ such that the distance to the optimizer at each iteration $t$ of GD satisfies
\begin{equation}
\label{eq:GD_opt}
\lnorm \bm{x}_{t+1}-\bm{x}_{\msf{f}}^*\rnorm \geq \sigma_{\mathrm{GD}}\lnorm \bm{x}_t-\bm{x}_{\msf{f}}^*\rnorm,
\end{equation}
with equality if $\eta$  is the optimal stepsize given in \eqref{eq:stepsize_gd}.
\end{lemma}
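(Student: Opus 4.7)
The plan is to construct an explicit quadratic instance that forces every GD iterate to contract by exactly the factor $\max\{|1-\eta\mu|,|1-\eta L|\}$, and then to observe that this factor is always at least $\sigma_{\mathrm{GD}}$, with equality precisely at $\eta = \eta_{\mathrm{GD}}$. This matches Nesterov's upper bound in \lemref{lma:conv-GD} and delivers the desired per-iteration lower bound.

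Given $\bm{x}_0$ and the prescribed stepsize $\eta$, I first pick a worst eigenvalue $\lambda^\star \in \argmax_{\lambda\in\{\mu,L\}}|1-\eta\lambda|$, any unit vector $\bm{v}\in\R^n$, and place the minimizer at $\bm{x}^\star = \bm{x}_0 - D\bm{v}$ so that \eqref{eq:iter_opt_range} is tight. I then define
\begin{equation}
\msf{f}(\bm{x}) = \tfrac{\lambda^\star}{2}\llan \bm{x}-\bm{x}^\star, \bm{v}\rran^2 + \tfrac{\mu}{2}\lnorm (\IM - \bm{v}\bm{v}^\T)(\bm{x}-\bm{x}^\star)\rnorm^2,
\end{equation}
a quadratic whose Hessian has eigenvalue $\lambda^\star\in\{\mu,L\}$ along $\bm{v}$ and eigenvalue $\mu$ on $\bm{v}^\perp$. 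Since all eigenvalues lie in $[\mu,L]$, we have $\msf{f}\in\set{F}_n$. (When $n=1$ the second term simply vanishes.)

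Because the initial displacement $\bm{x}_0-\bm{x}^\star = D\bm{v}$ is aligned with the $\lambda^\star$-eigenvector and GD preserves each eigen-subspace of a quadratic's Hessian, an easy induction on $t$ yields
\begin{equation}
\bm{x}_{t+1} - \bm{x}^\star = (1-\eta\lambda^\star)(\bm{x}_t - \bm{x}^\star),
\end{equation}
so $\lnorm \bm{x}_{t+1}-\bm{x}^\star\rnorm = |1-\eta\lambda^\star|\,\lnorm \bm{x}_t-\bm{x}^\star\rnorm$. To conclude \eqref{eq:GD_opt} it remains to verify the one-dimensional inequality $\max\{|1-\eta\mu|,|1-\eta L|\} \geq \sigma_{\mathrm{GD}}$. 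Both $|1-\eta\mu|$ and $|1-\eta L|$ are convex V-shaped functions of $\eta$; solving $1-\eta\mu = \eta L - 1$ identifies their unique crossover at $\eta = 2/(L+\mu) = \eta_{\mathrm{GD}}$, where both equal $(L-\mu)/(L+\mu) = \sigma_{\mathrm{GD}}$. A short case analysis shows that away from this crossover the max strictly exceeds $\sigma_{\mathrm{GD}}$, so equality holds if and only if $\eta = \eta_{\mathrm{GD}}$.

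The main delicacy is choosing $\lambda^\star$, and hence the direction of the initial displacement, based on whether $\eta$ is below or above $\eta_{\mathrm{GD}}$: for $\eta < \eta_{\mathrm{GD}}$ the slow direction is $\lambda^\star = \mu$, while for $\eta > \eta_{\mathrm{GD}}$ it is $\lambda^\star = L$. With the eigenvector correctly aligned, the remainder is routine linear algebra and a one-variable analysis. A minor technicality is the degenerate case $\mu = L$, where $\sigma_{\mathrm{GD}} = 0$ and the claim is trivial.
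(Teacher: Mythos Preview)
Your proof is correct and follows essentially the same approach as the paper: construct a quadratic instance, align the initial displacement $\bm{x}_0-\bm{x}^\star$ with the eigenvector achieving $\max\{|1-\eta\mu|,|1-\eta L|\}$, and observe that this contraction factor is minimized at $\eta=\eta_{\mathrm{GD}}$ with value $\sigma_{\mathrm{GD}}$. The paper packages the same quadratic as a least-squares objective $\tfrac12\|\bm{y}-\mbf{A}\bm{x}\|^2$ with $\sigma_1(\mbf{A})=\sqrt{L}$, $\sigma_n(\mbf{A})=\sqrt{\mu}$ and the appropriate right singular vector aligned with $\bm{x}_0-\bm{x}_{\msf f}^*$, but this is only a cosmetic difference from your direct eigendecomposition.
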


\begin{proof}[Proof of {Lemma~\ref{lma:GD_opt}}]
We first find an $\bm{x}_{\msf{f}}^*\in\mcal{B}(D)$ such that
\begin{equation}
\label{eq:iter_init_far}
\lnorm \bm{x}_0-\bm{x}_{\msf{f}}^* \rnorm \geq D
\end{equation}
and then construct a least-squares problem instance $\msf{f}\in\set{F}_n$ \eqref{eq:obj_LS} that admits $\bm{x}_{\msf{f}}^*$ as a unique minimizer and satisfies \eqref{eq:GD_opt}. Note that $\msf{f}$  is
\begin{equation}
\label{eq:smooth_strcvx_LS}
\sigma_1^2(\mbf{A}) \text{-smooth} \Mand \sigma_n^2(\mbf{A}) \text{-strongly convex}
\end{equation}
where we denote by $\sigma_i(\mbf{A})$ the $i$-th largest singular value of matrix $\mbf{A}\in\mbb{R}^{m\by n}$. The gradient of $\msf{f}$ at iteration $t$ is
\begin{equation}
\label{eq:grad_LS}
\nabla\msf{f}(\bm{x}_t) = \mbf{A}^\T\lp \mbf{A}\bm{x}_t-\bm{y}\rp.
\end{equation}
The first-order optimality condition $\nabla\msf{f}(\bm{x}_{\mathsf f}^*) = \bm{0}$
implies
\begin{equation}
\label{eq:grad_opt_LS}
\mbf{A}^\T\bm{y} = \mbf{A}^\T\mbf{A}\bm{x}_{\msf{f}}^*.
\end{equation}
To each $\bm{x}_{\msf{f}}^* \in\mcal{B}(D)$ that satisfies \eqref{eq:iter_init_far}, there corresponds a $\bm{y}\in\mbb{R}^m$ such that \eqref{eq:grad_opt_LS} holds. This is because $m\geq n$, i.e. we have more degrees of freedom than the problem dimension when selecting the vector $\bm{y}$. Since we can always select an $\mbf{A}$ with $\sigma_1(\mbf{A}) = \sqrt L$ and $\sigma_n(\mbf{A}) = \sqrt \mu$ and a $\bm{y}$ to ensure \eqref{eq:grad_opt_LS}, we have $\msf f \in \set F_n$. It remains to show how to set the right singular vectors of $\mbf{A}$ to ensure \eqref{eq:GD_opt}. 

Plugging \eqref{eq:grad_LS} into \eqref{eq:GD} yields
\begin{equation}
\label{eq:recur_LS}
\bm{x}_{t+1} = \bm{x}_t - \eta\mbf{A}^\T\mbf{A}(\bm{x}_t-\bm{x}_{\msf{f}}^*).
\end{equation}
Subtracting $\bm{x}_{\msf{f}}^*$ from both sides of \eqref{eq:recur_LS}, we conclude that the distance to the optimizer $\bm{x}_{\msf{f}}^*$ satisfies
\begin{equation}
\label{eq:conv_LS}
\lnorm \bm{x}_{t+1}-\bm{x}_{\msf{f}}^*\rnorm \leq \sigma_1\lp \mbf{I}-\eta\mbf{A}^\T\mbf{A}\rp \lnorm \bm{x}_t-\bm{x}_{\msf{f}}^*\rnorm,
\end{equation}
where equality is achieved if $\bm{x}_t-\bm{x}_{\msf{f}}^*$ points in the direction corresponding to the largest singular vector of the matrix $\mbf{I}-\eta\mbf{A}^\T\mbf{A}$. 
Since
\begin{align}
 \sigma_1\lp \mbf{I}-\eta \mbf{A}^\T\mbf{A}\rp 
= \max\lbp \lba 1-\eta \sigma_n^2(\mbf{A})\rba,\, \lba 1-\eta \sigma_1^2(\mbf{A})\rba\rbp \label{eq:pf_GD_opt_mono}, 
\end{align}
we designate the unit vector
\begin{equation}
\bm{v}_1 \eqDef \frac{\bm{x}_0-\bm{x}_{\msf{f}}^* }{\lnorm \bm{x}_0-\bm{x}_{\msf{f}}^* \rnorm}
\end{equation}
as the right singular vector of $\mbf A$ corresponding to either $\sigma_1(\mbf{A})$ if $\lba 1-\eta \sigma_1^2(\mbf{A})\rba$ achieves the maximum in \eqref{eq:pf_GD_opt_mono} or $\sigma_n(\mbf{A})$ otherwise. This is determined solely by the stepsize $\eta$;  the optimal stepsize \eqref{eq:stepsize_gd} ensures that $\lba 1-\eta \sigma_1^2(\mbf{A})\rba = \lba 1-\eta \sigma_n^2(\mbf{A})\rba = \sigma_{\mathrm{GD}}$. 
To complete the construction of $\mbf A$, we complement $\bm{v}_1$ with $n-1$ orthonormal vectors to form an orthonormal basis $\{\bm{v}_i\}_{i=1}^n$ of $\mbb{R}^n$. Then, 
\begin{equation}
\label{eq:pf_GD_opt_eq41}
 \bm{x}_1-\bm{x}_{\msf{f}}^*  = \sigma_1 \lp \mbf{I}-\eta\mbf{A}^\T\mbf{A}\rp \left( \bm{x}_0-\bm{x}_{\msf{f}}^*\right), 
\end{equation}
and using \eqref{eq:recur_LS} it is easy to show by induction that 
\begin{equation}
\label{eq:pf_GD_opt_eq41}
 \bm{x}_{t+1}-\bm{x}_{\msf{f}}^*  = \sigma_1 \lp \mbf{I}-\eta\mbf{A}^\T\mbf{A}\rp \left( \bm{x}_t-\bm{x}_{\msf{f}}^*\right), 
\end{equation}
which implies that \eqref{eq:conv_LS} holds with equality $\forall t = 0, 1, \ldots$. 
\end{proof}

\begin{remark}
While variants of Lemma~\ref{lma:GD_opt} are known in the literature (e.g. \cite[Example~1.3]{Klerk-17}), they are not directly applicable because of our need to satisfy \eqref{eq:iter_opt_range}. Furthermore, Lemma~\ref{lma:GD_opt} constructs a worst-case problem instance for any initial point and constant stepsize chosen by the GD, whereas the worst-case problem instance constructed in \cite[Example~1.3]{Klerk-17} is tailored to a particular starting point $\bm{x}_0\neq \bm{0}$ and the optimal $\eta$ \eqref{eq:stepsize_gd}.
\end{remark}

\subsection{Proof of \thmref{thm:converseGD}}
\label{sec:DQ-GDc}
On one hand, we have 
\begin{align}
 \inf_{\textrm A \in \mathcal A_{\mathrm{GD}}}\sigma_{\textrm A}(n, R) &\geq  \inf_{\textrm A \in \mathcal A_{\mathrm{GD}}}\sigma_{\textrm A}(n, \infty) \label{eq:nonincr}\\
 &\geq \sigma_{\mathrm{GD}},
 \label{eq:converseGDunq}
 \end{align}
 where \eqref{eq:nonincr} holds because the left side of \eqref{eq:nonincr} is non-increasing in the data rate $R$ by definition \eqref{eq:lcr}, and  \eqref{eq:converseGDunq} is by \lemref{lma:GD_opt} since the infinite-rate algorithm in $\mathcal A_{\mathrm{GD}}$ is the one incurring no quantization error at each iteration, i.e., the GD itself. 
 
 On the other hand, to show 
 \begin{align}
 \inf_{\textrm A \in \mathcal A_{\mathrm{GD}}}\sigma_{\textrm A}(n, R) &\geq  2^{-R}, \label{eq:converseGDq}
 \end{align}
we fix an algorithm $\msf{A} \in \mathcal A_{\mathrm{GD}}$ operating at $R^\prime \leq R$ bits per dimension. The set of possible outputs of $\msf{A}$ after $T$ iterations
\begin{equation}
\label{eqn:codebook}
\begin{multlined}
\mcal{S}_{\msf{A}} \eqDef \big\{ \hat{\bm{x}}_t \in\mbb{R}^n \mid \hat{\bm{x}}_t \text{ is the output} \\
\text{of } \msf{A} \text{ after } T \text{ iterations} \big\}
\end{multlined}
\end{equation}
has cardinality 
\begin{equation}
\lba \mcal{S}_{\msf{A}}\rba = 2^{nR^\prime T}.
\end{equation}
Given $\mcal{S}_{\msf{A}}$, consider the minimum-distance quantizer $\msf{q}_{\scriptscriptstyle\msf{A}}$
\begin{equation}
\msf{q}_{\scriptscriptstyle\msf{A}}(\bm{x}) = \argmin_{\bm{\hat{x}}\in\mcal{S}_{\msf{A}}}\lnorm \hat{\bm{x}}-\bm{x}\rnorm
\end{equation}
with dynamic range $D$ and covering radius $\msf{d}\lp \msf{q}_{\scriptscriptstyle\msf{A}}\rp$ \eqref{eq:covradius}. In other words, $2^{nR^\prime T}$ Euclidean balls of radius $\msf{d}\lp \msf{q}_{\scriptscriptstyle\msf{A}}\rp$ with centers in $\mcal{S}_{\msf{A}}$ cover $\set{B}(D)$; therefore 
\begin{align}
\frac{D}{\msf{d}\lp \msf{q}_{\scriptscriptstyle\msf{A}}\rp} \leq 2^{R^\prime T} \label{eq:pf_lower_volume}.
\end{align}
(This classical volume-division argument also shows that $\rho \lp \msf{q_\msf{A}}\rp \geq 1$ \eqref{eq:def_d}).

Since one can construct an $\msf{f}\in\set{F}_n$ such that
\begin{equation}
\lnorm \msf{q}_{\scriptscriptstyle\msf{A}}(\bm{x}_{\msf{f}}^*)-\bm{x}_{\msf{f}}^*\rnorm = \msf{d}\lp \msf{q}_{\scriptscriptstyle\msf{A}}\rp,
\end{equation}
\eqref{eq:converseGDq} follows by rearranging \eqref{eq:pf_lower_volume} and taking the limit $T \to \infty$.

\subsection{Proof of \thmref{thm:converseGM}}
\label{sec:DQ-GMc}
The following lemma shows that gradient iterative methods cannot achieve an arbitrarily low contraction factor.

\begin{lemma}[{\cite[Th.~2.1.13]{Nesterov-14}}]
\label{lma:lower-Nesterov}
For any gradient method \eqref{eqn:descent-GM}, there exists an $L$-smooth and $\mu$-strongly convex function $\fcn{f}\colon \mathbb L_2 \to \R$ such that, $\forall t = 0, 1, \ldots$,
\begin{equation}
\norm{\vec{x}_t-\vec{x}_{\fcn{f}}^*} \geq \sigma_{\mathrm{HB}}^t\norm{\vec{x}_0-\vec{x}_{\fcn{f}}^*}.
\end{equation}
\end{lemma}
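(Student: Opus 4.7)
The plan is to construct a single ``hard'' quadratic function on $\mathbb L_2$ whose sparsity pattern forces any gradient method obeying \eqref{eqn:descent-GM} to make at most one new nonzero coordinate per iteration. By translation, I may assume $\vec x_0 = \vec 0$ (replace $\mathsf f(\vec x)$ by $\mathsf f(\vec x + \vec x_0)$; this preserves both $L$-smoothness and $\mu$-strong convexity). Let $q \triangleq \sigma_{\mathrm{HB}} = \frac{\sqrt{\kappa}-1}{\sqrt{\kappa}+1} \in (0,1)$, and consider the quadratic
\begin{equation}
\mathsf f(\vec x) \triangleq \tfrac{L-\mu}{8}\Bigl[(\vec x(1)-1)^2 + \sum_{i=1}^{\infty}\bigl(\vec x(i)-\vec x(i+1)\bigr)^2\Bigr] + \tfrac{\mu}{2}\|\vec x\|^2.
\end{equation}
Its Hessian $\grad^2 \mathsf f = \tfrac{L-\mu}{4}\mathbf T + \mu \mathbf I$, where $\mathbf T$ is the infinite tridiagonal operator with $2$ on the diagonal and $-1$ on the off-diagonals; since $\mathbf T$ has spectrum in $[0,4]$, I get $\mu \mathbf I \preceq \grad^2 \mathsf f \preceq L \mathbf I$, so $\mathsf f \in \mathcal F_\infty$.

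The crucial structural observation is the \emph{sparsity propagation} property: the gradient $\grad \mathsf f(\vec x)$ at any $\vec x$ whose support lies in $\{1,\dots,k\}$ has support contained in $\{1,\dots,k+1\}$ (tridiagonal coupling), and $\grad \mathsf f(\vec 0)$ has support $\{1\}$. By induction on $t$ using \eqref{eqn:descent-GM}, every iterate $\vec x_t$ then has support in the first $t$ coordinates. Next I solve the first-order optimality condition $\grad \mathsf f(\vec x_{\mathsf f}^*) = \vec 0$ coordinate-wise: it yields the linear recurrence $\vec x_{\mathsf f}^*(i+1) - 2\alpha\, \vec x_{\mathsf f}^*(i) + \vec x_{\mathsf f}^*(i-1) = 0$ for $i \geq 2$ with boundary data determined by the affine term, where $\alpha = 1 + \tfrac{2\mu}{L-\mu}$. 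The characteristic roots are $q$ and $1/q$; square-summability forces the coefficient of the $1/q$ root to be zero, so $\vec x_{\mathsf f}^*(i) = q^i$.

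Given this closed form, the lower bound follows from the sparsity restriction: since $\vec x_t(i) = 0$ for all $i > t$,
\begin{equation}
\|\vec x_t - \vec x_{\mathsf f}^*\|^2 \geq \sum_{i > t}\bigl(\vec x_{\mathsf f}^*(i)\bigr)^2 = \sum_{i>t} q^{2i} = \frac{q^{2(t+1)}}{1-q^2},
\end{equation}
while $\|\vec x_0 - \vec x_{\mathsf f}^*\|^2 = \sum_{i\geq 1} q^{2i} = \frac{q^2}{1-q^2}$. Taking the ratio gives $\|\vec x_t - \vec x_{\mathsf f}^*\|^2 \geq q^{2t}\|\vec x_0 - \vec x_{\mathsf f}^*\|^2$, which is exactly the claimed bound after taking square roots.

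\textbf{Main obstacle.} The routine calculations (spectrum of $\mathbf T$, solving the recurrence) are standard; the only delicate point is justifying that the solution $\vec x_{\mathsf f}^* = (q,q^2,\ldots) \in \mathbb L_2$ is the \emph{unique} minimizer, which uses strong convexity, and verifying the sparsity-propagation step carefully at the boundary $i=1$ (where the affine term $-1$ appears) so that it is still true that $\grad \mathsf f(\vec 0)$ lies in $\mathrm{span}\{\vec e_1\}$ rather than leaking into higher coordinates. Choosing the affine perturbation to live only in the first coordinate, as above, handles this cleanly.
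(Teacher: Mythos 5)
Your construction is correct and is essentially the same argument the paper relies on by citing Nesterov's Theorem 2.1.13 rather than reproving it: the tridiagonal ``worst function in the world'' quadratic on $\mathbb L_2$, whose minimizer decays as $q^i$ with $q=\alpha-\sqrt{\alpha^2-1}=\sigma_{\mathrm{HB}}$, combined with the sparsity-propagation property of the span condition \eqref{eqn:descent-GM} to get $\norm{\vec{x}_t-\vec{x}_{\fcn{f}}^*}^2\geq\sum_{i>t}q^{2i}=q^{2t}\norm{\vec{x}_0-\vec{x}_{\fcn{f}}^*}^2$. One minor caveat: your parenthetical claim that $\fcn{f}\in\mathcal F_{\infty}$ also needs the bounded-minimizer condition $\norm{\vec{x}_{\fcn{f}}^*-\hat{\vec{x}}_0}\leq D$, which your instance satisfies only after a suitable rescaling; this does not affect the lemma as stated, which requires only $L$-smoothness and $\mu$-strong convexity.
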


The proof of \thmref{thm:converseGM} follows the same steps as the proof of \thmref{thm:converseGD}, with the replacement of the converse for unquantized GD (\lemref{lma:GD_opt}) by \lemref{lma:lower-Nesterov}.

\section{Convergence analysis of K-worker NQ-GD}
\label{appx:NQ-GD}

The path of NQ-GD satisfies the following recursive relation.

\begin{lemma}[NQ-GD trajectory]
\label{lma:recur_NQGD}
At each iteration $t = 0, 1, 2, \ldots$, the path of NQ-GD with stepsize \eqref{eq:stepsize_gd} satisfies
\begin{equation}
\label{eq:recur_NQGD}
\lnorm \hat{\bm{x}}_{t+1}-\bm{x}_{\msf{f}}^*\rnorm \leq \sigma_{\mathrm{GD}} \lnorm \hat{\bm{x}}_t-\bm{x}_{\msf{f}}^*\rnorm + \frac{\eta_{\mathrm{GD}}}{K}\sum_{k=1}^K\norm{\bm{e}_{t,k}},
\end{equation}
where $\bm{e}_{t,k} \triangleq \bm{q}_{t,k} - \nabla\msf{f}_k(\hat{\bm{x}}_t)$, and $\sigma_{\mathrm{GD}}$ is defined in \eqref{eq:sigmaDQ}.
\end{lemma}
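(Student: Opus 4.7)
The plan is to expand the NQ-GD update rule, add and subtract the gradient descent step on the averaged function $\mathsf f$, and then invoke the classical one-step contraction of GD together with the triangle inequality.

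First I would rewrite Line~5 of Algorithm~\ref{alg:NQGD} as $\bm{q}_{t,k} = \nabla\msf{f}_k(\hat{\bm{x}}_t) + \bm{e}_{t,k}$, which is just the definition of the quantization error $\bm{e}_{t,k}$ given in the statement. Substituting into the server update (Line~4 of Algorithm~\ref{alg:NQGD}) yields
\begin{equation}
\hat{\bm{x}}_{t+1} = \hat{\bm{x}}_t - \frac{\eta_{\mathrm{GD}}}{K}\sum_{k=1}^{K}\nabla\msf{f}_k(\hat{\bm{x}}_t) - \frac{\eta_{\mathrm{GD}}}{K}\sum_{k=1}^{K}\bm{e}_{t,k}.
\end{equation}
By the definition of $\msf{f}$ in \eqref{eq:obj}, the first sum equals $\eta_{\mathrm{GD}}\nabla\msf{f}(\hat{\bm{x}}_t)$, so that
\begin{equation}
\hat{\bm{x}}_{t+1} - \bm{x}_{\msf{f}}^* = \bigl[\hat{\bm{x}}_t - \eta_{\mathrm{GD}}\nabla\msf{f}(\hat{\bm{x}}_t) - \bm{x}_{\msf{f}}^*\bigr] - \frac{\eta_{\mathrm{GD}}}{K}\sum_{k=1}^{K}\bm{e}_{t,k}.
\end{equation}
Applying the triangle inequality gives
\begin{equation}
\lnorm\hat{\bm{x}}_{t+1}-\bm{x}_{\msf{f}}^*\rnorm \leq \lnorm\hat{\bm{x}}_t - \eta_{\mathrm{GD}}\nabla\msf{f}(\hat{\bm{x}}_t) - \bm{x}_{\msf{f}}^*\rnorm + \frac{\eta_{\mathrm{GD}}}{K}\sum_{k=1}^{K}\lnorm\bm{e}_{t,k}\rnorm.
\end{equation}

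Next I would bound the first term on the right by $\sigma_{\mathrm{GD}}\lnorm\hat{\bm{x}}_t-\bm{x}_{\msf{f}}^*\rnorm$. Since $\msf{f}\in\set{G}_n$ is $L$-smooth and $\mu$-strongly convex (by \eqref{eq:smooth_strcvx_ave}), the map $\bm{x}\mapsto\bm{x}-\eta_{\mathrm{GD}}\nabla\msf{f}(\bm{x})$ is a $\sigma_{\mathrm{GD}}$-contraction with fixed point $\bm{x}_{\msf{f}}^*$; this is precisely what the single-step version of Lemma~\ref{lma:conv-GD} asserts (apply \eqref{eq:conv_GD} with $t=1$ and the iterate-initialization $\bm{x}_0 \leftarrow \hat{\bm{x}}_t$). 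Combining this one-step contraction with the triangle-inequality bound above yields \eqref{eq:recur_NQGD}.

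There is no serious obstacle here; the lemma is essentially bookkeeping around the standard GD contraction once the quantization noise has been factored out additively. The only point worth care is that the result requires nothing more than (i)-(ii) of $\set{G}_n$ (neither the interpolation assumption~\eqref{eqn:condi-str-grow} nor the bound \eqref{eq:iter_opt_range} on the optimizer is needed at this stage), so the proof proceeds cleanly for any iteration~$t$ and any realized quantization errors~$\bm{e}_{t,k}$.
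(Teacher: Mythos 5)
Your proposal is correct and follows essentially the same route as the paper: expand the server update with $\bm{q}_{t,k}=\nabla\msf{f}_k(\hat{\bm{x}}_t)+\bm{e}_{t,k}$, use \eqref{eq:obj} to collapse the gradient sum into $\eta_{\mathrm{GD}}\nabla\msf{f}(\hat{\bm{x}}_t)$, apply the triangle inequality, and bound the remaining term by the one-step GD contraction. The paper phrases that last step via the coercivity inequality \cite[Th.~2.1.12]{Nesterov-14} (the ingredient behind Lemma~\ref{lma:conv-GD}), which is the same content as your invocation of Lemma~\ref{lma:conv-GD} with $t=1$ started at $\hat{\bm{x}}_t$, and your remark that only smoothness and strong convexity of $\msf{f}$ are needed here is also accurate.
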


\begin{proof}[Proof of {Lemma~\ref{lma:recur_NQGD}}]
The update rule (Line~5) and the quantizer inputs (Line~4) together imply
\begin{align}
\hat{\bm{x}}_{t+1} &= \hat{\bm{x}}_t - \frac{\eta_{\mathrm{GD}}}{K}\sum_{k=1}^K\lp \nabla\msf{f}_k(\hat{\bm{x}}_t)+\bm{e}_{t,k}\rp \\
 &= \hat{\bm{x}}_t - \eta_{\mathrm{GD}}\nabla\msf{f}(\hat{\bm{x}}_t) - \frac{\eta_{\mathrm{GD}}}{K}\sum_{k=1}^K\bm{e}_{t,k} \label{eq:pf_recur_NQGD_obj}
\end{align}
where \eqref{eq:pf_recur_NQGD_obj} is due to \eqref{eq:obj}. Triangle inequality now implies
\begin{align}
 &\lnorm \hat{\bm{x}}_{t+1}-\bm{x}_{\msf{f}}^*\rnorm \\
\leq\> &\norm{\hat{\bm{x}}_t-\eta_{\mathrm{GD}}\nabla\msf{f}(\hat{\bm{x}}_t)-\bm{x}_{\msf{f}}^*} + \frac{\eta_{\mathrm{GD}}}{K}\sum_{k=1}^K\norm{\bm{e}_{t,k}}. \label{eqn:pf-recur-NQGD-coer}
\end{align}
Applying the coercive property of smooth and strongly convex functions \cite[Th.~2.1.12]{Nesterov-14} in the same manner as in \cite[Proof of Th.~2.1.5]{Nesterov-14} to further upper-bound the first term in \eqref{eqn:pf-recur-NQGD-coer} gives \eqref{eq:recur_NQGD}.
\end{proof}

Denote for brevity 
\begin{align}
\sigma &\triangleq \sigma_{\mathrm{GD}} + C \label{eq:nq}, \\
C &\triangleq \frac{\eta_{\mathrm{GD}} \rho_n}{K}\sum_{k=1}^K \frac{L_k}{2^{R_k}}. \label{eq:C}
\end{align}
Minimizing \eqref{eq:nq} under the sum rate constrant \eqref{eq:rate_condi} is a convex optimization problem whose solution is given by \eqref{eq:rate_alloc_non_unif} and whose optimal value is given by the right side of \eqref{eq:sigmaNQ-GDK}. 
The following nonasymptotic result, which with the optimal rate allocation immediately yields \thmref{thm:NQ-GD}, uses an inductive argument to simultaneously bound both terms in the right-hand side of \eqref{eq:recur_NQGD}. 
\begin{theorem}[Convergence of NQ-GD]
\label{thm:naive}
In the setting of \thmref{thm:NQ-GD},
\begin{equation}
\label{eq:conv_naive}
\lnorm \hat{\bm{x}}_t-\bm{x}_\msf{f}^*\rnorm \leq \sigma^t D, 
\end{equation}
where $\sigma$ is defined in \eqref{eq:nq}.
\end{theorem}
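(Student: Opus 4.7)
The plan is to prove Theorem~\ref{thm:naive} by induction on $t$, simultaneously establishing (a) the distance bound $\|\hat{\bm{x}}_t-\bm{x}_{\msf{f}}^*\| \leq \sigma^t D$ and (b) the invariant that every worker's quantizer input $\nabla\msf{f}_k(\hat{\bm{x}}_t)$ lies inside the dynamic range $\mcal{B}(r_{t,k})$ so that \eqref{eq:e_d} can be invoked to bound the quantization error by $\|\bm{e}_{t,k}\| \leq r_{t,k}\rho_n 2^{-R_k}$. This coupled inductive strategy parallels what is done in the proofs of \thmref{thm:DQ-GD} and \thmref{thm:DQ-AGD}, except that here the role of the ``unquantized trajectory'' is replaced by the pointwise shrinking norm $\sigma^t D$.

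For the base case $t=0$, (a) is just assumption \eqref{eq:iter_opt_range}, and (b) follows from (a) by a single application of $L_k$-smoothness together with the interpolation condition \eqref{eqn:condi-str-grow}:
\begin{align}
\|\nabla\msf{f}_k(\hat{\bm{x}}_0)\|
&= \|\nabla\msf{f}_k(\hat{\bm{x}}_0)-\nabla\msf{f}_k(\vec{x}_{\fcn{f}_k}^*)\| \notag\\
&\leq L_k\|\hat{\bm{x}}_0-\vec{x}_{\fcn{f}}^*\| \leq L_k D = r_{0,k}. \notag
\end{align}
For the inductive step, assuming (a) at step $t$, the same smoothness-plus-interpolation argument gives $\|\nabla\msf{f}_k(\hat{\bm{x}}_t)\| \leq L_k \sigma^t D = r_{t,k}$, establishing (b). Plugging the resulting bound $\|\bm{e}_{t,k}\|\leq L_k \sigma^t D\,\rho_n 2^{-R_k}$ into \lemref{lma:recur_NQGD} yields
\begin{align}
\|\hat{\bm{x}}_{t+1}-\bm{x}_{\msf{f}}^*\|
&\leq \sigma_{\mathrm{GD}}\sigma^t D + \frac{\eta_{\mathrm{GD}}}{K}\sum_{k=1}^K L_k \sigma^t D\,\rho_n 2^{-R_k} \notag\\
&= \sigma^t D\!\left(\sigma_{\mathrm{GD}} + \frac{\eta_{\mathrm{GD}}\rho_n}{K}\sum_{k=1}^K \frac{L_k}{2^{R_k}}\right) \notag\\
&= \sigma^{t+1} D, \notag
\end{align}
which closes the induction. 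The announced \thmref{thm:NQ-GD} is then obtained by noting that the choice of rate allocation \eqref{eq:rate_alloc_non_unif} is precisely the waterfilling minimizer of $\sum_k L_k 2^{-R_k}$ under the sum-rate constraint \eqref{eq:rate_condi}, whose optimal value is $\sum_{k=1}^K \min\{\nu,L_k\}$, matching \eqref{eq:sigmaNQ-GDK}.

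The only mildly delicate point is the use of the interpolation assumption \eqref{eqn:condi-str-grow}: without it, $\|\nabla\msf{f}_k(\hat{\bm{x}}_t)\|$ need not vanish as $\hat{\bm{x}}_t \to \vec{x}_{\fcn{f}}^*$, so the dynamic range $r_{t,k}\propto \sigma^t$ could not be made to shrink geometrically with $t$. Everything else is routine: \lemref{lma:recur_NQGD} already compresses the stochastic effect of naive quantization into the additive error term $\frac{\eta_{\mathrm{GD}}}{K}\sum_k \|\bm{e}_{t,k}\|$, and the choice of $r_{t,k}$ is exactly calibrated so that this error, after division by $\sigma^t D$, sums to the geometric factor $\sigma - \sigma_{\mathrm{GD}}$ needed to close the recursion.
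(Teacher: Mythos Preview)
Your proof is correct and follows essentially the same approach as the paper: induction on $t$, using $L_k$-smoothness together with the interpolation condition \eqref{eqn:condi-str-grow} to bound $\|\nabla\msf{f}_k(\hat{\bm{x}}_t)\|\le L_k\sigma^tD=r_{t,k}$, invoking \eqref{eq:e_d} to bound the quantization errors, and then plugging into \lemref{lma:recur_NQGD} to close the recursion. The only cosmetic difference is that you package the dynamic-range invariant as a separate item (b) in the induction hypothesis, whereas the paper derives it inside the inductive step; both are equivalent.
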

\begin{proof}
We prove \eqref{eq:conv_naive} by induction. 

\begin{itemize}
\item Base case: for $t=0$, \eqref{eq:conv_naive} holds by \eqref{eq:iter_opt_range}.
\item Inductive step: Suppose \eqref{eq:conv_naive} holds for iteration $t$. Then, by $L_k$-smoothness of $\fcn f_k$, \eqref{eqn:condi-str-grow} and the inductive hypothesis, 
\begin{align}
\norm{\nabla\msf{f}_k(\hat{\bm{x}}_t)} &\leq L_k\norm{\hat{\bm{x}}_t-\bm{x}_{\msf{f}}^*} \label{eq:sharedmin} \\
 &\leq \sigma^t L_k D,
\end{align}
which due to the choice of dynamic ranges \eqref{eq:range_NQGD} and  \eqref{eq:e_d} leads to 
\begin{equation}
\label{eq:pf_NQGD_max_err}
\lnorm \bm{e}_{t,k}\rnorm \leq \frac{\rho_n}{2^{R_k}}\sigma^t L_k D.
\end{equation}
Applying \eqref{eq:pf_NQGD_max_err} and the inductive hypothesis to \eqref{eq:recur_NQGD} yields
\begin{align}
 \lnorm \hat{\bm{x}}_{t+1}-\bm{x}_{\msf{f}}^*\rnorm &\leq  \sigma^t \left[ \sigma_{\mathrm{GD}} + C  \right] D, 
\end{align}
which is exactly \eqref{eq:conv_naive} for $t+1$ if the optimal rate allocation is employed. 
\end{itemize}
\end{proof}

\hl{Without assumption (iv) \eqref{eqn:condi-str-grow} that the summands $\mathsf f_k$ share the minimizer, NQ-GD converges only to a neighborhood of  $\vec{x}_{\fcn{f}}^*$, albeit exponentially fast. The radius of this neighborhood vanishes as the data rate $R\to\infty$:

\begin{theorem} In the setting of \thmref{thm:NQ-GD} but without assumption (iv) \eqref{eqn:condi-str-grow} on the objective function, setting the dynamic ranges to $r_{t,k}^\prime$, where 
\begin{align}
\label{eq:range_NQGDprime}
r_{t,k}^\prime &= r_{t, k} + 2 L_k D\left(C \sum_{\tau=0}^{t-1}\sigma^{\tau}   +1 \right),
\end{align} 
$r_{t. k}$ is defined in \eqref{eq:range_NQGD}, and $C$ is defined in \eqref{eq:C}, NQ-GD achieves
\begin{equation}
\label{eqn:conv-NQ-GD}
\begin{multlined}
\norm{\hat{\vec{x}}_t-\vec{x}_{\fcn{f}}^*} \leq \sigma^tD \\
+ 2 D C \sum_{\tau=0}^{t-1}\sigma^{\tau},
\end{multlined}
\end{equation}
where $\sigma$ is defined in \eqref{eq:nq}.
\label{thm:NQ-GDprime}
\end{theorem}

\begin{proof}
 We follow the reasoning in the proof of \thmref{thm:naive} until \eqref{eq:sharedmin}, which we replace by
\begin{align}
\norm{\nabla\msf{f}_k(\hat{\bm{x}}_t)} &\leq L_k\norm{\hat{\bm{x}}_t-\bm{x}_{\msf{f}_k}^*}\\
 &\leq L_k\pr{\norm{\hat{\vec{x}}_t-\vec{x}_{\fcn{f}}^*}+\norm{\vec{x}_{\fcn{f}_k}^*-\vec{x}_{\fcn{f}}^*}} \\
 &\leq L_k\pr{\norm{\hat{\vec{x}}_t-\vec{x}_{\fcn{f}}^*}+2D} \label{eqn:3rd-inequ} \\
 &\leq L_k \left( \sigma^tD 
+ 2 D C \sum_{\tau=0}^{t-1}\sigma^{\tau} + 2D \right)\label{eq:indhypoprime}, 
\end{align}
where \eqref{eq:indhypoprime} applies inductive hypothesis \eqref{eqn:conv-NQ-GD}. Due to \eqref{eq:indhypoprime}, the choice of dynamic ranges \eqref{eq:range_NQGDprime} ensures that the quantizer input stays within its dynamic range, which limits the quantization error to $\rho_n 2^{-R_k}$ times the right-hand side of \eqref{eq:indhypoprime} (recall \eqref{eq:e_d}). Plugging this bound on quantization error in \eqref{eq:recur_NQGD} and applying inductive hypothesis \eqref{eqn:conv-NQ-GD} again establishes \eqref{eqn:conv-NQ-GD} for $t + 1$. 

\end{proof}
}

\hl{\section{Analysis of quantized GD with error feedback of~\cite{Seide-14,Stich-18,Karimireddy-19}}
\label{subsec:EFGD}

We derive the following convergence guarantee on quantized GD with the error feedback mechanism of \cite{Seide-14,Stich-18,Karimireddy-19}.

\begin{lemma}[{Trajectory of GD with error feedback of \cite{Seide-14,Stich-18,Karimireddy-19}}]
\label{lma:recur_EFGD}
Let $\msf{f}$ be an $L$-smooth and $\mu$-strongly convex function on $\mbb{R}^n$. Then, the distance to the optimizer at each iteration $t\in\mbb{N}$ of quantized gradient descent \eqref{eq:QGD} with quantizer input \eqref{eq:input_EFGD} and stepsize \eqref{eq:stepsize_gd} is bounded as
\begin{align}
\label{eq:recur_EFGD}
\lnorm \hat{\bm{x}}_{t+1}-\bm{x}^*\rnorm \leq&~ \sigma_{\GD}\lnorm \hat{\bm{x}}_t-\bm{x}^*\rnorm + \eta_{\GD}\lnorm \bm{e}_t\rnorm \notag\\
&~+ (\sigma_{\GD}+\eta_{\GD} L)\eta\lnorm \bm{e}_{t-1}\rnorm.
\end{align}
\end{lemma}
\begin{proof}
Using \eqref{eq:input_EFGD}, the quantizer output can be computed as
\begin{equation}
\bm{q}_t = \big[ \nabla\msf{f}(\hat{\bm{x}}_t)+\bm{e}_t\big] - \bm{e}_{t-1},
\label{eq:gef}
\end{equation}
and plugging \eqref{eq:gef} into \eqref{eq:QGD} gives 
\begin{align}
\hat{\bm{x}}_{t+1} &= \hat{\bm{x}}_t - \eta\nabla\msf{f}(\hat{\bm{x}}_t) - \eta\bm{e}_t + \eta\bm{e}_{t-1}. \label{eq:recur_shift}
\end{align}
Denoting a shifted trajectory by
\begin{equation}
\label{eq:iter_shift}
\tilde{\bm{x}}_t \eqDef \hat{\bm{x}}_t + \eta\bm{e}_{t-1},
\end{equation}
we rewrite \eqref{eq:recur_shift} as
\begin{equation}
\tilde{\bm{x}}_{t+1} = \tilde{\bm{x}}_t - \eta\nabla\msf{f}(\tilde{\bm{x}}_t-\eta\bm{e}_{t-1}),
\end{equation}
which via triangle inequality implies
\begin{align}
\lnorm \tilde{\bm{x}}_{t+1}-\bm{x}^*\rnorm \leq\> &\lnorm \tilde{\bm{x}}_t-\bm{x}^*-\eta\nabla\msf{f}(\tilde{\bm{x}}_t)\rnorm \\
 &+ \eta\lnorm \nabla\msf{f}(\tilde{\bm{x}}_t)-\nabla\msf{f}(\tilde{\bm{x}}_t-\eta\bm{e}_{t-1})\rnorm \\
\leq\> &\sigma_{\GD}\lnorm \tilde{\bm{x}}_t-\bm{x}^*\rnorm + \eta^2L\lnorm \bm{e}_{t-1}\rnorm, \label{eq:pf_recur_EFGD_triangle}
\end{align}
where the first term in the right side of \eqref{eq:pf_recur_EFGD_triangle} is obtained by the convergence guarantee of GD (\lemref{lma:conv-GD}), and the second term is due to $L$-smoothness \eqref{eq:smooth}. Since \eqref{eq:iter_shift} implies
\begin{align}
\lnorm \tilde{\bm{x}}_t-\bm{x}^*\rnorm - \eta\lnorm \bm{e}_{t-1}\rnorm &\leq 
\lnorm \hat{\bm{x}}_t-\bm{x}^*\rnorm  \label{eq:triangle_a} \\
 &\leq \lnorm \tilde{\bm{x}}_t-\bm{x}^*\rnorm + \eta\lnorm \bm{e}_{t-1}\rnorm, \label{eq:triangle_b}
\end{align}
we leverage \eqref{eq:pf_recur_EFGD_triangle} to control $\lnorm \hat{\bm{x}}_{t+1}-\bm{x}^*\rnorm$ as follows:
\begin{align}
 &\lnorm \hat{\bm{x}}_{t+1}-\bm{x}^*\rnorm \notag\\
\leq\> &\sigma_{\GD}\lnorm \tilde{\bm{x}}_t-\bm{x}^*\rnorm + \eta^2L\lnorm \bm{e}_{t-1}\rnorm + \eta\lnorm \bm{e}_t\rnorm \\
\leq\> &\sigma_{\GD} \lnorm \hat{\bm{x}}_t-\bm{x}^*\rnorm + \eta\lnorm \bm{e}_t\rnorm + (\sigma_{\GD}+\eta L)\eta\lnorm \bm{e}_{t-1}\rnorm, \label{eq:pf_recur_EFGD_shift}
\end{align}
where \eqref{eq:pf_recur_EFGD_shift} is due to \eqref{eq:triangle_a}.
\end{proof}
Compared to the guarantee on NQ-GD in \lemref{lma:recur_NQGD}, that on GD with error feedback of \cite{Seide-14,Stich-18,Karimireddy-19} in \lemref{lma:recur_EFGD} has an extra error term. Thus, it is unclear whether the error feedback mechanism of \cite{Seide-14,Stich-18,Karimireddy-19} can even improve upon NQ-GD in our nonstochastic problem setting.}

% Can use something like this to put references on a page
% by themselves when using endfloat and the captionsoff option.
\ifCLASSOPTIONcaptionsoff
  \newpage
\fi

% trigger a \newpage just before the given reference
% number - used to balance the columns on the last page
% adjust value as needed - may need to be readjusted if
% the document is modified later
%\IEEEtriggeratref{8}
% The "triggered" command can be changed if desired:
%\IEEEtriggercmd{\enlargethispage{-5in}}

% references section

% can use a bibliography generated by BibTeX as a .bbl file
% BibTeX documentation can be easily obtained at:
% http://mirror.ctan.org/biblio/bibtex/contrib/doc/
% The IEEEtran BibTeX style support page is at:
% http://www.michaelshell.org/tex/ieeetran/bibtex/
\bibliographystyle{IEEEtran}
% argument is your BibTeX string definitions and bibliography database(s)
\bibliography{Ref}

\begin{IEEEbiographynophoto}
{Chung-Yi Lin} received the B.Sc. degree in electrical engineering and the M.Sc.
degree in communication engineering from the National Taiwan
University in 2015 and 2018, respectively.  He was a Ph.D. student in
electrical engineering at the California Institute of Technology between 2018 and 2020. He has been a quantitative researcher at the Kronos
Research based in Taiwan since 2021.
\end{IEEEbiographynophoto}

\begin{IEEEbiographynophoto}
   {Victoria Kostina}(S'12--M'14)
 received the bachelor's degree from Moscow Institute of Physics and Technology (MIPT) in 2004, the master's degree from University of Ottawa in 2006, and the Ph.D. degree from Princeton University in 2013.  During her studies at MIPT, she was affiliated with the Institute for Information Transmission Problems of the Russian Academy of Sciences. 
 
She is currently a Professor of electrical engineering and computing and mathematical sciences at California Institute of Technology. Her research interests include information theory, coding, control, learning, and communications. 
 She received the Natural Sciences and Engineering Research Council of Canada postgraduate scholarship during 2009--2012, Princeton Electrical Engineering Best Dissertation Award in 2013, Simons-Berkeley research fellowship in 2015 and the NSF CAREER award in 2017.  
 
 She is a Guest Editor for the IEEE Journal on Selected Areas in Information Theory 2022 special issue on Modern Compression. 
\end{IEEEbiographynophoto}

\begin{IEEEbiographynophoto}
    {Babak Hassibi}(Member, IEEE)
 was born in Tehran, Iran, in 1967. He received the B.S. degree from the University of Tehran in 1989, and the M.S. and Ph.D. degrees from Stanford University in 1993 and 1996, respectively, all in electrical engineering. 

He has been with the California Institute of Technology since January 2001, where he is currently the Mose and Lilian S. Bohn Professor of Electrical Engineering. From 2013-2016 he was the Gordon M. Binder/Amgen Professor of Electrical Engineering and from 2008-2015 he was Executive Officer of Electrical Engineering, as well as Associate Director of Information Science and Technology. From October 1996 to October 1998 he was a research associate at the Information Systems Laboratory, Stanford University, and from November 1998 to December 2000 he was a Member of the Technical Staff in the Mathematical Sciences Research Center at Bell Laboratories, Murray Hill, NJ. He has also held short-term appointments at Ricoh California Research Center, the Indian Institute of Science, and Linkoping University, Sweden. His research interests include communications and information theory, control and network science, and signal processing and machine learning. He is the coauthor of the books (both with A.H.~Sayed and T.~Kailath) {\em Indefinite Quadratic Estimation and Control: A Unified Approach to H$^2$ and H$^{\infty}$ Theories} (New York: SIAM, 1999) and {\em Linear Estimation} (Englewood Cliffs, NJ: Prentice Hall, 2000). He is a recipient of an Alborz Foundation Fellowship, the 1999 O. Hugo Schuck best paper award of the American Automatic Control Council (with H.~Hindi and S.P.~Boyd), the 2002 National Science Foundation Career Award, the 2002 Okawa Foundation Research Grant for Information and Telecommunications, the 2003 David and Lucille Packard Fellowship for Science and Engineering,  the 2003 Presidential Early Career Award for Scientists and Engineers (PECASE), and the 2009 Al-Marai Award for Innovative Research in Communications, and was a participant in the 2004 National Academy of Engineering ``Frontiers in Engineering'' program. 

He has been a Guest Editor for the IEEE Transactions on Information Theory special issue on ``space-time transmission, reception, coding and signal processing'' was an Associate Editor for Communications of the IEEE Transactions on Information Theory during 2004-2006, and is currently an Editor for the Journal ``Foundations and Trends in Information and Communication'' and for the IEEE Transactions on Network Science and Engineering. He is an IEEE Information Theory Society Distinguished Lecturer for 2016-2017 and was General Co-Chair if the 2020 IEEE International Symposium on Information Theory (ISIT 2020).
\end{IEEEbiographynophoto}

\end{document}